\documentclass{amsart}

\usepackage{amsmath,amssymb,amsfonts,bm}
\usepackage{amsthm}
\usepackage{amsrefs}
\usepackage{indentfirst}
\usepackage{mathrsfs}
\usepackage{hyperref}
\usepackage{xcolor}
\usepackage[margin=1.4in]{geometry}
\usepackage{nicefrac}
\usepackage{wrapfig}

\usepackage{url}
\usepackage{algorithm}
\usepackage{algpseudocode}
\usepackage{caption}
\usepackage{subcaption}
\usepackage{graphics}
\usepackage{tikz}
\usepackage{enumitem}
\usepackage{mathtools}
\usetikzlibrary{positioning}

\numberwithin{equation}{section}
\newtheorem{theorem}{Theorem}[section]
\newtheorem{lemma}[theorem]{Lemma}
\newtheorem{remark}[theorem]{Remark}

\newtheorem{definition}[theorem]{Definition}

\allowdisplaybreaks[4]

\newcommand{\bR}{\mathbb{R}}
\newcommand{\bZ}{\mathbb{Z}}
\newcommand{\bP}{\mathbb{P}}

\newcommand{\calC}{\mathcal{C}}
\newcommand{\calD}{\mathcal{D}}
\newcommand{\calH}{\mathcal{H}}
\newcommand{\calF}{\mathcal{F}}
\newcommand{\calG}{\mathcal{G}}

\def\mC{{\bm{C}}}

\title[On Representing Mixed-Integer Linear Programs by Graph Neural Networks]{On Representing Mixed-Integer Linear Programs by Graph Neural Networks}
\author{Ziang Chen}
\address{(Z. Chen) Department of Mathematics, Duke University, Durham, NC 27708.}
\email{ziang@math.duke.edu}
\author{Jialin Liu}
\address{(J. Liu) Decision Intelligence Lab, Damo Academy, Alibaba US, Bellevue, WA 98004.}
\email{jialin.liu@alibaba-inc.com}
\author{Xinshang Wang}
\address{(X. Wang) Decision Intelligence Lab, Damo Academy, Alibaba US, Bellevue, WA 98004.}
\email{xinshang.w@alibaba-inc.com}
\author{Jianfeng Lu}
\address{(J. Lu) Departments of Mathematics, Physics, and Chemistry, Duke University, Durham, NC 27708.}
\email{jianfeng@math.duke.edu}
\author{Wotao Yin}
\address{(W. Yin) Decision Intelligence Lab, Damo Academy, Alibaba US, Bellevue, WA 98004.}
\email{wotao.yin@alibaba-inc.com}
\date{\today}

\thanks{A major part of the work of Z. Chen was completed during his internship at Alibaba US DAMO Academy. Corresponding author: Jialin Liu, jialin.liu@alibaba-inc.com}

\begin{document}
\begin{abstract}
    While Mixed-integer linear programming (MILP) is NP-hard in general, practical MILP has received roughly 100--fold speedup in the past twenty years. Still, many classes of MILPs quickly become unsolvable as their sizes increase, motivating researchers to seek new acceleration techniques for MILPs. With deep learning, they have obtained strong empirical results, and many results were obtained by applying graph neural networks (GNNs) to making decisions in various stages of MILP solution processes. This work discovers a fundamental limitation: there exist feasible and infeasible MILPs that all GNNs will, however, treat equally, indicating GNN's lacking power to express general MILPs. Then, we show that, by restricting the MILPs to unfoldable ones or by adding random features, there exist GNNs that can reliably predict MILP feasibility, optimal objective values, and optimal solutions up to prescribed precision.  We conducted small-scale numerical experiments to validate our theoretical findings.
\end{abstract}

\maketitle
\section{Introduction}
\label{sec:intro}

Mixed-integer linear programming (MILP) is a type of optimization problems that minimize a linear objective function subject to linear constraints, where some or all variables must take integer values. MILP has a wide type of applications, such as transportation~\cite{schouwenaars2001mixed}, control~\cite{richards2005mixed}, scheduling~\cite{floudas2005mixed}, etc. Branch and Bound (B\&B)~\cite{land1960automatic}, an algorithm widely adopted in modern solvers that exactly solves general MILPs to global optimality, unfortunately, has an exponential time complexity in the worst-case sense. To make MILP more practical, researchers have to analyze the features of each instance of interest based on their domain knowledge, and use such features to adaptively warm-start B\&B or design the heuristics in B\&B.

To automate such laborious process, researchers turn attention to Machine learning (ML) techniques in recent years~\cite{bengio2021machine}. The literature has reported some encouraging findings that a proper chosen ML model is able to learn some useful knowledge of MILP from data and generalize well to some similar but unseen instances. For example, one can learn fast approximations of Strong Branching, an effective but time-consuming branching strategy usually used in B\&B~\cites{alvarez2014supervised,khalil2016learning,zarpellon2021parameterizing,lin2022learning}. 
One may also learn cutting strategies~\cites{tang2020reinforcement,berthold2022learning,huang2022learning}, node selection/pruning strategies~\cites{he2014learning,yilmaz2020learning}, or decomposition strategies~\cite{song2020general} with ML models. The role of ML models in those approaches can be summarized as: \emph{approximating useful mappings or parameterizing key strategies} in MILP solvers,
and these mappings/strategies usually take an MILP instance as input and output its key peroperties.

The graph neural network (GNN), due to its nice properties, say \emph{permutation invariance}, is considered a suitable model to represent such mappings/strategies for MILP. More specifically, permutations on variables or constraints of an MILP do not essentially change the problem itself, reliable ML models such as GNNs should satisfy such properties, otherwise the model may overfit to the variable/constraint orders in the training data.
\cite{gasse2019exact} proposed that an MILP can be encoded into a bipartite graph on which one can use a GNN to approximate Strong Branching. 
\cite{ding2020accelerating} proposed to represent MILP with a tripartite graph.
Since then, GNNs have been adopted to represent mappings/strategies for MILP, for example, approximating Strong Branching~\cites{gupta2020hybrid,nair2020solving,shen2021learning,gupta2022lookback}, approximating optimal solution~\cites{nair2020solving,khalil2022mip}, parameterizing cutting strategies~\cite{paulus2022learning}, and parameterizing branching strategies~\cites{qu2022improved,scavuzzo2022learning}.

However, theoretical foundations in this direction still remain unclear. A key problem is the ability of GNN to approximate important mappings related with MILP. 
We ask the following questions:
\begin{align}
  &\text{Is GNN able to predict whether an MILP is feasible?} \tag{Q1} \label{eq:p1}\\
  & \text{Is GNN able to approximate the optimal objective value of an MILP?} \tag{Q2}   \label{eq:p2}\\
  & \text{Is GNN able to approximate an optimal solution of an MILP?} \tag{Q3}   \label{eq:p3}
\end{align}

To answer questions (\ref{eq:p1} - \ref{eq:p3}), one needs theories of {separation power} and {representation power} of GNN. 
The \emph{\textbf{separation power}} of GNN is measured by whether it can distinguish two non-isomorphic graphs. Given two graphs $G_1,G_2$, 
we say a mapping $F$ (e.g. a GNN) has strong separation power if $F(G_1) \neq F(G_2)$ as long as $G_1,G_2$ that are not the isomorphic. In our settings, since MILPs are represented by graphs, the separation power actually indicates the ability of GNN to distinguish two different MILP instances. 
The \emph{\textbf{representation power}} of GNN refers to how well it can approximate mappings with permutation invariant properties. In our settings, we study whether GNN can map an MILP to its feasiblity, optimal objective value and an optimal solution.
The separation power and representation power of GNNs are closely related to the Weisfeiler-Lehman (WL) test \cite{weisfeiler1968reduction}, a classical algorithm to identify whether two graphs are isomorphic or not. It has been shown that GNN has the same separation power with the WL test~\cite{xu2019powerful}, and, based on this result, GNNs can universally approximate continuous graph-input mappings with separation power no stronger than the WL test~\cites{azizian2020expressive, geerts2022expressiveness}.

\paragraph{\textbf{Our contributions}} With the above tools in hand, one still cannot directly answer questions (\ref{eq:p1} - \ref{eq:p3}) since the relationship between characteristics of general MILPs and properties of graphs are not clear yet. Although there are some works studying the representation power of GNN on some graph-related optimization problems~\cites{sato2019approximation,Loukas2020What} and linear programming~\cite{LP_paper}, representing general MILPs with GNNs are still not theoretically studied, to the best of our knowledge. Our contributions are listed below:
\begin{itemize}[leftmargin=*]
    \item (Limitation of GNNs for MILP) We show with an example that GNNs do not have strong enough separation power to distinguish any two different MILP instances. There exist two MILPs such that one of them is feasible while the other one is not, but, unfortunately, all GNNs treat them equally without detecting the essential difference between them. In fact, there are infinitely many pairs of MILP instances that can puzzle GNN.
    \item (Foldable and unfoldable MILP) We provide a precise mathematical description on what type of MILPs makes GNNs fail. These hard MILP instances are named as \emph{foldable MILPs}. 
    We prove that, for \emph{unfoldable MILPs}, GNN has strong enough separation power and representation power to approximate the feasibility, optimal objective value and an optimal solution.
    \item (MILP with random features) To handle those foldable MILPs, we propose to append random features to the MILP-induced graphs. We prove that, with the random feature technique, the answers to questions (\ref{eq:p1} - \ref{eq:p3}) are affirmative. 
\end{itemize}

The answers to (\ref{eq:p1} - \ref{eq:p3}) serve as foundations of a more practical question: whether GNNs are able to predict branching strategies or primal heuristics for MILP? Although the answers to (\ref{eq:p1}) and (\ref{eq:p2}) do not directly answer that practical question, they illustrate the possibility that GNNs can capture some key information of a MILP instance and have the capacity to suggest an adaptive branching strategy for each instance. To obtain a GNN-based strategy, practitioners should consider more factors: feature spaces, action spaces, training methods, generalization performance, etc., and some recent empirical studies show encouraging results on learning such a strategy \cites{gasse2019exact,gupta2020hybrid,gupta2022lookback,nair2020solving,shen2021learning,qu2022improved,scavuzzo2022learning,khalil2022mip}.
The answer to (\ref{eq:p3}) directly shows the possibility of learning primal heuristics. With proper model design and training methods, one could obtain competitive GNN-based primal heuristics \cites{nair2020solving,ding2020accelerating}.

The rest of this paper is organized as follows. We state some preliminaries in Section~\ref{sec:prelim}. The limitation of GNN for MILP is presented in Section~\ref{sec:counterexample} and we provide the descriptions of foldable and unfoldable MILPs in Section~\ref{sec:unfold}. The random feature technique is introduced in Section~\ref{sec:random_feature}. Section~\ref{sec:numerics} contains some numerical results and the whole paper is concluded in Section~\ref{sec:conclude}.

\section{Preliminaries}
\label{sec:prelim}

In this section, we introduce some preliminaries that will be used throughout this paper. Our notations, definitions, and examples follow \cite{LP_paper}. 
Consider a general MILP problem defined with:
\begin{equation}\label{MILP}
	\min_{x\in\bR^n} ~~ c^\top x,\quad \text{s.t.} ~~ Ax\circ b,~~ l\leq x\leq u,~~ x_j\in \bZ,\ \forall~j \in I,
\end{equation}
where $A\in\bR^{m\times n}$, $c\in\bR^n$, $b\in\bR^m$, $l\in(\bR\cup\{-\infty\})^n$, $u\in(\bR\cup\{+\infty\})^n$, and $\circ\in\{\leq, = ,\geq\}^m$. The index set $I\subset\{1,2,\dots,n\}$ includes those indices $j$ where $x_j$ are constrained to be an integer. 
The feasible set is defined with $X_{\mathrm{feasible}} := \{x\in\bR^n|Ax\circ b,~~ l\leq x\leq u,~~ x_j\in \bZ, \forall~j \in I\}$, and we say an MILP is \emph{infeasible} if $X_{\mathrm{feasible}} = \emptyset$ and \emph{feasible} otherwise. For feasible MILPs, $\inf\{c^\top x : x\in X_{\mathrm{feasible}} \} $  is named as the \emph{optimal objective value}. If there exists $x^\ast \in X_{\mathrm{feasible}}$ with $c^\top x^\ast \leq c^\top x,\ \forall~x \in X_{\mathrm{feasible}}$, then we say that $x^\ast$ is an \emph{optimal solution}. It is possible that the objective value is arbitrarily good, i.e., for any $R>0$, $c^\top \hat{x} < -R$ holds for some $\hat{x} \in X_{\mathrm{feasible}}$. In this case, we say the MILP is \emph{unbounded} or its optimal objective value is $-\infty$.

\subsection{MILP as a weighted bipartite graph with vertex features}

Inspired by \cite{gasse2019exact} and following \cite{LP_paper}, we represent MILP by \emph{weighted bipartite graphs}. The vertex set of such a graph is $V\cup W$, where $V=\{v_1,v_2,\dots,v_m\}$ with $v_i$ representing the $i$-th constraint, and $W = \{w_1,w_2,\dots,w_m\}$ with $w_j$ representing the $j$-th variable. 
 To fully represent all information in \eqref{MILP}, we associate each vertex with features: The vertex $v_i\in V$ is equipped with a feature vector $h_i^V = (b_i,\circ_i)$ that is chosen from $\calH^V = \bR\times\{\leq, =, \geq\}$. The vertex $w_j\in W$ is equipped with a feature vector $h_j^W = (c_j,l_j,u_j,\tau_j)$, where $\tau_j = 1$ if $j\in I$ and $\tau_j = 0$ otherwise. The feature $h_j^W$ is in the space $\calH^W = \bR\times(\bR \cup \{-\infty\})\times(\bR\cup\{+\infty\}) \times\{0,1\}$.
The edge $E_{i,j} \in \bR$ connects $v_i \in V$ and $w_j \in W$, and its value is defined with $E_{i,j} = A_{i,j}$. Note that there is no edge connecting vertices in the same vertex group ($V$ or $W$) and $E_{i,j}=0$ if there is no connection between $v_i$ and $w_j$.
Thus, the whole graph is denoted as $G = (V\cup W, E)$, and we denote $\calG_{m,n}$ as the collection of all such weighted bipartite graphs whose two vertex groups have size $m$ and $n$, respectively.
Finally, we define $\calH_m^V := (\calH^V)^m$ and $\calH_n^W := (\calH^W)^n$, and stack all the vertex features together as $H = (h_1^V,h_2^V,\dots,h_m^V, h_1^W, h_2^W,\dots, h_n^W) \in \calH_m^V\times \calH_n^W$. Then a  weighted bipartite graph with vertex features $(G,H)\in \calG_{m,n}\times \calH_m^V\times \calH_n^W$ contains all information in the MILP problem \eqref{MILP}, and we name such a graph as an \emph{MILP-induced graph} or \emph{MILP-graph}.
\footnote{In \cite{gasse2019exact} and other related empirical studies, the feature spaces $\calH^V,\calH^W$ are more complicated than those defined in this paper. We only keep the most basic features here for simplicity of analysis.}
If $I=\emptyset$, the feature $\tau_j$ can be dropped and the graphs reduce to LP-graphs in \cite{LP_paper}. We provide an example of MILP-graph in Figure~\ref{fig:ex_MILPgraph}.

\begin{figure}[htb!]
	\centering
	\begin{minipage}{0.4\textwidth}
	\begin{small}
	\begin{equation*}
	\begin{split}
		\min_{x\in\bR^2} ~~ & x_1 + x_2,\\
		\text{s.t.} ~~ & x_1 + 3 x_2\geq 1,\ x_1 + x_2 \geq 1, \\
		&  x_1\leq 3,\ x_2\leq 5,\  x_2\in\bZ.
	\end{split}
    \end{equation*}
	\end{small}
\end{minipage}
\begin{minipage}{0.58\textwidth}
	\begin{tikzpicture}[
		constraintb/.style={circle, draw = blue, scale = 0.8},
		variablex/.style={circle, draw = red, scale =0.8},
		scale = 0.85
		]
			
		\draw (-1,0.7) node[constraintb] (b1) {$v_1$};
		\draw (-1,-0.7) node[constraintb] (b2) {$v_2$};
		\draw (1,0.7) node[variablex] (x1) {$w_1$};
		\draw (1,-0.7) node[variablex] (x2) {$w_2$};
		\draw (-2.8,0.7) node {$h_1^V = (1,\geq)$};
		\draw (-2.8,-0.7) node {$h_2^V = (1,\geq)$};
		\draw (3.4,0.7) node {$h_1^W = (1,-\infty,3,0)$};
		\draw (3.4,-0.7) node {$h_2^W = (1,-\infty,5,1)$};

		\draw[-] (x1.west) -- (b1.east) node[pos = 1/2] {$\textcolor{red}{1}$};
		\draw[-] (x2.west) -- (b1.east) node[pos = 1/3] {$\textcolor{red}{3}$};
		\draw[-] (x1.west) -- (b2.east) node[pos = 1/3] {$\textcolor{red}{1}$};
		\draw[-] (x2.west) -- (b2.east) node[pos = 1/2] {$\textcolor{red}{1}$};
			
	\end{tikzpicture}
\end{minipage}
\caption{An example of MILP-graph}
\label{fig:ex_MILPgraph}
\end{figure}
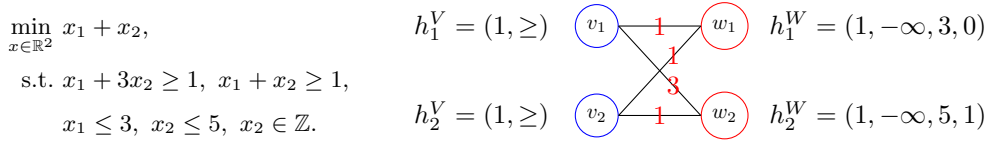

\subsection{Graph neural networks with message passing for MILP-graphs}
\label{sec:gnn}

To represent properties of the whole graph, one needs to build a GNN that maps $(G,H)$ to a real number: $\calG_{m,n}\times \calH_m^V\times \calH_n^W \to \bR$; to represent properties of each vertex in $W$ (or represent properties for each variable), one needs to build a GNN that maps $(G,H)$ to a vector: $\calG_{m,n}\times \calH_m^V\times \calH_n^W \to \bR^n$. The GNNs used in this paper can be constructed with the following three steps:
\begin{itemize}[leftmargin=0.35in]
    \item[(1)] Initial mapping at level $l=0$. Let $p^0$ and $q^0$ be learnable embedding mappings, then the embedded features are $s_i^0 = p^0(h^V_i)$ and $t_j^0=q^0(h^W_j)$ for $i=1,2,\dots,m$ and $j=1,2,\dots,n$.
    \item[(2)] Message-passing layers for $l=1,2,\dots,L$. Given learnable mappings $f^l, g^l,p^l,q^l$, one can update vertex features by the following formulas for all $i = 1,2,\dots,m $ and $j=1,2,\dots,n$:
    \begin{equation*}
        s_i^l = p^l\bigg(s_i^{l-1},\sum_{j=1}^n E_{i,j} f^l(t_j^{l-1}) \bigg),
      \quad t_j^l = q^l\bigg(t_j^{l-1},\sum_{i=1}^m E_{i,j} g^l (s_i^{l-1})\bigg).
    \end{equation*}
    \item[(3a)] Last layer for graph-level output. Define a learnable mapping $r_G$ and the output is a scalar $y_G = r_G(\Bar{s}^L,\Bar{t}^L)$, where $\Bar{s}^L = \sum_{i=1}^m s_i^L$ and $\Bar{t}^L = \sum_{j=1}^n t_j^L$.
    \item[(3b)] Last layer for node-level output. Define a learnable mapping $r_W$ and the output is a vector $y\in\bR^n$ with entries being $y_j = r_W(\Bar{s}^L,\Bar{t}^L, t_j^L)$ for $j=1,2,\dots,n$.
\end{itemize}

Throughout this paper, we require all learnable mappings to be continuous following the same settings as in \cites{LP_paper,azizian2020expressive}. In practice, one may parameterize those mappings with multi-layer perceptions (MLPs). Define $\calF_{\text{GNN}}$ as the set of all GNN mappings connecting layers (1), (2), and (3a). $\calF_{\text{GNN}}^W$ is defined similarly by replacing (3a) with (3b).

\subsection{Mappings to represent MILP characteristics}

Now we introduce the mappings that are what we aim to approximate by GNNs. With the definitions, we will revisit questions (\ref{eq:p1}-\ref{eq:p3}) and describe them in a mathematically precise way.

\paragraph{\textbf{Feasibility mapping}} We first define the following mapping that indicates the feasibility of MILP:
\begin{equation*}
    \Phi_{\text{feas}}: \calG_{m,n}\times \calH_m^V\times \calH_n^W\rightarrow \{0,1\},
\end{equation*}
where $\Phi_{\text{feas}}(G,H)=1$ if $(G,H)$ corresponds to a feasible MILP and $\Phi_{\text{feas}}(G,H)=0$ otherwise.

\paragraph{\textbf{Optimal objective value mapping}} We then define the following mapping that maps an MILP to its optimal objective value:
\begin{equation*}
    \Phi_{\text{obj}} : \calG_{m,n}\times\calH_m^V\times \calH_n^W \rightarrow \bR \cup \{\infty, - \infty\},
\end{equation*}
where $\Phi_{\text{obj}}(G,H) = \infty$ implies infeasibility and $\Phi_{\text{obj}}(G,H) = -\infty$ implies unboundedness. Note that the optimal objective value for MILP may be an infimum that can never be achieved. An example would be $\min_{x\in\bZ^2} x_1+\pi x_2,\text{ s.t. }x_1+\pi x_2\geq \sqrt{2}$. The optimal objective value is $\sqrt{2}$, since for any $\epsilon>0$, there exists a feasible $x$ with $x_1+\pi x_2 < \sqrt{2} + \epsilon$. However, there is no $x\in \bZ^2$ such that $x_1+\pi x_2= \sqrt{2}$. Thus, the preimage $\Phi_{\text{obj}}^{-1}(\bR)$ cannot precisely describe all MILP instances with an optimal solution, it describes MILP problems with a finite optimal objective.

\paragraph{\textbf{Optimal solution mapping}} To give a well-defined optimal solution mapping is much more complicated 
\footnote{If we remove the integer constraints $I=\emptyset$ and let MILP reduces to linear programming (LP), the solution mapping will be easier to define. In this case, as long as the optimal objective value is finite, there must exist an optimal solution, and the optimal solution with the smallest $\ell_2$-norm is unique~\cite{LP_paper}. Therefore, a mapping $\Phi_{\text{solu}}$, which maps an LP to its optimal solution with the smallest $\ell_2$-norm, is well defined on $\Phi_{\text{obj}}^{-1}(\bR)$. } 
since the optimal objective value, as we discussed before, may never be achieved in some cases.
To handle this issue, we only consider the case that any component in $l$ or $u$ must be finite, which implies that an optimal solution exists as long as the MILP problem is feasible. More specifically, the vertex feature space is limited to $\widetilde{\calH}^W_n = (\bR\times \bR\times \bR\times \{0,1\})^n \subset \calH^W_n$ and we consider MILP problems taken from
$
    \calD_{\text{solu}} = \big(\calG_{m,n}\times \calH^V_m \times \widetilde{\calH}^W_n \big) \cap \Phi_{\text{feas}}^{-1}(1).
$
Note that $\Phi_{\text{feas}}^{-1}(1)$ describes the set of all feasible MILPs. Consequently, any MILP instance in  $\calD_{\text{solu}}$ admits at least one optimal solution. 
We can further define the following mapping which maps an MILP to exactly one of its optimal solutions:
\begin{equation*}
    \Phi_{\text{solu}}: \calD_{\text{solu}}\backslash \calD_{\text{foldable}} \rightarrow \bR^n,
\end{equation*}
where $\calD_{\text{foldable}}$ is a subset of $\calG_{m,n}\times \calH_m^V\times \calH_n^W$ that will be introduced in Section~\ref{sec:unfold}. The full definition of $\Phi_{\text{solu}}$ is placed in Appendix~\ref{sec:solumap} due to its tediousness.

\paragraph{\textbf{Invariance and equivariance}} 
Now we discuss some properties of the three defined mappings. Mappings $\Phi_{\text{feas}}$ and $\Phi_{\text{obj}}$ are \emph{permutation invariant} because the feasibility and optimal objective of an MILP would not change if the variables or constraints are reordered. We say the mapping $\Phi_{\text{solu}}$ is \emph{permutation equivariant} because the solution of an MILP should be reordered consistently with the permutation on the variables. Now we define $S_m$ as the group contains all permutations on the constraints of MILP and $S_n$ as the group contains all permutations on the variables. For any $\sigma_V\in S_m$ and $\sigma_W\in S_n$, $(\sigma_V,\sigma_W)\ast (G,H)$ denotes the reordered MILP-graph with permutations $\sigma_V,\sigma_W$.
It is clear that both $\Phi_{\text{feas}}$ and $\Phi_{\text{obj}}$ are permutation invariant in the following sense:
\begin{align*}
     \Phi_{\text{feas}}((\sigma_V,\sigma_W)\ast (G,H)) = \Phi_{\text{feas}}(G,H), ~~ \Phi_{\text{obj}}((\sigma_V,\sigma_W)\ast (G,H)) = \Phi_{\text{obj}}(G,H),
\end{align*}
for all $\sigma_V\in S_m$, $\sigma_W\in S_n$, and $(G,H)\in \calG_{m,n}\times \calH_m^V\times \calH_n^W$. 
In addition, $\Phi_{\text{solu}}$ is permutation equivariant:
$\Phi_{\text{solu}}((\sigma_V,\sigma_W)\ast (G,H)) = \sigma_W (\Phi_{\text{solu}}(G,H))$, for all $\sigma_V\in S_m$, $\sigma_W\in S_n$, and $(G,H)\in \calD_{\text{solu}}\backslash\calD_{\text{foldable}}$. This will be discussed in Section~\ref{sec:solumap}.
Furthermore, one may check that any $F \in \calF_{\text{GNN}}$ is invariant and any $F_W \in \calF_{\text{GNN}}^W$ is equivariant.

\paragraph{\textbf{Revisiting questions \eqref{eq:p1}, \eqref{eq:p2} and \eqref{eq:p3}}} With the definitions above, questions \eqref{eq:p1},\eqref{eq:p2} and \eqref{eq:p3} actually ask: Given any finite set $\calD \subset \calG_{m,n}\times \calH_m^V\times \calH_n^W$, is there $F\in \calF_{\text{GNN}}$ such that $F$ well approximates $\Phi_{\text{feas}}$ or $\Phi_{\text{obj}}$ on set $\calD$? Given any finite set $\calD \subset \calD_{\text{solu}}\backslash\calD_{\text{foldable}}$, is there $F_W \in \calF_{\text{GNN}}^W$ such that $F_W$ is close to $\Phi_{\text{solu}}$ on set $\calD$?

\section{Directly Applying GNNs May Fail on General Datasets}
\label{sec:counterexample}

In this section, we show a limitation of GNN to represent MILP. To well approximate the mapping $\Phi_{\text{feas}}$, $\calF_{\text{GNN}}$ should have \emph{stronger separation power} than $\Phi_{\text{feas}}$: for any two MILP instances $(G,H),(\hat{G},\hat{H}) \in \calG_{m,n}\times \calH_m^V\times \calH_n^W$,
\begin{equation*}
    \Phi_{\text{feas}}(G,H) \neq \Phi_{\text{feas}}(\hat{G},\hat{H}) \text{ implies } F(G,H) \neq F(\hat{G},\hat{H}) \text{ for some } F \in \calF_{\text{GNN}}.
\end{equation*}

In another word, as long as two MILP instances have different feasibility, there should be some GNNs that can detect that and give different outputs.
Otherwise, we way that the whole GNN family $\calF_{\text{GNN}}$ \emph{cannot distinguish} two MILPs with different feasibility, hence, GNN cannot well approximate $\Phi_{\text{feas}}$. This motivate us to study the separation power of GNN for MILP.

In the literature, the separation power of GNN is usually measured by so-called Weisfeiler-Lehman (WL) test \cite{weisfeiler1968reduction}. We present a variant of WL test specially modified for MILP in Algorithm~\ref{alg:WL} that follows the same lines as in \cite{LP_paper}*{Algorithm 1}, where each vertex is labeled with a color. For example, $v_i \in V$ is initially labeled with $C^{0,V}_i$ based on its feature $h^V_i$ by hash function $\mathrm{HASH_{0,V}}$ that is assumed to be powerful enough such that it labels the vertices that have distinct information with distinct colors.
After that, each vertex iteratively updates its color, based on its own color and information from its neighbors. Roughly speaking, \emph{as long as two vertices in a graph are essentially different, they will get distinct colors finally.} The output of Algorithm~\ref{alg:WL} contains all vertex colors $\{\{ C_i^{L,V} \}\}_{i=0}^m, \{\{ C_j^{L,W} \}\}_{j=0}^n$, where $\{\{\}\}$ refers to a multiset in which the multiplicity of an element can be greater than one.
Such approach is also named as color refinement \cites{berkholz2017tight, arvind2015power, arvind2017graph}.

\begin{algorithm}[htb!]
	\caption[WL test for MILP-Graphs]{WL test for MILP-Graphs\footnotemark (denoted by $\mathrm{WL}_{\mathrm{MILP}}$)}\label{alg:WL}
	\begin{algorithmic}[1]
		\Require A graph instance $(G,H) \in \calG_{m,n}\times \calH_{m}^V\times\calH_{n}^W$ and iteration limit $L>0$.
		\State Initialize with $C_i^{0,V} = \text{HASH}_{0,V}(h_i^V)$,  $C_j^{0,W} = \text{HASH}_{0,W}(h_j^W)$.
		\For{$l=1,2,\cdots,L$}
		\State $C_i^{l,V} = \text{HASH}_{l, V} \left(C_i^{l-1,V}, \sum_{j=1}^n E_{i,j}\text{HASH}_{l,W}'\left(C_j^{l-1, W}\right)\right)$.
		\State $C_j^{l,W} = \text{HASH}_{l, W} \left(C_j^{l-1,W}, \sum_{i=1}^m E_{i,j} \text{HASH}_{l,V}'\left(C_i^{l-1, V} \right)\right)$.
		\EndFor
		\State \textbf{return} The multisets 
		containing all colors $\{\{ C_i^{L,V} \}\}_{i=0}^m, \{\{ C_j^{L,W} \}\}_{j=0}^n$.
	\end{algorithmic}
\end{algorithm}
\footnotetext{In Algorithm~\ref{alg:WL}, the hash functions $\{\mathrm{HASH}_{l,V},\mathrm{HASH}_{l,W}\}_{l=0}^L$ can be any injective mappings defined on given domains, their output spaces consist of all possible vertex colors. The other hash functions $\{\mathrm{HASH}_{l,V}',\mathrm{HASH}_{l,W}'\}_{l=0}^L$ are required to injectively map vertex colors to a linear space because we need to define sum and scalar multiplication on their outputs. Actually, Lemma \ref{lem:MILP_notwork} also applies on the WL test with a more general update scheme: $C_i^{l,V} = \text{HASH}_{l, V} (C_i^{l-1,V},  \{\{\text{HASH}_{l,W}'(C_j^{l-1, W},E_{i,j}) \}\}), C_j^{l,W} = \text{HASH}_{l, W} (C_j^{l-1,W}, \{\{\text{HASH}_{l,V}'(C_i^{l-1, V}, E_{i,j})\}\}).$}

Unfortunately, there exist some non-isomorphic graph pairs that WL test fail to distinguish~\cite{douglas2011weisfeiler}. Throughout this paper, we use $(G,H)\sim(\hat{G},\hat{H})$ to denote that $(G,H)$ and $(\hat{G},\hat{H})$ \emph{cannot be distinguished by the WL test}, i.e., $\mathrm{WL}_{\mathrm{MILP}}((G,H), L) = \mathrm{WL}_{\mathrm{MILP}}((\hat{G},\hat{H}), L)$ holds for any $L\in \mathbb{N}$ and any hash functions. The following theorem indicates that $\calF_{\text{GNN}}$ actually has the same separation power with the WL test. 

\begin{theorem}[Theorem 4.2 in \cite{LP_paper}]\label{thm:separate_GNN}
    For any $(G, H),(\hat{G},\hat{H})\in \calG_{m,n}\times\calH_m^V\times\calH_n^W$, it holds that $(G,H)\sim(\hat{G},\hat{H})$ if and only if $F(G,H) = F(\hat{G},\hat{H}),\ \forall~F\in\calF_{\text{GNN}}$.
\end{theorem}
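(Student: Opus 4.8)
The statement is an ``iff'', so the plan is to establish the two implications separately. The forward direction, that $(G,H)\sim(\hat G,\hat H)$ forces $F(G,H)=F(\hat G,\hat H)$ for every $F\in\calF_{\text{GNN}}$, says that no GNN is more discriminative than the WL test of Algorithm~\ref{alg:WL}; I expect this to follow from a routine induction on the layer index showing that GNN embeddings are \emph{refined by} WL colors. The backward direction, obtained by contraposition as ``WL separates $(G,H)$ and $(\hat G,\hat H)$ implies some $F$ separates them'', says that a GNN can simulate WL, and this constructive direction is where the real work lies. Throughout I would exploit two standing facts: the learnable maps $p^l,q^l,f^l,g^l,r_G$ are required to be continuous, and we are comparing two \emph{fixed} graphs, so only finitely many vertices, colors, and edge-weighted configurations ever occur.

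For the forward direction I would prove by induction on $l$ that there exist maps $\alpha^l,\beta^l$ with $s_i^l=\alpha^l(C_i^{l,V})$ and $t_j^l=\beta^l(C_j^{l,W})$ for every vertex of both graphs, i.e.\ each GNN feature is a function of the corresponding WL color. The base case is immediate: since $\mathrm{HASH}_{0,V},\mathrm{HASH}_{0,W}$ are injective, equal level-$0$ colors mean equal input features, and $s_i^0=p^0(h_i^V)$, $t_j^0=q^0(h_j^W)$ then agree. For the inductive step the key observation is to run the WL test with $\mathrm{HASH}'_{l,W}$ chosen to send distinct colors to \emph{linearly independent} vectors; then the WL aggregation $\sum_j E_{i,j}\mathrm{HASH}'_{l,W}(C_j^{l-1,W})$ records exactly the \emph{weighted color profile} $\big(\sum_{j:\,C_j^{l-1,W}=c}E_{i,j}\big)_c$, so that $C_i^{l,V}$ determines both $C_i^{l-1,V}$ and this profile. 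By the inductive hypothesis $f^l(t_j^{l-1})$ depends only on $C_j^{l-1,W}$, whence the GNN aggregation $\sum_j E_{i,j} f^l(t_j^{l-1})$ is a fixed function of the same weighted color profile and is therefore determined by $C_i^{l,V}$; applying $p^l$ gives $s_i^l$ as a function of $C_i^{l,V}$, and symmetrically for $t_j^l$. Finally, since $(G,H)\sim(\hat G,\hat H)$ yields equality of the color multisets at level $L$, the multisets of GNN features coincide, so $\bar s^L,\bar t^L$ agree across the two graphs and $r_G$ returns the same scalar.

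For the backward direction I would assume WL distinguishes the two graphs at some finite level $L$ and construct a single depth-$L$ GNN, shared by both graphs, that reproduces the WL refinement. Because the two graphs jointly contain only finitely many feature values, colors, and configurations, any prescribed assignment on these finite sets can be realized by a continuous map (interpolating arbitrarily elsewhere). Concretely I would pick $p^0,q^0$ injective on the finite input-feature set, and then inductively choose $f^l,g^l$ sending the finitely many level-$(l-1)$ feature values to linearly independent vectors, so the edge-weighted sum again encodes the weighted color profile without collapse; I would then take $p^l,q^l$ continuous and injective on the finite set of pairs $(\text{previous feature},\text{aggregation})$, so that $s_i^l,t_j^l$ carry exactly the information in $C_i^{l,V},C_j^{l,W}$. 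At the readout I would map the final features to linearly independent vectors, so that $\bar s^L=\sum_i s_i^L$ and $\bar t^L=\sum_j t_j^L$ recover the color-count vectors of the two final multisets; since these multisets differ, the sums differ, and a suitable continuous $r_G$ produces $F(G,H)\neq F(\hat G,\hat H)$.

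The main obstacle is the backward construction, where WL's injective-hash-and-multiset logic must be emulated by continuous maps with edge-\emph{weighted-sum} aggregation. The delicate point is that a weighted sum can identify distinct configurations; the remedy is the linear-independence embedding of colors, but this must be propagated consistently through every layer while keeping all maps continuous, finite-dimensional, and, crucially, \emph{the same for both graphs simultaneously}. I would expect verifying at each layer that the aggregation faithfully transmits the weighted color profile, together with the bookkeeping of ``injective on the finite configuration set and continuously extended,'' to be the most technical part of the argument.
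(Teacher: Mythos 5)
Your proposal is correct and follows essentially the same route as the proof this paper relies on: the paper does not reprove Theorem~\ref{thm:separate_GNN} but cites \cite{LP_paper}*{Theorem 4.2}, whose argument is exactly your two-directional scheme --- an induction showing GNN features are functions of WL colors (so GNNs are no finer than WL), and a converse construction exploiting that on the finitely many configurations of two fixed graphs one can choose continuous maps that are injective with linearly independent outputs, so the weighted-sum aggregation transmits the weighted color profile without collision. Your handling of the delicate points (same network for both graphs, linear-independence embedding at every layer and at the readout) matches the paper's stated key idea, so there is nothing substantive to correct.
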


Theorem~\ref{thm:separate_GNN} is stated and proved in \cite{LP_paper} for LP-graphs,
but it actually also applies for MILP-graphs. The intuition for Theorem~\ref{thm:separate_GNN} is straightforward since the color refinement in WL test and the message-passing layer in GNNs have similar structure. The proof of \cite{LP_paper}*{Theorem 4.2} basically uses the fact that given finitely many inputs, there always exist hash functions or continuous functions that are injective and can generate linearly independent outputs, i.e., without collision.
We also remark that the equivalence between the separation powers of GNNs and WL test has been investigated in some earlier literature \cites{xu2019powerful, azizian2020expressive, geerts2022expressiveness}. 
Unfortunately, the following lemma reveals that the separation power of WL test is weaker than $\Phi_{\text{feas}}$, and, consequently, GNN has weaker separation power than $\Phi_{\text{feas}}$, on some specific MILP datasets.

\begin{lemma}\label{lem:MILP_notwork}
    There exist two MILP problems $(G,H)$ and $(\hat{G},\hat{H})$ with one being feasible and the other one being infeasible, such that $(G,H)\sim (\hat{G},\hat{H})$.
\end{lemma}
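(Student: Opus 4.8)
The plan is to exploit the classical failure mode of color refinement: a single long cycle and a disjoint union of shorter cycles have identical local structure and are therefore indistinguishable by the WL test, yet they differ globally. I would encode graph bipartiteness as MILP feasibility so that this global difference becomes a feasibility gap, while keeping every vertex feature, degree, and edge weight identical across the two instances so that Algorithm~\ref{alg:WL} cannot separate them.

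Concretely, I would take $m=n=6$ and use binary variables, i.e.\ $l_j=0$, $u_j=1$, $\tau_j=1$, and $c_j=0$ for every $j$, together with equality constraints $b_i=1$, $\circ_i$ being ``$=$'', and all nonzero edge weights equal to $1$. For $(G,H)$ I let the six constraints read $x_i+x_{i+1}=1$ for $i=1,\dots,6$ (indices mod $6$), so the constraint--variable incidence traces out a single $6$-cycle. For $(\hat G,\hat H)$ I instead impose $x_1+x_2=1$, $x_2+x_3=1$, $x_3+x_1=1$ and $x_4+x_5=1$, $x_5+x_6=1$, $x_6+x_4=1$, so the incidence splits into two triangles $C_3\sqcup C_3$. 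In both instances each constraint vertex meets exactly two variable vertices and each variable vertex meets exactly two constraint vertices, all with the same features and unit weights.

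To establish $(G,H)\sim(\hat G,\hat H)$ I would run the refinement by hand. At level $0$ all $v_i$ receive one common color and all $w_j$ receive another, since the features coincide. At each subsequent level, every constraint vertex aggregates exactly two neighbors of the (currently common) variable color with weight $1$, and symmetrically for variable vertices; hence the update is the same function evaluated on the same argument for every vertex of a given side, in \emph{both} graphs. The coloring therefore stabilizes immediately at the trivial two-color partition, so the returned multisets $\{\{C_i^{L,V}\}\}$ and $\{\{C_j^{L,W}\}\}$ agree for every $L$ and every choice of hash functions, which is exactly $(G,H)\sim(\hat G,\hat H)$.

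Finally I would check the feasibility gap. The cycle instance admits the alternating assignment $x=(0,1,0,1,0,1)$, which satisfies every $x_i+x_{i+1}=1$ including the wrap-around, so $(G,H)$ is feasible. For the two-triangle instance, summing the three constraints of a triangle gives $2(x_1+x_2+x_3)=3$, which has no integer solution; thus $(\hat G,\hat H)$ is infeasible. The construction is elementary, so the only real obstacle is conceptual rather than computational: one must choose the two incidence structures to be $2$-regular with coinciding features so that refinement cannot break the symmetry, while still placing them on opposite sides of the bipartiteness (hence feasibility) boundary---the even-cycle versus odd-cycles pairing is precisely what reconciles these two competing demands.
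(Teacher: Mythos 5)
Your proposal is correct and essentially identical to the paper's own proof: the paper uses the very same $6$-cycle versus two-triangle construction (with unit weights, equality constraints $b_i=1$, and binary variables), the same induction showing the WL coloring stabilizes at the trivial two-color partition because every vertex has exactly two neighbors of the opposite color, and the same feasibility argument via the alternating assignment $(0,1,0,1,0,1)$ and the parity contradiction $2(x_1+x_2+x_3)=3$. The only cosmetic difference is your choice $c_j=0$ versus the paper's $c_j=1$, which is immaterial to feasibility.
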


\begin{proof}[Proof of Lemma~\ref{lem:MILP_notwork}]
    Consider two MILP problems and their induced graphs:

    \begin{figure}[htb!]
	\begin{minipage}{0.5\textwidth}
    \begin{small}
    \begin{equation*}
	    \begin{split}
		    \min_{x\in\bR^6} &~~ x_1 + x_2 + x_3 + x_4 + x_5 + x_6,\\
		    \text{s.t.} &~~ x_1+x_2=1,\ x_2+x_3 = 1,\ x_3+x_4=1,\\ & ~~ x_4+x_5=1,\ x_5+x_6 = 1,\ x_6+x_1=1,\\
		    &~~ 0\leq x_j\leq 1,\ x_j\in \mathbb{Z},\ \forall~j\in\{1,2,\dots,6\}.
	    \end{split}
    \end{equation*}
    \end{small}
    \begin{small}
    \begin{equation*}
	    \begin{split}
		    \min_{x\in\bR^6} &~~ x_1 + x_2 + x_3 + x_4 + x_5 + x_6,\\
		    \text{s.t.} &~~ x_1+x_2=1,\ x_2+x_3 = 1,\ x_3+x_1=1,\\ 
		    &~~ x_4+x_5=1,\ x_5+x_6 = 1,\ x_6+x_4=1,\\
		    &~~ 0\leq x_j\leq 1,\ x_j\in \mathbb{Z},\ \forall~j\in\{1,2,\dots,6\}.
	    \end{split}
    \end{equation*}
    \end{small}
    \end{minipage}
	\begin{minipage}{0.24\textwidth}
		\centering
		\begin{tikzpicture}[
			constraintb/.style={circle, draw = blue, scale = 0.8},
			variablex/.style={circle, draw = red, scale = 0.8},
			]
			
			\draw (-1,1.75) node[constraintb] (x1) {$v_1$};
			\draw (-1,1.05) node[constraintb] (x2) {$v_2$};
			\draw (-1,0.35) node[constraintb] (x3) {$v_3$};
			\draw (-1,-0.35) node[constraintb] (x4) {$v_4$};
			\draw (-1,-1.05) node[constraintb] (x5) {$v_5$};
			\draw (-1,-1.75) node[constraintb] (x6) {$v_6$};
			\draw (1,1.75) node[variablex] (b1) {$w_1$};
			\draw (1,1.05) node[variablex] (b2) {$w_2$};
			\draw (1,0.35) node[variablex] (b3) {$w_3$};
			\draw (1,-0.35) node[variablex] (b4) {$w_4$};
			\draw (1,-1.05) node[variablex] (b5) {$w_5$};
			\draw (1,-1.75) node[variablex] (b6) {$w_6$};
			
			\draw[-] (x1.east) -- (b1.west);
			\draw[-] (x2.east) -- (b2.west);
			\draw[-] (x3.east) -- (b3.west);
			\draw[-] (x4.east) -- (b4.west);
			\draw[-] (x5.east) -- (b5.west);
			\draw[-] (x6.east) -- (b6.west);
			\draw[-] (x1.east) -- (b2.west);
			\draw[-] (x2.east) -- (b3.west);
			\draw[-] (x3.east) -- (b4.west);
			\draw[-] (x4.east) -- (b5.west);
			\draw[-] (x5.east) -- (b6.west);
			\draw[-] (x6.east) -- (b1.west);
			
		\end{tikzpicture}

	\end{minipage}
	\begin{minipage}{0.24\textwidth}
		\centering
		\begin{tikzpicture}[
			constraintb/.style={circle, draw = blue, scale = 0.8},
			variablex/.style={circle, draw = red, scale = 0.8},
			]
			
				\draw (-1,1.75) node[constraintb] (x1) {$v_1$};
			\draw (-1,1.05) node[constraintb] (x2) {$v_2$};
			\draw (-1,0.35) node[constraintb] (x3) {$v_3$};
			\draw (-1,-0.35) node[constraintb] (x4) {$v_4$};
			\draw (-1,-1.05) node[constraintb] (x5) {$v_5$};
			\draw (-1,-1.75) node[constraintb] (x6) {$v_6$};
			\draw (1,1.75) node[variablex] (b1) {$w_1$};
			\draw (1,1.05) node[variablex] (b2) {$w_2$};
			\draw (1,0.35) node[variablex] (b3) {$w_3$};
			\draw (1,-0.35) node[variablex] (b4) {$w_4$};
			\draw (1,-1.05) node[variablex] (b5) {$w_5$};
			\draw (1,-1.75) node[variablex] (b6) {$w_6$};
			
			\draw[-] (x1.east) -- (b1.west);
			\draw[-] (x2.east) -- (b2.west);
			\draw[-] (x3.east) -- (b3.west);
			\draw[-] (x4.east) -- (b4.west);
			\draw[-] (x5.east) -- (b5.west);
			\draw[-] (x6.east) -- (b6.west);
			\draw[-] (x1.east) -- (b2.west);
			\draw[-] (x2.east) -- (b3.west);
			\draw[-] (x3.east) -- (b1.west);
			\draw[-] (x4.east) -- (b5.west);
			\draw[-] (x5.east) -- (b6.west);
			\draw[-] (x6.east) -- (b4.west);
			
		\end{tikzpicture}
		
	\end{minipage}
    \caption{Two non-isomorphic MILP graphs that cannot be distinguished by WL test}
	\label{fig:not_iso_WL}
\end{figure}
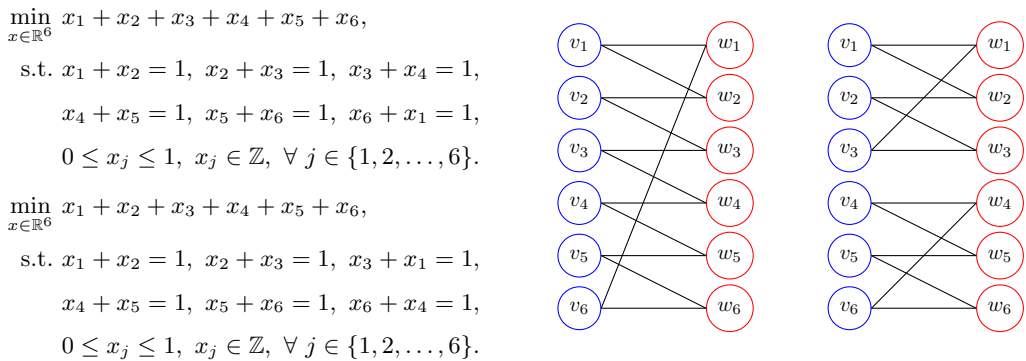

The two MILP-graphs in Fugure~\ref{fig:not_iso_WL} can not be distinguished by WL test,
which can be proved by induction. First we consider the initial step in Algorithm~\ref{alg:WL}. Based on the definitions in Section~\ref{sec:gnn}, we can explicitly write down the vertex features for each vertex here: $h^{V}_{i} = (1,=), 1\leq i \leq 6$ and $h^{W}_{j} = (1,0,1,1), 1 \leq j \leq 6$.  
Since all the vertices in $V$ share the same information, they are labeled with an uniform color $C^{0,V}_1 = C^{0,V}_2 \cdots = C^{0,V}_6$ (We use blue in Figure \ref{fig:not_iso_WL}), whatever the hash functions we choose. With the same argument, one would obtain $C^{0,W}_1 = C^{0,W}_2 \cdots = C^{0,W}_6$ and label all vertices in $W$ with red. Both of the two graphs will be initialized in such an approach. 
Assuming $\{C^{l,V}_i\}_{i=1}^6$ are all blue and $\{C^{l,W}_j\}_{j=1}^6$ are all red, one will obtain \emph{for both of the graphs in Figure \ref{fig:not_iso_WL}} that $C^{l+1,V}_1 = C^{l+1,V}_2 \cdots = C^{l+1,V}_6$ and $C^{l+1,W}_1 = C^{l+1,W}_2 \cdots = C^{l+1,W}_6$ based on the update rule in Algorithm~\ref{alg:WL}, because each blue vertex has two red neighbors and each red vertex has two blue neighbors, and each edge connecting a blue vertex with a red one has weight $1$. This concludes that, one cannot distinguish the two graphs by checking the outputs of the WL test. 

However, the first MILP problem is feasible, since $x = (0,1,0,1,0,1)$ is a feasible solution, while the second MILP problem is infeasible, since the constraints imply $3 = 2(x_1+x_2+x_3)\in 2\mathbb{Z}$, which is a contradiction.
\end{proof}

For those instances in Figure \ref{fig:not_iso_WL}, GNNs may struggle to predict a meaningful neural diving or branching strategy. In neural diving~\cite{nair2020solving}, one usually trains a GNN to predict an estimated solution; in neural branching~\cite{gasse2019exact}, one usually trains a GNN to predict a ranking score for each variable and uses that score to decide which variable is selected to branch first. However, the analysis above illustrates that the WL test or GNNs cannot distinguish the $6$ variables, and one cannot get a meaningful ranking score or estimated solution.

\section{Unfoldable MILP Problems}
\label{sec:unfold}

We prove with example in Section~\ref{sec:counterexample} that one may not expect good performance of GNN to approximate $\Phi_{\text{feas}}$ on a general dataset of MILP problems.
It's worth to ask: is it possible to describe the common characters of those hard examples? If so, one may restrict the dataset by removing such instances, and establish a strong separation/representation power of GNN on that restricted dataset. The following definition provides a rigorous description of such MILP instances.

\begin{definition}[Foldable MILP]\label{def:fold}
Given any MILP instance, one would obtain vertex colors $\{C^{l,V}_i,C^{l,W}_j\}_{l,i,j}$ by running Algorithm~\ref{alg:WL}.
We say that an MILP instance can be folded, or is foldable, if there exist $1 \leq i,i' \leq m$ or $ 1 \leq j,j' \leq n$ such that
\[C^{l,V}_i = C^{l,V}_{i'}, ~~ i \neq i', \quad \mathrm{or}\quad  C^{l,W}_j = C^{l,W}_{j'},~~ j \neq j',\]
for any $l\in \mathbb{N}$ and any hash functions. In another word, at least one color in the multisets generated by the WL test always has a multiplicity greater than $1$. Furthermore, we denote $\calD_{\text{foldable}} \subset \calG_{m,n}\times \calH_m^V\times \calH_n^W$ as the collection of all $(G,H)\in \calG_{m,n}\times \calH_m^V\times \calH_n^W$ that can be folded. 
\end{definition}

For example, the MILP in Figure~\ref{fig:ex_MILPgraph} is not foldable, while the two MILP instances in Figure~\ref{fig:not_iso_WL} are both foldable. The foldable examples in Figure~\ref{fig:not_iso_WL} have been analyzed in the proof of Lemma~\ref{lem:MILP_notwork}, and now we provide some analysis of the example in Figure~\ref{fig:ex_MILPgraph} here. Since the vertex features are distinct $h^{W}_1 \neq h^{W}_2$, one would obtain $C^{0,W}_1 \neq C^{0,W}_2$ as long as the hash function $\mathrm{HASH}_{0,W}$ is injective. Although $h^{V}_1 = h^{V}_2$ and hence $C^{0,V}_1 = C^{0,V}_2$, the neighborhood information of $v_1$ and $v_2$ are different due to the difference of the edge weights. One could obtain $C^{1,V}_1 \neq C^{1,V}_2$ by properly choosing $\mathrm{HASH}_{0,W},\mathrm{HASH}_{1,W}',\mathrm{HASH}_{1,V}$, which concludes the unfoldability.

We prove that, as long as those foldable MILPs are removed, GNN is able to accurately predict the feasibility of all MILP instances in a dataset with finite samples. We use $(F(G,H)>1/2)$ as the criteria, and the indicator $\mathbb{I}_{F(G,H)>1/2} = 1$ if $F(G,H)>1/2$; $\mathbb{I}_{F(G,H)>1/2} = 0$ otherwise.

\begin{theorem}\label{cor:GNNapproxPhi_feas}
    For any finite dataset $\calD \subset \calG_{m,n}\times\calH_m^V\times \calH_n^W\backslash \calD_{\text{foldable}}$, there exists $F\in\calF_{\text{GNN}}$ such that
    \begin{equation*}
        \mathbb{I}_{F(G,H)>1/2} = \Phi_{\text{feas}}(G,H),\quad \forall~(G,H)\in \calD.
    \end{equation*}
\end{theorem}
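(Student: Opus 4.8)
The plan is to reduce the theorem to a purely combinatorial \emph{rigidity} statement about unfoldable MILPs and then invoke the separation equivalence of Theorem~\ref{thm:separate_GNN}. To build a single $F\in\calF_{\text{GNN}}$ whose thresholding recovers feasibility on all of $\calD$, it suffices to show that $\Phi_{\text{feas}}$ is constant on each WL-equivalence class that $\calD$ meets; once this compatibility is established, the fact that GNNs match the separation power of the WL test lets me interpolate the two feasibility values through the scalar readout. Since $\calD$ is finite, only finitely many WL-classes occur, so the entire content reduces to: for any $(G,H),(\hat G,\hat H)\in\calD$, if $(G,H)\sim(\hat G,\hat H)$ then $\Phi_{\text{feas}}(G,H)=\Phi_{\text{feas}}(\hat G,\hat H)$.

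The key step, which I expect to be the main obstacle, is a rigidity lemma: two \emph{unfoldable} MILP-graphs that cannot be distinguished by the WL test must be isomorphic. By Definition~\ref{def:fold}, unfoldability means the stable coloring of Algorithm~\ref{alg:WL} is \emph{discrete} on both sides, i.e.\ the final colors $\{C_i^{L,V}\}_{i=1}^m$ and $\{C_j^{L,W}\}_{j=1}^n$ are pairwise distinct. If $(G,H)\sim(\hat G,\hat H)$ then the returned multisets coincide, so matching equal colors yields a bijection $\phi$ between the constraint vertices and between the variable vertices of the two graphs, and I would then show $\phi$ is a graph isomorphism. Equality of the initial colors forces the vertex features $h^V,h^W$ to agree under $\phi$. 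For the edge weights, the crucial observation is that the aggregation $\sum_j E_{i,j}\,\mathrm{HASH}'_{l,W}(C_j^{l-1,W})$, read over hash maps whose images are linearly independent (permitted since $\mathrm{HASH}'$ targets a linear space), records for each neighbor color $\kappa$ the \emph{total} incident weight $\sum_{j:\,C_j^{l-1,W}=\kappa}E_{i,j}$; since the coloring is discrete, every such color class is a singleton, so these per-class totals are exactly the individual entries $E_{i,j}$. Hence the color of $v_i$ determines its entire weighted adjacency row indexed by variable colors, and matching colors across the two graphs forces $E_{i,j}=\hat E_{\phi(i),\phi(j)}$. This is the standard fact that color refinement identifies graphs with discrete stable colorings, here adapted to weighted bipartite graphs with vertex features; getting the weighted and featured bookkeeping exactly right is where the real work lies.

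Granting the rigidity lemma, isomorphic MILPs have identical feasible sets up to relabeling of variables, so $\Phi_{\text{feas}}$ agrees on them by its permutation invariance; this gives the required compatibility on $\calD$. For the construction, I first build a GNN $F_0\in\calF_{\text{GNN}}$ that is injective on the WL-classes occurring in $\calD$: the message-passing layers of Section~\ref{sec:gnn} have exactly the structure of the color-refinement updates in Algorithm~\ref{alg:WL}, and on a finite input set one can always choose continuous $p^l,q^l,f^l,g^l$ reproducing injective, collision-free hash behavior (the mechanism underlying Theorem~\ref{thm:separate_GNN}), so the readout $F_0$ separates any two WL-distinguishable instances of $\calD$. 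Finally, since $\Phi_{\text{feas}}$ is constant on each WL-class, it factors through $F_0$ on $\calD$; as $F_0$ takes only finitely many values there, I can pick a continuous scalar map $\rho\colon\bR\to\bR$ with $\rho(F_0(G,H))>1/2$ exactly when $(G,H)$ is feasible, and absorb $\rho$ into the final learnable readout $r_G$. The resulting $F=\rho\circ F_0$ lies in $\calF_{\text{GNN}}$ and satisfies $\mathbb{I}_{F(G,H)>1/2}=\Phi_{\text{feas}}(G,H)$ for all $(G,H)\in\calD$.
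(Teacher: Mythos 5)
Your proposal is correct, and its mathematical core coincides with the paper's: your ``rigidity lemma'' is exactly the paper's Lemma~\ref{lem:isoMILP} (unfoldable, WL-equivalent MILP-graphs must be isomorphic), and your sketch of its proof --- discrete stable coloring from unfoldability, a color-matching bijection, and recovery of individual edge weights from the aggregated sums via linear independence of the hash images --- is the same argument the paper gives. Where you diverge is the final expressiveness step. The paper observes that a finite $\calD$ is compact and $\Phi_{\text{feas}}|_{\calD}$ is continuous, verifies the separation hypothesis \eqref{asp:separate} via Lemma~\ref{lem:isoMILP}, and then invokes the Stone--Weierstrass-based universal approximation theorem (Theorem~\ref{thm:uniapprox_scal}) with $\epsilon = 1/2$, so that thresholding at $1/2$ gives exact classification. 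You instead construct a single collision-free WL-simulating network $F_0$ on the finite set and post-compose with a continuous interpolating readout $\rho$ absorbed into $r_G$; this is a more elementary, self-contained route that avoids the approximation theorem, but it silently re-proves its finite-set case --- in particular, the claim that one network $F_0$ simultaneously separates all WL-inequivalent pairs of $\calD$ requires the collision-free-construction mechanism underlying Theorem~\ref{thm:separate_GNN}, not merely its statement, which only provides, for each pair, \emph{some} distinguishing $F$ (you do acknowledge this, and for a finite $\calD$ the construction indeed goes through). What the paper's route buys is uniformity: the identical argument, with Lusin's theorem inserted, yields the finite-measure generalization (Theorem~\ref{thm:GNNapproxPhi_feas}), where your pointwise separation-plus-interpolation construction would no longer apply.
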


Similar results also hold for the optimal objective value mapping $\Phi_{\text{obj}}$ and the optimal solution mapping $\Phi_{\text{solu}}$ and we list the results below. All the proofs of theorems are deferred to the appendix.

\begin{theorem}\label{cor:GNNapproxPhi_obj}
    Let $\calD \subset \calG_{m,n}\times\calH_m^V\times \calH_n^W\backslash \calD_{\text{foldable}}$ be a finite dataset. For any $\delta > 0$, there exists $F_1\in\calF_{\text{GNN}}$ such that
        \begin{equation*}
            \mathbb{I}_{F_1(G,H)>1/2} = \mathbb{I}_{\Phi_{\text{obj}}(G,H) \in \bR},\quad \forall~(G,H)\in \calD,
        \end{equation*}
        and there exists some $F_2\in\calF_{\text{GNN}}$ such that
        \begin{equation*}
            | F_2(G,H) -  \Phi_{\text{obj}}(G,H)| < \delta,\quad \forall~(G,H)\in \calD\cap \Phi_{\text{obj}}^{-1}(\bR).
        \end{equation*}
\end{theorem}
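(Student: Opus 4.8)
The plan is to mirror the argument behind Theorem~\ref{cor:GNNapproxPhi_feas}, reducing both claims to a single structural fact: on unfoldable instances the target quantities are constant across Weisfeiler--Lehman (WL) equivalence classes, after which Theorem~\ref{thm:separate_GNN} lets me interpolate them by a GNN on the finite set $\calD$.

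First I would establish that for unfoldable MILPs, WL-indistinguishability forces isomorphism. Suppose $(G,H)\sim(\hat G,\hat H)$ with both instances in $\calD\subset \calG_{m,n}\times\calH_m^V\times\calH_n^W\backslash\calD_{\text{foldable}}$. By Definition~\ref{def:fold}, for suitably injective hash functions Algorithm~\ref{alg:WL} assigns pairwise-distinct colors to the vertices within each graph, so the equality of the output color multisets yields a color-preserving bijection between $V\cup W$ and $\hat V\cup\hat W$. Because the refined colors encode all vertex features $(b_i,\circ_i)$ and $(c_j,l_j,u_j,\tau_j)$ together with the incident edge weights $E_{i,j}=A_{i,j}$ (directly under the multiset update scheme of the footnote to Algorithm~\ref{alg:WL}, and generically under the weighted-sum scheme), this bijection is an isomorphism of MILP-graphs. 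Isomorphic MILP-graphs describe the same optimization problem up to relabeling of constraints and variables, hence they share feasibility, boundedness, and optimal objective value. In particular both $(G,H)\mapsto \mathbb{I}_{\Phi_{\text{obj}}(G,H)\in\bR}$ and $(G,H)\mapsto \Phi_{\text{obj}}(G,H)$ (on the finite-value part) are constant on each WL class within $\calD$.

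Next I would convert WL-invariance into GNN-realizability. By Theorem~\ref{thm:separate_GNN}, every $F\in\calF_{\text{GNN}}$ agrees on two instances exactly when they are WL-equivalent; conversely, since $\calD$ is finite, one can build a single $F_0\in\calF_{\text{GNN}}$ whose readout separates all WL-inequivalent members of $\calD$ (the interpolation mechanism underlying Theorem~\ref{cor:GNNapproxPhi_feas}, using that finitely many inputs admit continuous, collision-free embeddings in the message-passing and readout maps). Composing the readout with a continuous function sending each attained output value to the corresponding target value---well-defined by the previous step---produces the required GNNs. For $F_1$, I take the target $\mathbb{I}_{\Phi_{\text{obj}}(G,H)\in\bR}\in\{0,1\}$; since $0<1/2<1$, the thresholded output $\mathbb{I}_{F_1>1/2}$ reproduces it exactly on $\calD$. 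For $F_2$, I restrict to $\calD\cap\Phi_{\text{obj}}^{-1}(\bR)$, where $\Phi_{\text{obj}}$ is real-valued and WL-invariant, and obtain $F_2$ matching $\Phi_{\text{obj}}$ on this finite set, so that $|F_2(G,H)-\Phi_{\text{obj}}(G,H)|<\delta$ holds (indeed with exact interpolation, making the role of $\delta$ merely a convenience).

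The hard part will be the first step, specifically verifying that the color-preserving bijection between two unfoldable MILP-graphs is a genuine isomorphism preserving \emph{all} edge weights and all of $b,\circ,c,l,u,\tau$, and then handling uniformly the three regimes $\Phi_{\text{obj}}=+\infty$ (infeasible), $\Phi_{\text{obj}}=-\infty$ (unbounded), and finite. Once the optimal objective value is known to be a WL-class invariant on unfoldable data, the remaining representation argument is routine and identical in spirit to the feasibility case of Theorem~\ref{cor:GNNapproxPhi_feas}.
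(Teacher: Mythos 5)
Your proposal is correct, and its first half is exactly the paper's key structural fact: what you call step 1 is Lemma~\ref{lem:isoMILP}, which proves that WL-equivalence between unfoldable instances forces isomorphism (the paper handles the weighted-sum update scheme rigorously, choosing hash functions whose outputs on the finitely many occurring colors are linearly independent, which is precisely the point you wave at with ``generically''). Where you genuinely diverge is the realization step. The paper does not hand-build an interpolating GNN: it observes that the finite set $\calD$ is compact, that any function on a finite set is continuous, and that Lemma~\ref{lem:isoMILP} together with permutation invariance of $\Phi_{\text{obj}}$ verifies hypothesis \eqref{asp:separate}; then the Stone--Weierstrass-based universal approximation theorem (Theorem~\ref{thm:uniapprox_scal}) is applied as a black box to the targets $\mathbb{I}_{\Phi_{\text{obj}}(\cdot,\cdot)\in\bR}$ (uniform error below $1/2$ suffices after thresholding) and $\Phi_{\text{obj}}$ on $\calD\cap\Phi_{\text{obj}}^{-1}(\bR)$. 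Your route---a single GNN $F_0$ separating all WL-inequivalent members of $\calD$, composed with a continuous map carrying attained readout values to target values---is more elementary and buys exact interpolation (so $\delta$ is indeed superfluous on a finite set), but it rests on one point that the statement of Theorem~\ref{thm:separate_GNN} alone does not supply: that theorem only guarantees, for each WL-inequivalent pair, \emph{some} GNN distinguishing that pair, whereas you need one $F_0$ that separates all pairs simultaneously. This is fixable, either by unfolding the collision-free construction inside the proof of Theorem~\ref{thm:separate_GNN} (which you gesture at, and which the paper confirms is the underlying mechanism) or by a parallel-stacking argument combining finitely many GNNs into one; the paper's route sidesteps the issue entirely and, as a bonus, generalizes verbatim to finite-measure datasets via Lusin's theorem (Theorem~\ref{thm:GNNapproxPhi_obj}), which an exact-interpolation argument cannot.
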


\begin{theorem}\label{cor:GNNapproxPhi_solu}
    Let $\calD \subset \calD_{\text{solu}}\backslash \calD_{\text{foldable}} \subset \calG_{m,n}\times\calH_m^V\times \calH_n^W\backslash \calD_{\text{foldable}}$ be a finite dataset. For any $\delta > 0$, there exists $F_W\in\calF_{\text{GNN}}^W$ such that
    \begin{equation*}
        \| F_W(G,H) -  \Phi_{\text{solu}}(G,H)\| < \delta, \quad \forall~(G,H)\in \calD.
    \end{equation*}
\end{theorem}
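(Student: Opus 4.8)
The plan is to mirror the strategy behind the invariant results (Theorems~\ref{cor:GNNapproxPhi_feas} and~\ref{cor:GNNapproxPhi_obj}), but now at the node level: first fix a message-passing backbone whose embeddings separate variables exactly as the WL test does, and then interpolate the target solution values with a continuous read-out $r_W$. Concretely, I would first choose the learnable maps $p^0,q^0,f^l,g^l,p^l,q^l$ in layers (1)--(2) so that, on the finite dataset $\calD$, the node embeddings reproduce the color-refinement of Algorithm~\ref{alg:WL}: for a fixed iteration count $L$ large enough that the WL colors have stabilized on every instance of $\calD$, the embedding $t_j^L$ is an injective function of the stabilized color $C_j^{L,W}$, $s_i^L$ of $C_i^{L,V}$, while the aggregates $\bar s^L=\sum_i s_i^L$ and $\bar t^L=\sum_j t_j^L$ injectively encode the color multisets $\{\{C_i^{L,V}\}\},\{\{C_j^{L,W}\}\}$. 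This is precisely the construction underlying Theorem~\ref{thm:separate_GNN}: since $\calD$ is finite only finitely many colors occur, so one can pick continuous injective maps whose ranges are linearly independent, which turns sum-aggregation into a faithful histogram of colors and prevents collisions across layers.

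The heart of the argument is a well-definedness statement: for any $(G,H),(\hat G,\hat H)\in\calD$ and indices $j,j'$,
\begin{equation*}
(\bar s^L,\bar t^L,t_j^L)=(\bar{\hat s}^L,\bar{\hat t}^L,\hat t_{j'}^L)\ \Longrightarrow\ \Phi_{\text{solu}}(G,H)_j=\Phi_{\text{solu}}(\hat G,\hat H)_{j'}.
\end{equation*}
This is where unfoldability enters. Equality of $\bar s^L,\bar t^L$ together with the faithful encoding forces the two color multisets to coincide, i.e.\ $(G,H)\sim(\hat G,\hat H)$. Because both instances lie outside $\calD_{\text{foldable}}$, the stabilized colors within each graph are pairwise distinct (Definition~\ref{def:fold}); matching color multisets then determine permutations $\sigma_V\in S_m$, $\sigma_W\in S_n$ pairing vertices of equal color, and since the WL colors encode the full weighted-neighborhood structure one checks that $(\hat G,\hat H)=(\sigma_V,\sigma_W)\ast(G,H)$, i.e.\ the two are the same MILP up to reordering. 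Equality $t_j^L=\hat t_{j'}^L$ forces $\sigma_W(j)=j'$, and the permutation equivariance of $\Phi_{\text{solu}}$, valid precisely on $\calD_{\text{solu}}\setminus\calD_{\text{foldable}}$, yields $\Phi_{\text{solu}}(\hat G,\hat H)_{j'}=\Phi_{\text{solu}}(G,H)_j$.

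With well-definedness established, the remaining step is routine interpolation. The set $S=\{(\bar s^L,\bar t^L,t_j^L):(G,H)\in\calD,\ 1\le j\le n\}$ is finite, and the assignment $(\bar s^L,\bar t^L,t_j^L)\mapsto\Phi_{\text{solu}}(G,H)_j$ is a genuine real-valued function on $S$ by the previous paragraph. Any such finite partial function from Euclidean space to $\bR$ extends to a continuous $r_W$ (for instance by a Tietze extension, or a smooth interpolant separating the finitely many distinct points of $S$). Taking $F_W(G,H)_j=r_W(\bar s^L,\bar t^L,t_j^L)$ then gives $F_W\in\calF_{\text{GNN}}^W$ with $F_W(G,H)_j=\Phi_{\text{solu}}(G,H)_j$ on all of $\calD$, so the stated bound holds for every $\delta>0$ (indeed with equality).

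I expect the main obstacle to be the well-definedness step, specifically the implication ``WL-indistinguishable $+$ unfoldable $\Rightarrow$ equal up to permutation, with solution coordinates matched by equivariance.'' The separation/encoding construction is a direct transcription of Theorem~\ref{thm:separate_GNN}, and the final extension is standard; but certifying that the tuple $(\bar s^L,\bar t^L,t_j^L)$ never conflates two variables carrying different target values requires both the discrete-stable-coloring property of unfoldable graphs and the equivariance of the (appendix-defined) mapping $\Phi_{\text{solu}}$, and is exactly where the hypothesis $\calD\subset\calD_{\text{solu}}\setminus\calD_{\text{foldable}}$ is genuinely consumed.
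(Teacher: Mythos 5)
Your proof is correct, but it takes a genuinely different route from the paper. The paper's proof is short and abstract: it symmetrizes $\calD$ to be closed under $S_m\times S_n$, observes that the finite set $\calD$ is compact and $\Phi_{\text{solu}}|_{\calD}$ is continuous, and then verifies the three hypotheses of the equivariant universal-approximation theorem (Theorem~\ref{thm:uniapprox_vec}, a generalized Stone--Weierstrass result inherited from \cite{LP_paper}), with hypothesis (ii) supplied by Lemma~\ref{lem:isoMILP} and hypothesis (iii) vacuous because unfoldable graphs receive discrete colorings. You bypass that machinery entirely: you hard-code a message-passing backbone that simulates WL color refinement faithfully on the finite dataset, prove the well-definedness of the assignment $(\bar s^L,\bar t^L,t_j^L)\mapsto\Phi_{\text{solu}}(G,H)_j$, and finish by Tietze-extending a function on a finite point set. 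Your ``well-definedness'' paragraph is exactly the content of the paper's Lemma~\ref{lem:isoMILP} (WL-equivalence plus unfoldability implies isomorphism, with the linear-independence trick needed to recover the edge weights $\hat E_{i,j}=E_{\sigma_V(i),\sigma_W(j)}$) combined with the equivariance of $\Phi_{\text{solu}}$; you re-derive it inline rather than isolating it, and you correctly flag it as the step that consumes the hypothesis $\calD\subset\calD_{\text{solu}}\setminus\calD_{\text{foldable}}$. What your approach buys: it is more elementary and self-contained (no Stone--Weierstrass, no symmetrization of $\calD$, since any GNN of the prescribed form is automatically equivariant and your target values are consistent across isomorphic instances), and it yields exact representation ($F_W=\Phi_{\text{solu}}$ on $\calD$) rather than $\delta$-approximation. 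What the paper's approach buys: it generalizes verbatim to the measure-theoretic strengthening (Theorem~\ref{thm:GNNapproxPhi_solu}), where $\calD$ is replaced by an infinite finite-measure set, finiteness-based interpolation is unavailable, and one genuinely needs Lusin's theorem plus the Stone--Weierstrass argument. One small caveat: your backbone construction cites Theorem~\ref{thm:separate_GNN}, whose statement (separation-power equivalence) is weaker than what you use (a single GNN whose layer-$L$ embeddings injectively encode stabilized colors on all of $\calD$); what you actually need is the construction inside its proof, which you describe accurately, so this is a citation-granularity issue rather than a gap.
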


Actually, conclusions in Theorems \ref{cor:GNNapproxPhi_feas}, \ref{cor:GNNapproxPhi_obj} and \ref{cor:GNNapproxPhi_solu} can be strengthened. The dataset $\calD$ in the three theorems can be replaced with a measurable set with finite measure, which may contains infinitely many instances. Those strengthened theorems and their proofs can be found in Appendix~\ref{sec:proof-unfold}.

Let us also mention that the foldability of an MILP instance may depend on the feature spaces $\calH^V$ and $\calH^W$. If more features are appended (see \cite{gasse2019exact}), i.e., $h_i^V$ and $h_j^W$ have more entries, and then a foldable MILP problem may become unfoldable. In addition, all the analysis here
works for any topological linear spaces $\calH^V$ and $\calH^W$, as long as $\Phi_{\text{feas}}$, $\Phi_{\text{obj}}$, and $\Phi_{\text{solu}}$ can be verified as measurable and permutation-invariant/equivariant mappings.

\section{Symmetry Breaking Techniques via Random Features}
\label{sec:random_feature}

Although we prove that GNN is able to approximate $\Phi_{\text{feas}},\Phi_{\text{obj}},\Phi_{\text{solu}}$ to any given precision for those unfoldable MILP instances, practitioners cannot benefit from that if there are foldable MILPs in their set of interest. For example, for around $1/4$ of the problems in MIPLIB 2017 \cite{MIPLIB2017}, the number of colors generated by WL test is smaller than one half of the number of vertices, respect to the $\calH^V$ and $\calH^W$ used in this paper. To resolve this issue, we introduce a technique inspired by \cites{abboud2020surprising, sato2021random}.
More specifically, we append the vertex features with an additional random feature $\omega$ and define a type of random GNNs as follows.

Let $\Omega = [0,1]^m \times [0,1]^n$ and let $(\Omega,\calF_\Omega, \bP)$ be the probability space corresponding to the uniform distribution $\mathcal{U}(\Omega)$. The class of random graph neural network $\calF_{\text{GNN}}^R$ with scalar output is the collection of functions
\begin{align*}
        F_R :  \calG_{m,n}\times\calH_m^V\times \calH_n^W \times \Omega & \rightarrow\qquad \bR, \\
        (G,H,\omega)\quad \quad \quad &\mapsto F_R(G,H,\omega),
    \end{align*}
    which is defined in the same way as $\calF_{\text{GNN}}$ with input space being $\calG_{m,n}\times\calH_m^V\times \calH_n^W \times \Omega \cong \calG_{m,n}\times(\calH^V\times [0,1])^m\times (\calH^W\times [0,1])^n$ and $\omega$ being sampled from $\mathcal{U}(\Omega)$. The class of random graph neural network $\calF_{\text{GNN}}^{W,R}$ with vector output is the collection of functions
    \begin{align*}
        F_{W,R} :  \calG_{m,n}\times\calH_m^V\times \calH_n^W \times \Omega & \rightarrow\qquad\ \bR^n, \\
        (G,H,\omega)\quad \quad \quad &\mapsto F_{W,R}(G,H,\omega),
    \end{align*}
    which is defined in the same way as $\calF_{\text{GNN}}^W$ with input space being $\calG_{m,n}\times\calH_m^V\times \calH_n^W \times \Omega \cong \calG_{m,n}\times(\calH^V\times [0,1])^m\times (\calH^W\times [0,1])^n$ and $\omega$ being sampled from $\mathcal{U}(\Omega)$.

By the definition of graph neural networks, $F_R$ is permutation invariant and $F_{W,R}$ is permutation equivariant, where the permutations $\sigma_V$ and $\sigma_W$ act on $(\calH^V\times [0,1])^m$ and $(\calH^W\times [0,1])^n$. We also write $F_R(G,H) = F_R(G,H,\omega)$ and $F_{W,R}(G,H) = F_{W,R}(G,H,\omega)$ as random variables. The theorem below states that, by adding random features, GNNs 
have sufficient power to represent MILP feasibility, even including those foldable MILPs. The intuition is that by appending additional random features, with probability one, each vertex will have distinct features and the resulting MILP-graph is hence unfoldable, even if it is foldable originally. 

\begin{theorem}\label{thm:GNNapproxPhi_feas_random}
    Let $\calD \subset \calG_{m,n}\times\calH_m^V\times \calH_n^W $ be a finite dataset. For any $\epsilon>0$, there exists $F_R\in \calF_{\text{GNN}}^R$, such that
    \begin{equation*}
        \bP\left(\mathbb{I}_{F_R(G,H)>1/2}\neq \Phi_{\text{feas}}(G,H)\right)<\epsilon,\quad\forall~(G,H)\in \calD.
    \end{equation*}
\end{theorem}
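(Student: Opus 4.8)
The plan is to reduce the general (possibly foldable) case to the already-established unfoldable case of Theorem~\ref{cor:GNNapproxPhi_feas}, using the random features purely as a symmetry-breaking device. First I would isolate the event on which symmetry breaking succeeds. Writing $\omega=(\omega^V,\omega^W)\in[0,1]^m\times[0,1]^n$, set
\[
\Omega_{\text{good}} := \{\omega\in\Omega : \omega_i^V\neq\omega_{i'}^V \text{ for } i\neq i',\ \omega_j^W\neq\omega_{j'}^W \text{ for } j\neq j'\}.
\]
Its complement lies in a finite union of coordinate hyperplanes, so $\bP(\Omega\setminus\Omega_{\text{good}})=0$, i.e. $\bP(\Omega_{\text{good}})=1$. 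For any $(G,H)$ and any $\omega\in\Omega_{\text{good}}$, the augmented features $(h_i^V,\omega_i^V)$ are pairwise distinct across $V$ and the $(h_j^W,\omega_j^W)$ are pairwise distinct across $W$. Since $\mathrm{HASH}_{0,V}$ and $\mathrm{HASH}_{0,W}$ are injective, the initial WL colors are already all distinct within each vertex group; and because each update in Algorithm~\ref{alg:WL} feeds the previous color into an injective hash, distinct colors can never merge at later levels. Hence the augmented MILP-graph $(G,(H,\omega))\in\calG_{m,n}\times(\calH^V\times[0,1])^m\times(\calH^W\times[0,1])^n$ is \emph{unfoldable} in the sense of Definition~\ref{def:fold} for every $\omega\in\Omega_{\text{good}}$.

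Next I would apply the unfoldable result on the augmented input space. Consider the augmented dataset $\tilde{\calD}:=\calD\times\Omega_{\text{good}}$, endowed with the finite measure $\mu$ given by the product of the counting measure on the finite set $\calD$ and Lebesgue measure on $\Omega$; thus $\mu(\tilde{\calD})=|\calD|<\infty$, and by the previous step $\tilde{\calD}$ contains no foldable instance. Define the target $\tilde{\Phi}_{\text{feas}}(G,H,\omega):=\Phi_{\text{feas}}(G,H)$, which simply ignores $\omega$; since $\Phi_{\text{feas}}$ is permutation invariant and $\omega$ is permuted jointly with $H$, the map $\tilde{\Phi}_{\text{feas}}$ is a measurable, permutation-invariant $\{0,1\}$-valued function on the augmented space. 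I would then invoke the strengthened, measurable-set version of Theorem~\ref{cor:GNNapproxPhi_feas} (stated and proved in Appendix~\ref{sec:proof-unfold}) with tolerance $\epsilon$, obtaining some $F_R\in\calF_{\text{GNN}}^R$ for which the error set
\[
\calE := \{(G,H,\omega)\in\tilde{\calD} : \mathbb{I}_{F_R(G,H,\omega)>1/2}\neq\tilde{\Phi}_{\text{feas}}(G,H,\omega)\}
\]
satisfies $\mu(\calE)<\epsilon$.

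Finally I would descend from the total-measure bound to the per-instance probability bound. For each fixed $(G,H)\in\calD$, the conditional error probability equals the Lebesgue measure of the slice $\{\omega\in\Omega_{\text{good}}:(G,H,\omega)\in\calE\}$, which is bounded above by $\mu(\calE)$; since $\bP=\mathcal{U}(\Omega)$ agrees with Lebesgue measure and $\bP(\Omega_{\text{good}})=1$, this yields $\bP(\mathbb{I}_{F_R(G,H)>1/2}\neq\Phi_{\text{feas}}(G,H))\le\mu(\calE)<\epsilon$ for every $(G,H)\in\calD$, as required. The main obstacle is precisely the step that promotes the finite-dataset separation result (which gives \emph{exact} agreement) to a set of positive measure: because the GNN mappings must be continuous while $\tilde{\Phi}_{\text{feas}}$ is a discontinuous indicator, one cannot expect equality everywhere and must instead argue in a Lusin fashion—approximate $\tilde{\Phi}_{\text{feas}}$ by a continuous permutation-invariant function off a small-measure exceptional set, realize that function by a GNN via the WL-equals-GNN separation equivalence (Theorem~\ref{thm:separate_GNN}) on the now-unfoldable augmented instances, and then threshold at $1/2$. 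Care is needed to ensure that the exceptional set controlling the continuous approximation and the exceptional set introduced by thresholding are jointly of measure below $\epsilon$.
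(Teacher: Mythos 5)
Your overall strategy is exactly the paper's: use the random coordinates to make every augmented instance $(G,H,\omega)$ unfoldable for almost every $\omega$, reduce to the unfoldable-case machinery, and then threshold at $1/2$ and slice to get a per-instance probability bound. The symmetry-breaking step and the final descent from a total-measure bound to a per-instance bound are both correct. However, there is one genuine flaw in the middle step: you invoke the strengthened Theorem~\ref{thm:GNNapproxPhi_feas} on the augmented dataset $\tilde{\calD}=\calD\times\Omega_{\text{good}}$ \emph{with your own measure} $\mu$ (counting on $\calD$ times Lebesgue on $\Omega$), but that theorem is stated and proved only for the canonical measure $\text{Meas}(\cdot)$ on the graph space, in which Euclidean coordinates (the entries of $A$, $b$, $c$, $l$, $u$) carry Lebesgue measure. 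Since $\calD$ is a finite set of points in a space with continuous coordinates, $\tilde{\calD}$ is a $\text{Meas}$-null set in the augmented space; the theorem as cited therefore only yields $\text{Meas}(\calE)<\epsilon$, which holds vacuously for \emph{every} $F_R$ and gives no control whatsoever on $\mu(\calE)$. As written, the input to your slicing argument is unjustified. The gap is fixable — Lusin's theorem holds for any finite Borel regular measure, so one can rerun the proof of Theorem~\ref{thm:GNNapproxPhi_feas} with $\mu$ in place of $\text{Meas}$ — but that re-derivation is precisely what is missing from your argument.

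The paper avoids this issue entirely by not passing through the measure-theoretic statement: since $\calD$ is finite, it picks a \emph{compact} $\Omega_\epsilon\subset\Omega\setminus\Omega_F$ with $\bP(\Omega\setminus\Omega_\epsilon)<\epsilon$, observes that $\calD\times\Omega_\epsilon$ is compact and that $\Phi_{\text{feas}}$ restricted to it is automatically continuous (it is independent of $\omega$ and locally constant on the finite set $\calD$), verifies the separation hypothesis via Lemma~\ref{lem:isoMILP} on the unfoldable augmented instances, and applies Theorem~\ref{thm:uniapprox_scal} directly to get a uniform error below $1/2$, after which thresholding gives \emph{exact} agreement on $\calD\times\Omega_\epsilon$. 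This also shows that the concern in your closing paragraph — that the indicator target is discontinuous and forces a Lusin-type argument — does not arise in the finite-dataset setting: no exceptional set from continuous approximation is needed, and the only probability loss is $\bP(\Omega\setminus\Omega_\epsilon)$, incurred because Theorem~\ref{thm:uniapprox_scal} requires a compact domain while $\Omega\setminus\Omega_F$ is not compact.
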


Similar results also hold for the optimal objective value and the optimal solution.

\begin{theorem}\label{thm:GNNapproxPhi_obj_random}
    Let $\calD\subset \calG_{m,n}\times\calH_m^V\times \calH_n^W$ be a finite dataset. For any $\epsilon,\delta>0$, there exists $F_{R,1}\in\calF_{\text{GNN}}^R$, such that
        \begin{equation*}
            \bP\left(\mathbb{I}_{F_{R,1}(G,H)>1/2}\neq \mathbb{I}_{\Phi_{\text{obj}}(G,H) \in \bR}\right)<\epsilon,\quad\forall~(G,H)\in\calD,
        \end{equation*}
       and there exists $F_{R,2}\in\calF_{\text{GNN}}^R$, such that
	    \begin{equation*}
		    \bP\left(| F_{R,2}(G,H) -  \Phi_{\text{obj}}(G,H)|>\delta\right)<\epsilon,\quad \forall~(G,H)\in \calD\cap \Phi_{\text{obj}}^{-1}(\bR).
    	\end{equation*}
\end{theorem}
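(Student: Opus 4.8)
The plan is to reduce the random-feature statement to the deterministic unfoldable case already settled by Theorem~\ref{cor:GNNapproxPhi_obj}, exploiting the fact that appended random features destroy the symmetry that makes an instance foldable. I would treat the two assertions in parallel: $F_{R,1}$ for the classification $\mathbb{I}_{\Phi_{\text{obj}}\in\bR}$ and $F_{R,2}$ for the value approximation are produced by the same device, the only difference being whether one invokes the first or the second conclusion of Theorem~\ref{cor:GNNapproxPhi_obj}. Since $\Phi_{\text{obj}}$ does not depend on $\omega$, I extend it to the augmented input space $\calG_{m,n}\times(\calH^V\times[0,1])^m\times(\calH^W\times[0,1])^n$ by ignoring the extra coordinates; it stays measurable and permutation invariant there, so it is a legitimate target for a GNN in $\calF_{\text{GNN}}^R$.

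First I would record the symmetry-breaking lemma: for each fixed $(G,H)$, the augmented instance $(G,H,\omega)$ is unfoldable for all $\omega$ outside a Lebesgue-null set. Indeed, the appended coordinates of $\omega$ are pairwise distinct within $V$ and within $W$ with probability one, so every vertex of $V$ (resp. $W$) receives a distinct level-$0$ feature, hence a distinct color under any injective $\mathrm{HASH}_{0,V}$ (resp. $\mathrm{HASH}_{0,W}$); by Definition~\ref{def:fold} no color can then have multiplicity greater than one, so $(G,H,\omega)\notin\calD_{\text{foldable}}$. Conversely a foldable augmented instance forces a level-$0$ collision, which confines the bad $\omega$ to a finite union of coordinate hyperplanes of measure zero.

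Next I would assemble the augmented dataset. Writing $\calD=\{(G^{(k)},H^{(k)})\}_{k=1}^N$, set $\widehat{\calD}=\bigcup_{k=1}^N\{(G^{(k)},H^{(k)})\}\times\Omega$ and remove its foldable part to obtain $\widehat{\calD}_{\mathrm{uf}}$. By the lemma each slice loses only a null set, so $\widehat{\calD}_{\mathrm{uf}}$ is a measurable subset of the unfoldable region with finite measure $N$, each slice carrying the unit mass of $\bP$. I then apply the measurable-set strengthening of Theorem~\ref{cor:GNNapproxPhi_obj} (the version announced after Theorem~\ref{cor:GNNapproxPhi_solu} and proved in Appendix~\ref{sec:proof-unfold}) to $\widehat{\calD}_{\mathrm{uf}}$ with precision $\delta$ and bad-set tolerance $\epsilon$. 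This yields a single $F_{R,2}\in\calF_{\text{GNN}}^R$ with $|F_{R,2}(G,H,\omega)-\Phi_{\text{obj}}(G,H)|<\delta$ off a set $B\subset\widehat{\calD}_{\mathrm{uf}}$ of total measure $<\epsilon$, and the analogous application of the first conclusion produces $F_{R,1}$.

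Finally I would convert the global measure bound into the per-instance probability bounds the theorem requests. For each $k$ the exceptional event on slice $k$ is $B_k=\{\omega:(G^{(k)},H^{(k)},\omega)\in B\}$ together with the null foldable set; since the slices are disjoint and $\sum_k\bP(B_k)$ equals the total measure of $B$, which is $<\epsilon$, every $\bP(B_k)<\epsilon$, giving $\bP(|F_{R,2}(G,H)-\Phi_{\text{obj}}(G,H)|>\delta)<\epsilon$ on every $(G,H)\in\calD\cap\Phi_{\text{obj}}^{-1}(\bR)$, and likewise for $F_{R,1}$. I expect the main obstacle to be precisely the measurable-set extension of Theorem~\ref{cor:GNNapproxPhi_obj} that this argument rests on: one must verify that $\Phi_{\text{obj}}$ and the classifier $\mathbb{I}_{\Phi_{\text{obj}}\in\bR}$ are measurable on the augmented space and that the universal-approximation machinery behind that theorem tolerates a continuum of instances differing only in their random coordinates, so that a \emph{single} GNN simultaneously handles almost every $\omega$ in each slice rather than requiring a separate network per realization. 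The residual steps—almost-sure unfoldability and the finite union bound over the $N$ slices—are routine.
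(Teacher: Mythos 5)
Your overall strategy---break the symmetry with random features, then hand the problem to the unfoldable-case machinery---is the same as the paper's, and your symmetry-breaking lemma (almost every $\omega$ makes $(G,H,\omega)$ unfoldable) is exactly the paper's argument with the set $\Omega_F$ in \eqref{omega_F}. The gap is in the step where you apply the measurable-set strengthening, Theorem~\ref{thm:GNNapproxPhi_obj}, to $\widehat{\calD}_{\mathrm{uf}} \subset \calD\times\Omega$. That theorem is stated for the specific measure $\text{Meas}$ constructed in Appendix~\ref{sec:proof-unfold}: Lebesgue measure on the Euclidean (continuous-feature) factors and counting measure only on the discrete factors. Since $\calD$ is a finite set of points sitting inside Euclidean components, $\text{Meas}(\calD\times\Omega)=0$, so $\widehat{\calD}_{\mathrm{uf}}$ is a null set for the measure the theorem actually uses. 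Applying the theorem to it is vacuous: the conclusion $\text{Meas}(B)<\epsilon$ is satisfied by \emph{every} GNN, including ones wrong on all of $\widehat{\calD}_{\mathrm{uf}}$. Your conversion step then breaks precisely at the identity ``$\sum_k\bP(B_k)$ equals the total measure of $B$'': the slice-wise measure $\sum_k\delta_{(G^{(k)},H^{(k)})}\otimes\bP$ that you implicitly assign (``each slice carrying the unit mass of $\bP$'') and the ambient measure $\text{Meas}$ are mutually singular, so a bound on one says nothing about the other. You flagged this region of the argument as the main obstacle, but the proof as written does not close it, and the quantitative link from the cited theorem to the per-instance probability bound is broken.

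The repair can go two ways. (a) Re-run the \emph{proof} of Theorem~\ref{thm:GNNapproxPhi_obj} with the measure $\mu=\sum_{k}\delta_{(G^{(k)},H^{(k)})}\otimes\bP$ in place of $\text{Meas}$: Lusin's theorem holds for this finite Borel measure (in fact on each slice the target is constant in $\omega$, so continuity is automatic and Lusin is not even needed), and the remaining ingredients---Lemma~\ref{lem:isoMILP} and Theorem~\ref{thm:uniapprox_scal}---are measure-independent; this makes your route sound. (b) The paper's route (the proof of Theorem~\ref{thm:GNNapproxPhi_feas_random}, reused verbatim for $\mathbb{I}_{\Phi_{\text{obj}}(\cdot,\cdot)\in\bR}$ and $\Phi_{\text{obj}}$), which avoids measure theory on the $(G,H)$ coordinate entirely because $\calD$ is finite: since $\bP(\Omega_F)=0$, inner regularity gives a \emph{compact} $\Omega_\epsilon\subset\Omega\setminus\Omega_F$ with $\bP(\Omega\setminus\Omega_\epsilon)<\epsilon$; the set $\calD\times\Omega_\epsilon$ is compact, the two targets are continuous on it (constant in $\omega$, finite domain in $(G,H)$), the separation hypothesis of Theorem~\ref{thm:uniapprox_scal} holds by Lemma~\ref{lem:isoMILP} and invariance of $\Phi_{\text{obj}}$, and the resulting sup-norm bounds ($<1/2$ for the indicator, $<\delta$ for the value) give, for each $(G,H)\in\calD$, a bad event contained in $\Omega\setminus\Omega_\epsilon$, hence of probability less than $\epsilon$. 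Note the compactness of $\Omega_\epsilon$ is what substitutes for your appeal to the measurable-set extension; working with $\Omega\setminus\Omega_F$ directly would not do, as it is not compact.
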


\begin{theorem}\label{thm:GNNapproxPhi_solu_random}
    Let $\calD\subset \Phi_{\text{obj}}^{-1}(\bR)\subset \calG_{m,n}\times\calH_m^V\times \calH_n^W$ be a finite dataset. For any $\epsilon,\delta>0$, there exists $F_{W,R}\in\calF_{\text{GNN}}^{W,R}$, such that
	\begin{equation*}
		\bP\left(\| F_{W,R}(G,H) -  \Phi_{\text{solu}}(G,H)\|>\delta\right)<\epsilon,\quad\forall~(G,H)\in\calD.
	\end{equation*}
\end{theorem}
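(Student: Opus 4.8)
The plan is to reduce Theorem~\ref{thm:GNNapproxPhi_solu_random} to the unfoldable solution result, Theorem~\ref{cor:GNNapproxPhi_solu}, in its strengthened measurable-set form, by exploiting the fact that appending random features destroys WL-foldability. Since the random-GNN class $\calF_{\text{GNN}}^{W,R}$ is by definition nothing but $\calF_{\text{GNN}}^W$ acting on the augmented input space $\calG_{m,n}\times(\calH^V\times[0,1])^m\times(\calH^W\times[0,1])^n$, any approximation guarantee available for equivariant GNNs on unfoldable graphs transfers verbatim to random GNNs, provided the augmented instances are unfoldable. The first step is therefore the probabilistic observation that for each fixed $(G,H)$ the augmented instance $(G,H,\omega)$ is unfoldable for $\bP$-almost every $\omega$: the event that two constraint random coordinates coincide, or two variable random coordinates coincide, is a finite union of hyperplanes in $\Omega=[0,1]^{m}\times[0,1]^{n}$ and hence has Lebesgue measure zero; off this null set the initial hash colors are pairwise distinct within $V$ and within $W$, so by Definition~\ref{def:fold} no color ever attains multiplicity greater than one and $(G,H,\omega)\notin\calD_{\text{foldable}}$.

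Second, I would lift the target to the augmented space by setting $\widetilde\Phi_{\text{solu}}(G,H,\omega):=\Phi_{\text{solu}}(G,H)$, ignoring $\omega$. This map is measurable (it is $\Phi_{\text{solu}}$ precomposed with a coordinate projection) and, crucially, permutation equivariant on the augmented space: because $\sigma_V,\sigma_W$ act jointly on the original features and on the appended random coordinates, $\widetilde\Phi_{\text{solu}}((\sigma_V,\sigma_W)\ast(G,H,\omega))=\Phi_{\text{solu}}((\sigma_V,\sigma_W)\ast(G,H))=\sigma_W(\Phi_{\text{solu}}(G,H))=\sigma_W(\widetilde\Phi_{\text{solu}}(G,H,\omega))$. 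Applying the strengthened version of Theorem~\ref{cor:GNNapproxPhi_solu} to the measurable, finite-measure set $\{(G,H,\omega):(G,H)\in\calD,\ \omega\in\Omega\}$ of almost surely unfoldable augmented graphs then yields an $F_{W,R}\in\calF_{\text{GNN}}^{W,R}$ with $\|F_{W,R}(G,H,\omega)-\widetilde\Phi_{\text{solu}}(G,H,\omega)\|<\delta$ outside an exceptional set $B$ of arbitrarily small total measure. The per-instance bound follows by slicing: writing $B_{(G,H)}=\{\omega:(G,H,\omega)\in B\}$, Fubini gives $\sum_{(G,H)\in\calD}\bP(B_{(G,H)})=\mathrm{meas}(B)$, and since $\calD$ is finite each slice satisfies $\bP(B_{(G,H)})\le\mathrm{meas}(B)$; driving $\mathrm{meas}(B)<\epsilon$ in the previous step, and absorbing the measure-zero foldable slice, makes $\bP(\|F_{W,R}(G,H)-\Phi_{\text{solu}}(G,H)\|>\delta)<\epsilon$ for every $(G,H)\in\calD$ simultaneously.

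The hard part will be the equivariance bookkeeping that makes the lift $\widetilde\Phi_{\text{solu}}$ legitimate, which is genuinely more delicate here than in the feasibility theorem (Theorem~\ref{thm:GNNapproxPhi_feas_random}), where the invariant scalar target carries no such constraint. The random features must break exactly the WL-symmetries that stall a deterministic equivariant GNN, yet they must not manufacture a conflict with the genuine symmetries of the instance that $\Phi_{\text{solu}}$ is forced to respect: if $(G,H)$ admitted a true automorphism exchanging two same-color variables whose optimal values differed, then equivariance of $F_{W,R}$ together with the permutation-invariance of $\mathcal U(\Omega)$ would pin the error probability near $1/2$ rather than below $\epsilon$. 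The argument therefore rests on $\Phi_{\text{solu}}$ being defined only where its selected (e.g.\ minimum-$\ell_2$-norm) optimum is automorphism-consistent, so that the residual foldability which the random features resolve is the benign kind reflecting the weakness of color refinement rather than a true symmetry of the problem; confirming that $\widetilde\Phi_{\text{solu}}$ stays equivariant and measurable on precisely this class, so that the strengthened unfoldable theorem applies, is where the real work lies.
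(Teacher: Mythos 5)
Your overall strategy is the same as the paper's (random features almost surely destroy foldability; lift $\Phi_{\text{solu}}$ to the augmented space by ignoring $\omega$; reduce to the unfoldable-case machinery), but the specific reduction you chose contains a genuine gap. You apply the strengthened, measure-theoretic version of the solution theorem (Theorem~\ref{thm:GNNapproxPhi_solu}) to the set $X=\calD\times\Omega$ and then try to extract per-instance probability bounds by Fubini slicing. This fails: under the measure the paper places on the augmented space (Lebesgue measure on all continuous feature components, e.g.\ the entries of $A$, $b$, $c$), a finite dataset $\calD$ has measure zero, hence $\mathrm{Meas}(\calD\times\Omega)=0$. The strengthened theorem's conclusion, $\mathrm{Meas}(B)<\epsilon$ for the exceptional set $B$, is therefore vacuous --- $B$ could be all of $\calD\times\Omega$ and still have measure zero. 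Your identity $\sum_{(G,H)\in\calD}\bP\bigl(B_{(G,H)}\bigr)=\mathrm{Meas}(B)$ is false for this measure: Fubini gives $\mathrm{Meas}(B)=\int\bP\bigl(B_{(G,H)}\bigr)\,d\mu(G,H)=0$ because each point of $\calD$ carries zero $\mu$-mass, no matter how large the slices are. (It would be true for counting measure on $\calD$ times Lebesgue on $\Omega$, but that is not the measure for which the strengthened theorem is proved.) More fundamentally, almost-everywhere-type bounds can never control behavior at finitely many prescribed instances, which is exactly what the theorem demands.

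The repair is to replace measure by compactness, which is the paper's route. Choose a compact $\Omega_\epsilon'\subset\Omega\setminus\Omega_F$ with $\bP(\Omega\setminus\Omega_\epsilon')<\epsilon/|S_m\times S_n|$, symmetrize it by intersecting over the group orbit to get a compact, $S_m\times S_n$-invariant $\Omega_\epsilon$ with $\bP(\Omega\setminus\Omega_\epsilon)<\epsilon$, and likewise symmetrize $\calD$. Then $\calD\times\Omega_\epsilon$ is \emph{compact}, closed under the group action, consists entirely of unfoldable augmented instances, and the lifted $\Phi_{\text{solu}}$ is continuous on it (it is locally constant in $\omega$ and $\calD$ is finite). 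One then verifies the three hypotheses of the equivariant universal approximation theorem (Theorem~\ref{thm:uniapprox_vec}): equivariance is by construction of $\Phi_{\text{solu}}$; the separation hypothesis follows from the Lemma~\ref{lem:isoMILP}-type argument (two unfoldable instances equivalent under the variable-color-preserving relation are related by $(\sigma_V,\mathrm{Id})$, hence have equal solutions); and the third condition holds because WL coloring is discrete on unfoldable instances. This yields a \emph{uniform} sup-norm bound $\sup_{(G,H,\omega)\in\calD\times\Omega_\epsilon}\|F_{W,R}(G,H,\omega)-\Phi_{\text{solu}}(G,H)\|<\delta$ with no exceptional set inside $\calD\times\Omega_\epsilon$, so for every $(G,H)\in\calD$ the failure event is contained in $\Omega\setminus\Omega_\epsilon$ and has probability less than $\epsilon$. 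Incidentally, the worry in your last paragraph about genuine automorphisms swapping same-colored variables is moot precisely because of unfoldability: off the null set $\Omega_F$ the WL coloring of $(G,H,\omega)$ is discrete, so no such automorphism exists, and this is exactly what the hypotheses of Theorem~\ref{thm:uniapprox_vec} encode.
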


The idea behind those theorems is that GNNs can distinguish $(G,H,\omega)$ and $(\hat{G},\hat{H}, \hat{\omega})$ with high probability, even if they cannot distinguish $(G,H)$ and $(\hat{G},\hat{H})$. How to choose the random feature in training is significant. In practice, one may generate one random feature vector $\omega \in [0,1]^m \times [0,1]^n$ for all MILP instances. This setting leads to efficiency in training GNN models, but the trained GNNs cannot be applied to datasets with MILP problems of different sizes. Another practice is sampling several independent random features for each MILP instance, but one may suffer from difficulty in training. Such a trade-off also occurs in other GNN tasks~\cites{loukas2019graph,balcilar2021breaking}. In this paper, we generate one random vector for all instances (both training and testing instances) to directly validate our theorems. How to balance the trade-off in practice will be an interesting future topic.

\section{Numerical Experiments}
\label{sec:numerics}

In this section, we experimentally validate our theories on some small-scale examples with $m=6$ and $n=20$. We first randomly generate two datasets $\calD_{1}$ and $\calD_{2}$. Set $\calD_{1}$ consists of $1000$ randomly generate MILPs that are all unfoldable, and there are $460$ feasible MILPs whose optimal solutions are attachable while the others are all infeasible. Set $\calD_{2}$ consists of $1000$ randomly generate MILPs that are all foldable and similar to the example provided in Figure \ref{fig:not_iso_WL}, and there are $500$ feasible MILPs with attachable optimal solution while the others are infeasible. We call SCIP
\cites{BestuzhevaEtal2021OO, BestuzhevaEtal2021ZR}, a state-of-the-art non-commercial MILP solver, to obtain the feasibility and optimal solution for each instance. In our GNNs, we set the number of message-passing layers as $L=2$ and parameterize all the learnable functions $f_{\mathrm{in}}^V,f_{\mathrm{in}}^W,f_\text{out}, f^W_{\text{out}}, \{f_l^V,f_l^W,g_l^V,g_l^W\}_{l=0}^{L}$ as multilayer perceptrons (MLPs). Our codes are modified from \cite{gasse2019exact} and released to \url{https://github.com/liujl11git/GNN-MILP.git}. All the results reported in this section are obtained on the training sets, not an separate testing set. Generalization tests and details of the numerical experiments can be found in the appendix.

\begin{wrapfigure}{rt}{0.48\textwidth}
        \includegraphics[width=0.48\textwidth]{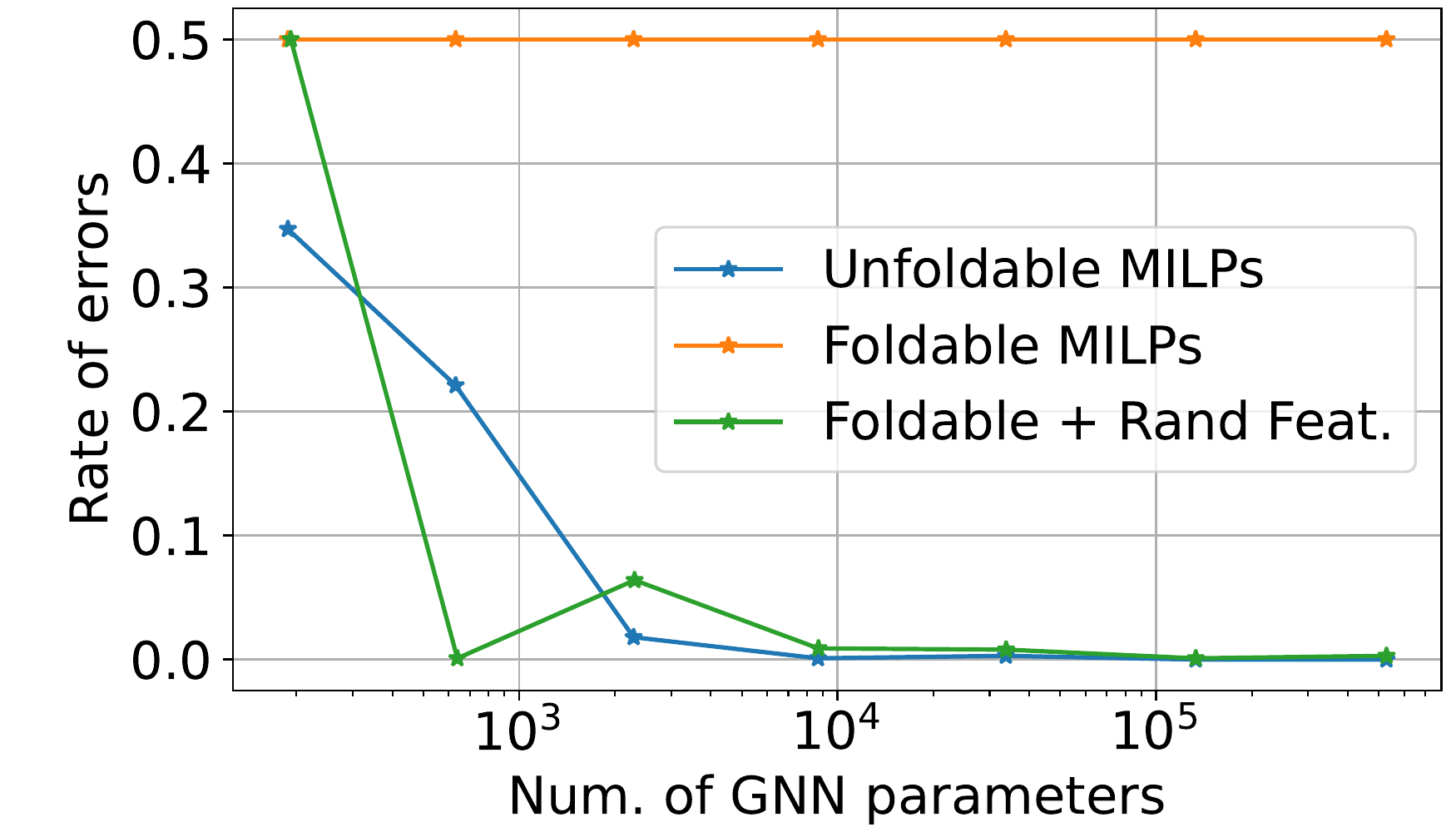}
    \caption{Feasibility}
    \label{fig:feas}
\end{wrapfigure}
\paragraph{\textbf{Feasibility}}
We first test whether GNN can represent the feasibility of an MILP and report our results in Figure \ref{fig:feas}. The orange curve with tag ``Foldable MILPs" presents the training result of GNN on set $\calD_{2}$. It's clear that GNN fails to distinguish the feasible and infeasible MILP pairs that are foldable, \emph{whatever the GNN size we take}.
However, if we train GNNs on those unfoldable MILPs in set $\calD_{1}$, it's clear that the rate of errors goes to zero, as long as the size of GNN is large enough (the number of GNN parameters is large enough). This result validates Theorem \ref{cor:GNNapproxPhi_feas} and the first conclusion in Theorem \ref{cor:GNNapproxPhi_obj}: the existence of GNNs that can accurately predict whether an MILP is feasible (or whether an MILP has a finite optimal objective value). Finally, we append additional random features to the vertex features in GNN. As the green curve with tag ``Foldable + Rand Feat." shown, GNN can perfectly fit the foldable data $\calD_{2}$, which validates Theorem \ref{thm:GNNapproxPhi_feas_random} and the first conclusion in Theorem \ref{thm:GNNapproxPhi_obj_random}.

\paragraph{\textbf{Optimal value and solution}}
Then we take the feasible instances from sets $\calD_{1}$ and $\calD_{2}$ and form new datasets $\calD_{1}^{\mathrm{feasible}}$ and $\calD_{2}^{\mathrm{feasible}}$, respectively. On $\calD_{1}^{\mathrm{feasible}}$ and $\calD_{2}^{\mathrm{feasible}}$, we validate that GNN is able to approximate the optimal objective value and one optimal solution. Figure \ref{fig:obj} shows that, by restricting datasets to unfoldable instances, or by appending random features to the graph, one can train a GNN that has arbitrarily small approximation error for the optimal objective value. Such conclusions validates Theorems \ref{cor:GNNapproxPhi_obj} and \ref{thm:GNNapproxPhi_obj_random}. Figure \ref{fig:solu} shows that GNNs can even approximate an optimal solution of MILP, though it requires a much larger size than the case of approximating optimal objective. Theorems \ref{cor:GNNapproxPhi_solu} and \ref{thm:GNNapproxPhi_solu_random} are validated.

\begin{figure}[h]
    \begin{subfigure}{0.48\textwidth}
    \includegraphics[width=\textwidth]{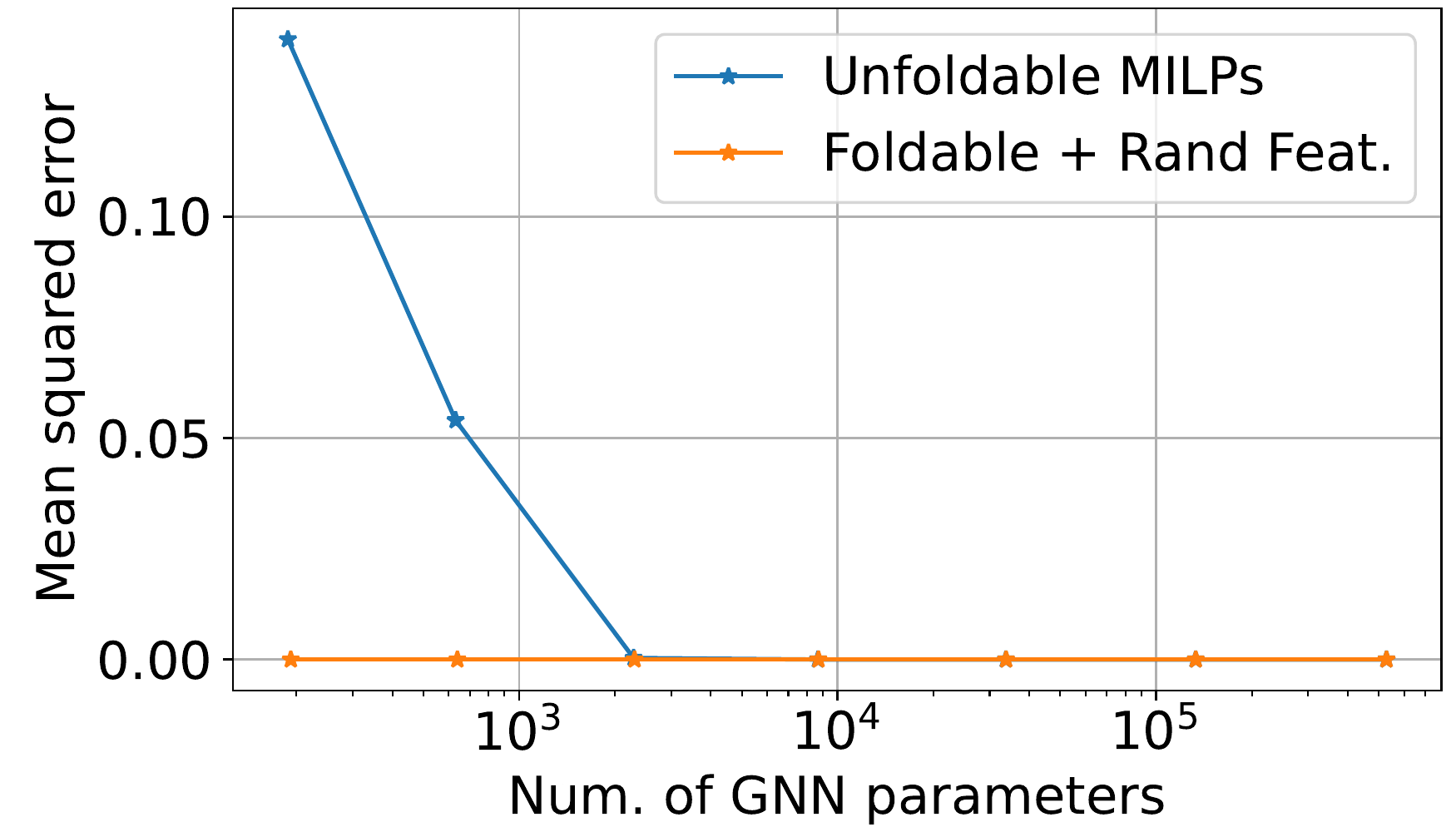}
    \caption{Optimal objective value}
    \label{fig:obj}
    \end{subfigure}
    \begin{subfigure}{0.48\textwidth}
    \includegraphics[width=\textwidth]{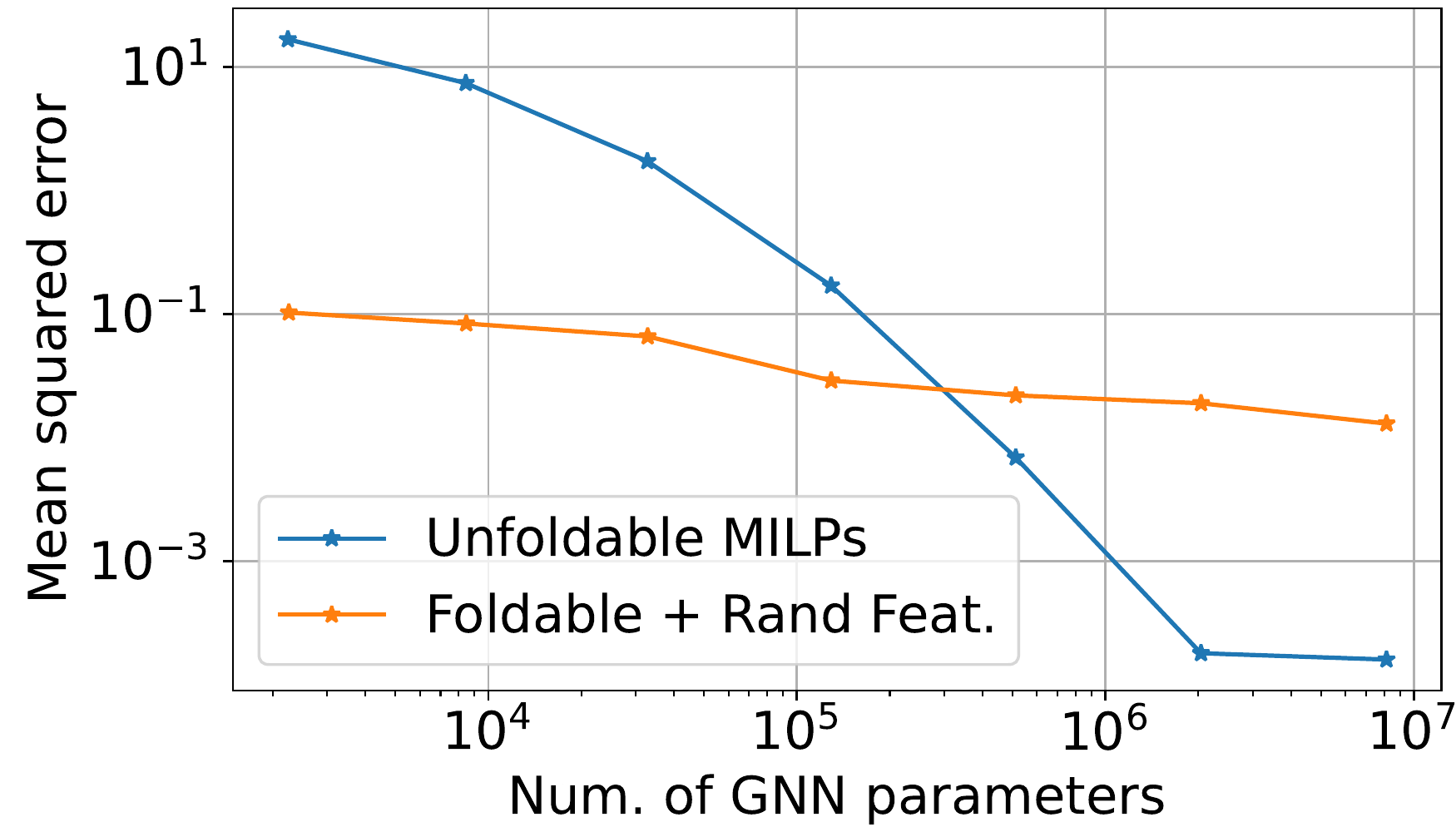}
    \caption{Optimal solution}
    \label{fig:solu}
    \end{subfigure}
    \caption{GNN can approximate $\Phi_{\text{obj}}$, and $\Phi_{\text{solu}}$}
    \label{fig:experiments-mip-opt}
\end{figure}

\section{Conclusion}
\label{sec:conclude}

This work investigates the expressive power of graph neural networks for representing mixed-integer linear programming problems. It is found that the separation power of GNNs bounded by that of WL test is not sufficient for foldable MILP problems, and in contrast, we show that GNNs can approximate characteristics of unfoldable MILP problems with arbitrarily small error. To get rid of the requirement on unfoldability which may not be true in practice, a technique of appending random feature is discuss with theoretical guarantee. We conduct numerical experiments for all the theory. This paper will contribute to the recently active field of applying GNNs for MILP solvers.

\bibliographystyle{amsxport}
\bibliography{references1}

\appendix

\section{Proofs for Section~\ref{sec:unfold}}
\label{sec:proof-unfold}

In this section, we prove the results in Section~\ref{sec:unfold}, i.e., graph neural networks can approximate $\Phi_{\text{feas}}$, $\Phi_{\text{obj}}$, and $\Phi_{\text{solu}}$, on finite datasets of unfoldable MILPs with arbitrarily small error. We would consider more general results on finite-measure subset of $\calG_{m,n}\times \calH_m^V\times \calH_n^W$ which may involve infinite elements.

In our settings, foldability is actually a concept only depending on the MILP instance. In the main text, we utilize the tool of the WL test to define foldability here for readability. Here we present a more complex but equivalent definition merely with the language of MILP.

\begin{definition}[Foldable MILP]\label{def:fold2}
We say that the MILP problem \eqref{MILP} can be folded, or is foldable, if there exist $\mathcal{I} = \{I_1,I_2,\dots,I_s\}$ and $\mathcal{J} = \{J_1,J_2,\dots,J_t\}$ that are partitions of $\{1,2,\dots,m\}$ and $\{1,2,\dots,n\}$, respectively, such that at least one of $\mathcal{I}$ and $\mathcal{J}$ is nontrivial, i.e., $s<m$ or $t<n$, and that the followings hold for any $p\in\{1,2,\dots,s\}$ and $q \in \{1,2,\dots,t\}$: vertices in $\{v_i : i\in I_p\}$ share the same feature in $\calH^V$, vertices in $\{w_j : j\in J_q\}$ share the same feature in $\calH^W$, and all column sums (and row sums) of the submatrix $(A_{i,j})_{i\in I_p,j\in J_q}$ are the same. 
\end{definition}

According to \cite{LP_paper}*{Appendix A}, the color refinement procedure of the WL test converges to the coarsest partitions $\mathcal{I}$ and $\mathcal{J}$ satisfying the conditions in Definition~\ref{def:fold2}, assuming no collision happens. Therefore, one can see that Definition~\ref{def:fold} and Definition~\ref{def:fold2} are equivalent.

Before processing to establish the approximation theorems for GNNs on $\calG_{m,n}\times\calH_m^V\times \calH_n^W\backslash \calD_{\text{foldable}}$, let us define the topology and the measure on $\calG_{m,n}\times \calH_m^V\times \calH_n^W$.

\paragraph{\textbf{Topology and measure}} The space $\calG_{m,n}\times \calH_m^V\times \calH_n^W$ can be equipped with some natural topology and measure. In the construction of $\calG_{m,n}\times \calH_m^V\times \calH_n^W$, let all Euclidean spaces be equipped with standard topology and the Lebesgue measure; let all discrete spaces be equipped with the discrete topology and the discrete measure with each elements being of measure $1$. Then let all unions be disjoint unions and let all products induce the product topology and the product measure. Thus, the topology and the measure on $\calG_{m,n}\times \calH_m^V\times \calH_n^W$ are defined. We use the notation $\text{Meas}(\cdot)$ for the measure on $\calG_{m,n}\times \calH_m^V\times \calH_n^W$.

Now we can state the following three theorems that can be viewed as generalized version of Theorem~\ref{cor:GNNapproxPhi_feas}, \ref{cor:GNNapproxPhi_obj}, and \ref{cor:GNNapproxPhi_solu}.

\begin{theorem}\label{thm:GNNapproxPhi_feas}
    Let $X\subset \calG_{m,n}\times\calH_m^V\times \calH_n^W\backslash \calD_{\text{foldable}}$ be measurable with finite measure. For any $\epsilon>0$, there exists some $F\in\calF_{\text{GNN}}$, such that
	\begin{equation*}
		\text{Meas}\left(\left\{(G,H)\in X:\mathbb{I}_{F(G,H)>1/2}\neq \Phi_{\text{feas}}(G,H)\right\}\right)<\epsilon,
	\end{equation*}
    where $\mathbb{I}_{\cdot}$ is the indicator function, i.e., $\mathbb{I}_{F(G,H)>1/2} = 1$ if $F(G,H)>1/2$ and $\mathbb{I}_{F(G,H)>1/2} = 0$ otherwise.
\end{theorem}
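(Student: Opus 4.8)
The plan is to reduce the statement to a \emph{universal approximation} result for GNNs, the two main ingredients being (i) the fact that, on unfoldable instances, the separation power of $\Phi_{\text{feas}}$ is no stronger than that of the WL test (equivalently, of $\calF_{\text{GNN}}$), and (ii) a device to cope with the discontinuity of the $\{0,1\}$-valued target $\Phi_{\text{feas}}$, which I would supply via Lusin's theorem on the finite-measure set $X$.

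First I would establish the key separation lemma: if $(G,H),(\hat G,\hat H)\in X$ (hence both unfoldable) satisfy $(G,H)\sim(\hat G,\hat H)$, then they are isomorphic, i.e. $(\sigma_V,\sigma_W)\ast(G,H)=(\hat G,\hat H)$ for some $\sigma_V\in S_m$, $\sigma_W\in S_n$. The point is that unfoldability means color refinement produces an all-distinct (discrete) stable coloring; when two graphs both admit discrete stable colorings and share the same output color multisets, the unique color-matching bijection is forced to preserve vertex features and all edge weights, hence is a graph isomorphism. This is the MILP analogue of the corresponding fact for LP-graphs in \cite{LP_paper}, and I would either cite it or reprove it by induction on the refinement rounds. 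Combined with the permutation invariance of $\Phi_{\text{feas}}$, this yields $\Phi_{\text{feas}}(G,H)=\Phi_{\text{feas}}(\hat G,\hat H)$: on the unfoldable set, $\Phi_{\text{feas}}$ is constant on WL-equivalence classes, so via Theorem~\ref{thm:separate_GNN} the family $\calF_{\text{GNN}}$ has separation power at least as strong as $\Phi_{\text{feas}}$ there.

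Next I would handle the discontinuity and the possibly infinite cardinality of $X$. Since $X$ has finite measure and $\Phi_{\text{feas}}$ is measurable, Lusin's theorem produces a compact set $K\subset X$ with $\text{Meas}(X\setminus K)<\epsilon$ on which $\Phi_{\text{feas}}$ is continuous. On $K$ the target is now a continuous, permutation-invariant function whose separation power is bounded by WL, so I can invoke the GNN universal approximation theorem (in the form of \cite{azizian2020expressive} adapted to the mixed discrete--continuous input space, as in \cite{LP_paper}) to obtain $F\in\calF_{\text{GNN}}$ with $\sup_{(G,H)\in K}|F(G,H)-\Phi_{\text{feas}}(G,H)|<1/4$. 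Because $\Phi_{\text{feas}}$ only takes values in $\{0,1\}$, this bound forces $\mathbb{I}_{F(G,H)>1/2}=\Phi_{\text{feas}}(G,H)$ for every $(G,H)\in K$. The error set is then contained in $X\setminus K$, whose measure is below $\epsilon$, which is exactly the desired estimate.

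The main obstacle is the universal approximation step on $K$: one must verify that the cited Stone--Weierstrass-type theorem genuinely applies to our input space $\calG_{m,n}\times\calH_m^V\times\calH_n^W$, which is not a single Euclidean domain but a product of Euclidean factors with discrete factors (the relation symbols $\circ_i$, the integrality flags $\tau_j$) carrying the disjoint-union/product topology defined above. Concretely I must check that $\calF_{\text{GNN}}$ separates points up to WL-equivalence on $K$ (which is precisely what the separation lemma of the first step delivers) and is closed under the algebraic operations needed for a Stone--Weierstrass argument, handling the discrete coordinates by treating $K$ as a finite union of continuous pieces indexed by the discrete values. A secondary, more routine technicality is confirming that unfoldability is inherited by every pair arising in the approximation (so the separation lemma always applies) and that $K$ may be taken inside the unfoldable region without losing compactness.
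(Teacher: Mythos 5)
Your proposal is correct and follows essentially the same route as the paper's own proof: the isomorphism lemma for unfoldable WL-equivalent instances (the paper's Lemma~\ref{lem:isoMILP}), Lusin's theorem to extract a compact subset $E\subset X$ of nearly full measure on which $\Phi_{\text{feas}}$ is continuous, and the Stone--Weierstrass-type universal approximation theorem (the paper's Theorem~\ref{thm:uniapprox_scal}) to get a GNN within $1/2$ (you use $1/4$, which works equally well) of $\Phi_{\text{feas}}$ on $E$, so that the thresholded output is exact there and the error set lies in $X\setminus E$. The technicalities you flag are handled in the paper exactly as you anticipate: the approximation theorem is stated for general topological linear feature spaces, and compactness of $E$ inside the unfoldable region is automatic since $X$ itself excludes $\calD_{\text{foldable}}$.
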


\begin{theorem}\label{thm:GNNapproxPhi_obj}
    Let $X\subset \calG_{m,n}\times\calH_m^V\times \calH_n^W\backslash \calD_{\text{foldable}}$ be measurable with finite measure. For any $\epsilon,\delta>0$, the followings hold:
    \begin{itemize}
        \item[(i)] There exists some $F_1\in\calF_{\text{GNN}}$ such that
        \begin{equation*}
            \text{Meas}\left(\left\{(G,H)\in X:\mathbb{I}_{F_1(G,H)>1/2}\neq \mathbb{I}_{\Phi_{\text{obj}}(G,H) \in \bR}\right\}\right)<\epsilon.
        \end{equation*}
        \item[(ii)] There exists some $F_2\in\calF_{\text{GNN}}$ such that
	    \begin{equation*}
		    \text{Meas}\left(\left\{(G,H)\in X \cap \Phi_{\text{obj}}^{-1}(\bR):| F_2(G,H) -  \Phi_{\text{obj}}(G,H)|>\delta\right\}\right)<\epsilon.
    	\end{equation*}
    \end{itemize}
\end{theorem}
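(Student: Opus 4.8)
The plan is to deduce both parts from a single \emph{approximation-in-measure} principle, the same one underlying the feasibility result in Theorem~\ref{thm:GNNapproxPhi_feas}: on the unfoldable region the GNN family separates every pair of non-isomorphic MILPs, so any measurable permutation-invariant target can be matched by some $F\in\calF_{\text{GNN}}$ outside a set of arbitrarily small measure. Part~(i) is then a binary-classification instance of this principle, with target the invariant indicator $\mathbb{I}_{\Phi_{\text{obj}}\in\bR}$ (``feasible and bounded below''), while part~(ii) is the real-valued version, with target $\Phi_{\text{obj}}$ restricted to $\Phi_{\text{obj}}^{-1}(\bR)$.

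First I would isolate the separation core. For $(G,H)\in X$, unfoldability means the WL test assigns pairwise distinct colors to all $m+n$ vertices (Definition~\ref{def:fold}); hence if $(G,H)\sim(\hat G,\hat H)$ with both unfoldable, the matching multiset of distinct colors induces a unique vertex bijection, which, the coloring being stable, is a graph isomorphism and thus a permutation in $S_m\times S_n$. Combined with Theorem~\ref{thm:separate_GNN}, this shows that $\calF_{\text{GNN}}$ distinguishes any two non-isomorphic elements of $X$. Consequently every continuous permutation-invariant function is automatically WL-compatible on $X$, so the universal approximation theorem for GNNs (\cites{azizian2020expressive,geerts2022expressiveness}) lets $\calF_{\text{GNN}}$ uniformly approximate, on any compact subset of $X$, an arbitrary continuous invariant function.

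Next I would carry out the Lusin reduction. Both targets are permutation-invariant and measurable on $X$ (for part~(i) this rests on the measurability of the feasible and bounded-below regions, and for part~(ii) on that of the value map $\Phi_{\text{obj}}$ on $\Phi_{\text{obj}}^{-1}(\bR)$). By inner regularity of $\text{Meas}$ I would first pass to a compact $K\subset X$ with $\text{Meas}(X\setminus K)$ small, then invoke an $(S_m\times S_n)$-equivariant version of Lusin's theorem to obtain a continuous invariant function agreeing with the target off a further small-measure set. Approximating this continuous function by a GNN as above, I threshold at $1/2$ for part~(i) and keep the direct output for part~(ii); in both cases the total exceptional measure is controlled by $\epsilon$, and for part~(ii) the uniform error on the continuity set is forced below $\delta$.

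The main obstacle is the interface between measure theory and invariance: standard Lusin produces a continuous approximant but not an invariant one, so I must either symmetrize over the finite group $S_m\times S_n$ (checking that symmetrization preserves both continuity and the smallness of the exceptional set) or argue on the quotient $X/(S_m\times S_n)$. A secondary but genuinely technical point is establishing Borel measurability of $\Phi_{\text{obj}}$ and of the feasibility and boundedness decompositions of the data space; this is where most of the careful work lies, whereas the separation and universal-approximation ingredients are inherited essentially verbatim from the proof of Theorem~\ref{thm:GNNapproxPhi_feas}.
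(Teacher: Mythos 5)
Your proposal is correct and follows the same architecture as the paper's proof: establish measurability of the targets (the paper's Lemma~\ref{lem:obj_meas} gives measurability of $\Phi_{\text{obj}}$, hence of $\mathbb{I}_{\Phi_{\text{obj}}(\cdot,\cdot)\in\bR}$), apply Lusin's theorem to pass to a compact set carrying a continuous restriction, use unfoldability to upgrade WL-equivalence to isomorphism (the paper's Lemma~\ref{lem:isoMILP}), invoke the Stone--Weierstrass-type universal approximation result (Theorem~\ref{thm:uniapprox_scal}), and finish by thresholding at $1/2$ for part (i) and taking the raw output for part (ii); the paper literally reduces both parts to the argument of Theorem~\ref{thm:GNNapproxPhi_feas} applied to $\bigl(X,\mathbb{I}_{\Phi_{\text{obj}}(\cdot,\cdot)\in\bR}\bigr)$ and $\bigl(X\cap\Phi_{\text{obj}}^{-1}(\bR),\Phi_{\text{obj}}\bigr)$.

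The one place you diverge is in what you call the main obstacle, and it is in fact a non-issue for this theorem. You do not need an $(S_m\times S_n)$-equivariant Lusin theorem, nor symmetrization, nor a quotient argument. The reason is twofold. First, the paper uses Lusin in its restriction form (Theorem~\ref{thm:lusin}): it produces a compact $E\subset X$ on which the \emph{original} target, restricted, is continuous --- there is no continuous extension whose invariance would need to be checked, and the target itself is permutation invariant as a function on the whole space. Second, the scalar universal approximation theorem (Theorem~\ref{thm:uniapprox_scal}) does not require its compact domain to be closed under the $S_m\times S_n$ action; it only requires the compatibility condition \eqref{asp:separate} on pairs in $E$, and this follows immediately from Lemma~\ref{lem:isoMILP} together with global invariance of $\Phi_{\text{obj}}$: if $(G,H)\sim(\hat{G},\hat{H})$ with both unfoldable, they are isomorphic, hence have equal objective values. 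Your symmetrization workaround (intersecting group translates of the Lusin compact set, at the cost of a factor $|S_m\times S_n|$ in the measure budget) is harmless and would also work, but the paper reserves exactly that device for the node-level equivariant case, Theorem~\ref{thm:GNNapproxPhi_solu}, where the equivariant approximation theorem (Theorem~\ref{thm:uniapprox_vec}) genuinely does demand a group-stable compact domain. Your other assessment --- that the measurability of $\Phi_{\text{obj}}$ is where the real technical work lies --- matches the paper, which isolates it in Lemma~\ref{lem:obj_meas}.
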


\begin{theorem}\label{thm:GNNapproxPhi_solu}
    Let $X\subset \calD_{\text{solu}}\backslash \calD_{\text{foldable}}\subset \calG_{m,n}\times\calH_m^V\times \calH_n^W\backslash \calD_{\text{foldable}}$ be measurable with finite measure. For any $\epsilon,\delta>0$, there exists $F_W\in\calF_{\text{GNN}}^W$ such that
	\begin{equation*}
		\text{Meas}\left(\left\{(G,H)\in X:\| F_W(G,H) -  \Phi_{\text{solu}}(G,H)\|>\delta\right\}\right)<\epsilon.
	\end{equation*}
\end{theorem}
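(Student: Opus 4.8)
The plan is to reuse the three-stage scheme already applied to the scalar targets $\Phi_{\text{feas}}$ and $\Phi_{\text{obj}}$ in Theorems~\ref{thm:GNNapproxPhi_feas} and \ref{thm:GNNapproxPhi_obj}, now in the node-level equivariant setting: (i) replace the merely measurable map $\Phi_{\text{solu}}$ by a continuous equivariant map on a large compact invariant subdomain, (ii) record that on the unfoldable domain WL-indistinguishability coincides with isomorphism, and (iii) invoke the equivariant counterpart of the separation result Theorem~\ref{thm:separate_GNN} to produce the approximating $F_W$. The genuinely new feature, compared with the scalar theorems, is that $F_W$ must emit the correct number at \emph{each} variable vertex, so whole-graph separation no longer suffices --- the construction must resolve the individual vertices $w_j$ within each graph.

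First I would pass to continuous data. Decomposing $X$ along the finitely many discrete feature patterns (the constraint senses $\circ_i$ and the integrality flags $\tau_j$) places it inside a Euclidean space on each piece, where $\text{Meas}(\cdot)$ is inner regular; since $\text{Meas}(X)<\infty$ and $\Phi_{\text{solu}}$ is measurable, Lusin's theorem supplies a compact $K_0\subset X$ with $\text{Meas}(X\setminus K_0)<\epsilon$ on which $\Phi_{\text{solu}}$ is continuous. To make the domain invariant I would set $K=\bigcup_{(\sigma_V,\sigma_W)\in S_m\times S_n}(\sigma_V,\sigma_W)\ast K_0$, which is compact and lies in $\calD_{\text{solu}}\setminus\calD_{\text{foldable}}$ because that domain is permutation invariant; by equivariance of $\Phi_{\text{solu}}$ each restriction $\Phi_{\text{solu}}|_{(\sigma_V,\sigma_W)\ast K_0}$ is continuous, so the finite-closed-sets pasting lemma makes $\Phi_{\text{solu}}|_K$ a continuous equivariant map. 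Any later failure of $F_W$ on $X$ will be confined to $X\setminus K_0$, keeping its measure below $\epsilon$.

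Next comes the role of unfoldability. On $\calD_{\text{solu}}\setminus\calD_{\text{foldable}}$ the stable coloring of Algorithm~\ref{alg:WL} is discrete, so if $(G,H)\sim(\hat G,\hat H)$ the bijection matching equal colors is a graph isomorphism; hence WL-equivalence equals isomorphism there, and $(\hat G,\hat H)=(\sigma_V,\sigma_W)\ast(G,H)$ for some permutation. Equivariance then gives $\Phi_{\text{solu}}(\hat G,\hat H)=\sigma_W\Phi_{\text{solu}}(G,H)$, which is exactly the statement that $\Phi_{\text{solu}}|_K$ has separation power no stronger than the WL test. I would then apply the node-level, equivariant universal-approximation theorem for $\calF_{\text{GNN}}^W$ --- the vector-output analogue of Theorem~\ref{thm:separate_GNN}, obtained by the same Stone--Weierstrass argument used for $\Phi_{\text{feas}}$ and $\Phi_{\text{obj}}$ --- to get $F_W\in\calF_{\text{GNN}}^W$ with $\sup_{(G,H)\in K}\|F_W(G,H)-\Phi_{\text{solu}}(G,H)\|<\delta$, and combine it with the measure bound from the previous step.

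The main obstacle is this final equivariant approximation, specifically the per-vertex resolution it demands. Unfoldability is exactly what unlocks it: because the stable coloring is discrete, Theorem~\ref{thm:separate_GNN} lets us choose GNN layers whose last-level embeddings $t_j^L$ are pairwise distinct inside each graph while $(\bar{s}^L,\bar{t}^L)$ determines the instance up to isomorphism, so a continuous read-out $r_W(\bar{s}^L,\bar{t}^L,t_j^L)$ can deliver the prescribed value at each $w_j$. Two checks remain delicate. First, the Stone--Weierstrass step must be run in the vector-valued equivariant category rather than the scalar invariant one, approximating each output coordinate through its own vertex embedding while preserving equivariance throughout; I expect this to be the heaviest part. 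Second, one must confirm that $\Phi_{\text{solu}}$ --- whose tie-breaking definition is deferred to Appendix~\ref{sec:solumap} --- is genuinely measurable and equivariant, since both properties are used above without proof here.
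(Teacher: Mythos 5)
Your proposal is correct and follows essentially the same route as the paper's proof: measurability of $\Phi_{\text{solu}}$ (Lemma~\ref{lem:solu_meas}) plus Lusin's theorem, symmetrization to a permutation-invariant compact set, the Lemma~\ref{lem:isoMILP}-type argument converting WL-equivalence into isomorphism on the unfoldable domain, and the equivariant universal approximation theorem (Theorem~\ref{thm:uniapprox_vec}, which is the analogue of Theorem~\ref{thm:uniapprox_scal} rather than of Theorem~\ref{thm:separate_GNN}). The only substantive deviation is your symmetrization step --- taking the union $\bigcup_{(\sigma_V,\sigma_W)}(\sigma_V,\sigma_W)\ast K_0$ and gluing continuity via the pasting lemma, versus the paper's intersection $\bigcap_{(\sigma_V,\sigma_W)}(\sigma_V,\sigma_W)\ast E'$ after running Lusin at tolerance $\epsilon/|S_m\times S_n|$ --- and both are valid, with yours avoiding the factor $|S_m\times S_n|$ and the preliminary reduction to invariant $X$.
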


The proof framework is similar to those in \cite{LP_paper}, and consists of two steps: i) show that measurability of the target mapping and apply Lusin's theorem to obtain a continuous mapping on a compact domain; ii) use Stone-Weierstrass-type theorem to show the uniform approximation result of graph neural networks.

We first prove that $\Phi_{\text{feas}}$ and $\Phi_{\text{obj}}$ are both measurable in the following two lemmas. The optimal solution mapping $\Phi_{\text{solu}}$ will be defined rigorously and proved as measurable in Section~\ref{sec:solumap}.

\begin{lemma}\label{lem:feas_meas}
The feasibility mapping $\Phi_{\text{feas}}: \calG_{m,n}\times \calH_m^V\times \calH_n^W\rightarrow \{0,1\}$ is measurable.
\end{lemma}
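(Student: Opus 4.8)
The plan is to reduce global measurability to measurability on finitely many Euclidean pieces, and on each piece to exhibit the feasible-parameter set as the projection of a closed set, which is automatically Borel.

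First I would decompose the domain according to its discrete coordinates. The feature spaces carry only finitely many discrete degrees of freedom: the constraint senses $\circ\in\{\leq,=,\geq\}^m$, the integrality pattern $\tau\in\{0,1\}^n$ (equivalently the set $I$), and---because $\calH^W$ treats $\bR\cup\{-\infty\}$ and $\bR\cup\{+\infty\}$ as disjoint unions---the patterns recording which $l_j=-\infty$ and which $u_j=+\infty$. Fixing all of these data selects one of finitely many measurable cells, on which the remaining continuous parameters $\theta=(A,b,c,\text{finite entries of }l,u)$ range over a Euclidean space carrying Lebesgue measure. Since $\text{Meas}(\cdot)$ is built as a product/disjoint-union measure and there are only finitely many cells, it suffices to prove that $\Phi_{\text{feas}}^{-1}(1)$ intersected with each cell is Borel.

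On a fixed cell I would look at the joint constraint set
\[
C \;=\; \bigl\{\,(\theta,x)\in (\text{cell})\times\bR^n : Ax\circ b,\ l\le x\le u,\ x_j\in\bZ\ \forall j\in I\,\bigr\}.
\]
The key point is that every defining condition is closed: the map $(A,x)\mapsto Ax$ is continuous, so each row relation $(Ax)_i\leq b_i$, $=b_i$, or $\geq b_i$ is closed; each finite box constraint is closed while each infinite bound imposes a vacuous (hence closed) condition on the cell; and, crucially, the integrality constraints cut out $\bZ^{|I|}\times\bR^{n-|I|}$, which is closed in $\bR^n$. Hence $C$ is a closed subset of a Euclidean space. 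This is exactly the place where integrality, which at first glance seems to complicate matters relative to the LP case, turns out to be harmless.

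Finally I would push $C$ down to the parameter space. Writing $C=\bigcup_{k\ge 1}\bigl(C\cap \bar D_k\bigr)$ with $\bar D_k$ the closed ball of radius $k$, each $C\cap\bar D_k$ is compact, so its image under the continuous coordinate projection $\pi(\theta,x)=\theta$ is compact; therefore $\Phi_{\text{feas}}^{-1}(1)\cap(\text{cell})=\pi(C)=\bigcup_k\pi(C\cap\bar D_k)$ is $F_\sigma$, hence Borel and in particular $\text{Meas}$-measurable. Assembling the finitely many cells yields measurability of $\Phi_{\text{feas}}$. The only delicate part is the bookkeeping of the disjoint-union topology and measure (especially the treatment of the infinite bounds in the cell decomposition), combined with the standard fact that the projection of a closed Euclidean set is $F_\sigma$; I expect the main obstacle to be stating the cell decomposition cleanly rather than any substantial analytic difficulty. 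As an alternative route one could instead enumerate the countably many integer assignments to the variables in $I$ and invoke measurability of LP feasibility from \cite{LP_paper}, but the closed-projection argument above is self-contained and disposes of the integrality constraints in one stroke.
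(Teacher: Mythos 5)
Your proof is correct, and it takes a genuinely different route from the paper's. The paper dispatches this lemma by adapting the proof of Lemma F.1 in \cite{LP_paper}: there the feasible parameter set is expressed through a continuous constraint-violation function, as a countable intersection over $r\in\mathbb{N}_+$ of countable unions over rational points $x$ of open sets $\{\text{violation}(\cdot\,,x)<1/r\}$, and the paper's stated modification is simply to replace $\bR^n$, $\mathbb{Q}^n$ by $\bR^k\times\bZ^{n-k}$, $\mathbb{Q}^k\times\bZ^{n-k}$ (the same device carried out in full in the paper's proof of Lemma~\ref{lem:solu_meas}). You instead fix the finitely many discrete data ($\circ$, $\tau$, and the finiteness patterns of $l$ and $u$), observe that the joint parameter--solution set is closed because integrality cuts out the closed set $\bZ^{|I|}\times\bR^{n-|I|}$, and conclude via $\sigma$-compactness of closed Euclidean sets that its projection onto the parameters is $F_\sigma$, hence Borel. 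The comparison is instructive: the paper's route is nearly free given the LP groundwork, but the equivalence it rests on --- ``infeasible implies the violation is bounded away from zero'' --- is supplied by Farkas' lemma in the LP case and genuinely fails for MILPs with unbounded integer variables and irrational data. For instance, $\sqrt{2}x_1-x_2=1/2$ with $x_1,x_2\in\bZ$ and no bounds is infeasible, yet $\inf_{x\in\bZ^2}|\sqrt{2}x_1-x_2-1/2|=0$ because $\bZ\sqrt{2}+\bZ$ is dense in $\bR$; so the direct substitution needs extra care (it is sound where boundedness forces compactness, as in Lemma~\ref{lem:solu_meas}, or after enumerating integer assignments as in the alternative you sketch, which reduces the problem to countably many LP-feasibility sets). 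Your projection argument is immune to this subtlety because it only ever manipulates exactly feasible points; it handles integrality in one stroke, is self-contained, and gives the slightly stronger conclusion that $\Phi_{\text{feas}}^{-1}(1)$ meets each cell in an $F_\sigma$ set.
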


\begin{proof}
It suffices to prove that $\Phi_{\text{feas}}^{-1}(1)$ is measurable, and the proof is almost the same as the that of \cite{LP_paper}*{Lemma F.1}. The difference is that we should consider any fixed $\tau\in \{0,1\}^n$. Assuming $\tau = (0,\dots,0,1,\dots, 1)$ where $0$ and $1$ appear for $k$ and $n-k$ times respectively without loss of generality, we can replace $\bR^n$ and $\mathbb{Q}^n$ in the proof of \cite{LP_paper}*{Lemma F.1} by $\bR^k\times \bZ^{n-k}$ and $\mathbb{Q}^k\times \bZ^{n-k}$ to get the proof in the MILP setting.
\end{proof}

\begin{lemma}\label{lem:obj_meas}
The optimal objective value mapping $\Phi_{\text{obj}} : \calG_{m,n}\times\calH_m^V\times \calH_n^W \rightarrow \bR \cup \{\infty, - \infty\}$ is measurable.
\end{lemma}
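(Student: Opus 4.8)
The plan is to reduce the measurability of $\Phi_{\text{obj}}$ to that of the \emph{linear-programming} optimal-value mapping, which is available from \cite{LP_paper} (it is the $I=\emptyset$ specialization of the present statement), by resolving the integrality constraints through a countable infimum of LP values.

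First I would decompose the domain according to its discrete coordinates. The sense vector $\circ\in\{\le,=,\ge\}^m$ sitting inside $h^V$ and the integrality indicator $\tau\in\{0,1\}^n$ sitting inside $h^W$ each take only finitely many values, and by the construction of the measure the space $\calG_{m,n}\times\calH_m^V\times \calH_n^W$ is a disjoint union over the pairs $(\circ,\tau)$ of the corresponding pieces. Since a map is measurable if and only if its restriction to each piece of a finite measurable partition is measurable, it suffices to fix $\circ$ and $\tau$ — hence fix the integer index set $I=\{j:\tau_j=1\}$ — and prove measurability of $\Phi_{\text{obj}}$ on that piece.

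Next I would express $\Phi_{\text{obj}}$ on a fixed piece as a countable infimum of LP values. For each integer assignment $z\in\bZ^{I}$ define $g_z(G,H)$ to be the optimal value of the linear program obtained from $(G,H)$ by appending the equality constraints $x_j=z_j$ for $j\in I$ (equivalently, tightening the bounds of those coordinates to $z_j$); assignments that violate feasibility yield $g_z\equiv+\infty$, consistently with the convention that the infimum over an empty feasible set is $+\infty$. Because $X_{\mathrm{feasible}}$ is the union over $z\in\bZ^{I}$ of its slices with $x_I=z$, the infimum of $c^\top x$ over the whole feasible set equals the infimum of the slice-infima, so
\[
\Phi_{\text{obj}}(G,H)=\inf_{z\in\bZ^{I}} g_z(G,H).
\]
For each fixed $z$ the map $(G,H)\mapsto(\text{LP data of }g_z)$ is affine, hence continuous, and the LP optimal value is measurable in its data by the linear-programming version of this lemma in \cite{LP_paper}; therefore each $g_z$ is measurable as an $\bR\cup\{\pm\infty\}$-valued function. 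An infimum of countably many measurable extended-real-valued functions is measurable, so $\Phi_{\text{obj}}$ is measurable on the piece, and hence on the whole space. The infinite values are captured automatically: $+\infty$ on infeasible instances (consistent with $\Phi_{\text{feas}}^{-1}(0)$ being measurable, by Lemma~\ref{lem:feas_meas}) and $-\infty$ on unbounded ones, so no separate treatment of $\{\pm\infty\}$ is needed.

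The main obstacle — and the only nonroutine input — is the measurability of the LP optimal-value mapping in its data, which underlies the measurability of each $g_z$; I rely on \cite{LP_paper} for this. The delicate point to check is that the LP result applies on the full feature space with possibly infinite bounds $l,u$ on the remaining continuous variables, since after fixing $z$ those variables may still carry $\pm\infty$ bounds. If one prefers not to invoke the LP value lemma as a black box, a self-contained alternative is to show directly that $\{(G,H):\Phi_{\text{obj}}(G,H)<q\}$ is measurable for every rational $q$: for each $z$ this set is the projection along the continuous variables of the Borel set cut out by the (bilinear) feasibility relations together with $c^\top x<q$, hence analytic and therefore Lebesgue measurable, and a countable union over $z$ preserves measurability. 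I would nonetheless favor the LP-reduction route to avoid projections of analytic sets.
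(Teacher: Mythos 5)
Your proposal is correct, but it is organized quite differently from the paper's proof. The paper does not factor $\Phi_{\text{obj}}$ through the LP optimal-value mapping; instead, it adapts the \emph{proofs} of \cite{LP_paper}*{Lemmas F.1 and F.2}: after fixing $\tau$ (and $\circ$), it characterizes the sublevel set $\{\Phi_{\text{obj}}<\phi\}$ as the set of instances admitting a feasible $x$ with $c^\top x\leq \phi-1/r$ for some $r\in\mathbb{N}_+$, and then expresses this via countable unions and intersections, over approximating points in $\mathbb{Q}^k\times\bZ^{n-k}$ (replacing $\mathbb{Q}^n$ from the LP case), of sets cut out by a continuous constraint-violation function --- exactly the machinery displayed in the paper's proof of Lemma~\ref{lem:solu_meas}. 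You instead slice the feasible set along the integer lattice, write $\Phi_{\text{obj}}=\inf_{z\in\bZ^I}g_z$ with each $g_z$ an LP value, and invoke the LP lemma as a black box together with the standard fact that a countable infimum of measurable extended-real-valued functions is measurable. Your route is more modular and avoids reopening the LP proofs; the paper's route is self-contained at the level of technique, reuses the violation-function apparatus it needs anyway for $\Phi_{\text{solu}}$, and produces the relevant sets explicitly as Borel sets.

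Two fine points in your argument deserve attention, though neither sinks it. First, $g_z=\Phi^{\mathrm{LP}}_{\text{obj}}\circ T_z$, where $T_z$ (appending the rows $x_j=z_j$) is an affine embedding whose image is a measure-zero affine subspace of the larger LP data space; a composition of a continuous map followed by a merely Lebesgue-measurable function need not be measurable. This is harmless here because the LP lemma, by its proof, yields \emph{Borel} measurability (countable unions and intersections of sets defined by continuous inequalities), which is stable under precomposition with continuous maps; alternatively, substitute $x_I=z$ out of the problem so that $T_z$ becomes a surjective affine map onto a lower-dimensional data space, under which preimages of Lebesgue-measurable sets are Lebesgue measurable, at the cost of separately intersecting with the measurable set $\{(G,H): l_I\leq z\leq u_I\}$. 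Second, your parenthetical ``equivalently, tightening the bounds of those coordinates to $z_j$'' is not quite an equivalence: that encoding discards the requirement $l_j\leq z_j\leq u_j$, so it needs the same extra intersection; the formulation you primarily use, appending equality constraints while retaining the original bounds, is the correct one.
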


\begin{proof}
Consider any $\phi\in\bR$, $\Phi_{\text{obj}}(G,H)<\phi$ if and only if for the MILP problem associated to $(G,H)$, there exists a feasible solution $x$ and some $r\in\mathbb{N}_+$ such that $c^\top x \leq \phi - 1 / r$. The rest of the proof can be done using the same techniques as in the proofs of \cite{LP_paper}*{Lemma F.1 and Lemma F.2}, with the difference pointed in the proof of Lemma~\ref{lem:feas_meas}.
\end{proof}

The Lusin's theorem, stated as follows, guarantees that any measurable function can be constricted on a compact such that only a small portion of domain is excluded and that the resulting function is continuous. The Lusin's theorem also applies for mappings defined on domains in $ \calG_{m,n}\times \calH_m^V\times \calH_n^W$. This is because that $ \calG_{m,n}\times \calH_m^V\times \calH_n^W$ is isomorphic to the disjoint union of finitely many Euclidean spaces.

\begin{theorem}[Lusin's theorem {\cite{evans2018measure}*{Theorem 1.14}}] \label{thm:lusin}
    Let $\mu$ be a Borel regular measure on $\bR^n$ and let $f:\bR^n\rightarrow\bR^m$ be $\mu$-measurable. Then for any $\mu$-measurable $X\subset \bR^n$ with $\mu(X)<\infty$ and any $\epsilon>0$, there exists a compact set $E\subset X$ with $\mu(X\backslash E)<\epsilon$, such that $f|_E$ is continuous.
\end{theorem}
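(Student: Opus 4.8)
The plan is to give the standard constructive proof, approximating $f$ by functions that are locally constant on finitely many disjoint compact pieces, hence continuous, and then passing to a uniform limit. First I would reduce the total measure-loss budget to a geometric series: it suffices to produce, for each $k\in\mathbb{N}$, a compact set $E_k\subset X$ with $\mu(X\setminus E_k)<\epsilon/2^{k}$ on which $f$ admits a continuous approximant $g_k$ satisfying $\|g_k-f\|\le 1/k$. Taking $E=\bigcap_k E_k$ then yields a compact subset of $X$ with $\mu(X\setminus E)<\epsilon$ on which the $g_k$ converge uniformly to $f$.

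To build each $g_k$, I would partition the target space $\bR^m$ into countably many pairwise disjoint Borel sets $\{Q_{k,i}\}_{i}$ of diameter at most $1/k$ (for instance half-open dyadic cubes), pick a representative $y_{k,i}\in Q_{k,i}$, and set $A_{k,i}=X\cap f^{-1}(Q_{k,i})$. These sets are $\mu$-measurable and partition $X$, and since $\mu(X)<\infty$ their measures are summable. Here comes the key analytic input: because $\mu$ is Borel regular and $\mu(X)<\infty$, its restriction to $X$ is inner regular with respect to compact sets, so for each $i$ I can choose a compact $K_{k,i}\subset A_{k,i}$ losing arbitrarily little measure; discarding all but finitely many indices, whose aggregate measure is small by summability, I obtain a \emph{finite} disjoint union $E_k=\bigcup_{i\in F_k}K_{k,i}$ with $\mu(X\setminus E_k)<\epsilon/2^{k}$. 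Defining $g_k\equiv y_{k,i}$ on each $K_{k,i}$ produces a function that is continuous on $E_k$ --- crucially because finitely many disjoint compact sets lie at positive mutual distance --- and that obeys $\|g_k-f\|\le 1/k$ on $E_k$ by the diameter bound.

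Finally I would set $E=\bigcap_{k}E_k$, a compact subset of $X$ with $\mu(X\setminus E)\le\sum_k\mu(X\setminus E_k)<\epsilon$. On $E$ each $g_k$ is continuous and $\|g_k-f\|\le 1/k$, so $g_k\to f$ uniformly on $E$, whence $f|_E$, being a uniform limit of continuous functions, is continuous. For the vector-valued target I would either run the cube argument directly in $\bR^m$ as above, or prove the scalar case componentwise and intersect the $m$ resulting compacta after splitting the $\epsilon$-budget into $m$ equal parts.

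The main obstacle is the inner-regularity step: one must justify that a Borel regular measure on $\bR^n$ that is finite on $X$ admits compact inner approximations of the measurable pieces $A_{k,i}$. This is precisely where the hypotheses are genuinely used, and it rests on the standard fact that the restriction of a Borel regular measure to a set of finite measure is a Radon measure, so inner regularity by compact sets holds. Granting that, the only remaining care is bookkeeping --- arranging the truncation to finitely many cubes together with the geometric allocation of the measure budget so that $\mu(X\setminus E)$ stays below $\epsilon$ --- and noting that disjointness of finitely many compacta is exactly what upgrades the piecewise-constant $g_k$ from merely measurable to continuous.
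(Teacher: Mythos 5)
The paper does not prove this statement at all: Lusin's theorem is quoted as a known tool directly from \cite{evans2018measure}*{Theorem 1.14}, so there is no internal proof to compare against. Your argument is correct and is essentially the standard proof found in that reference --- partition the target $\bR^m$ into countably many Borel pieces of small diameter, pull back to a measurable partition of $X$, invoke the fact that the restriction of a Borel regular measure to a finite-measure measurable set is Radon (hence inner regular by compact sets), keep finitely many disjoint compacta so that the piecewise-constant approximant is continuous, and intersect over the approximation scales to realize $f|_E$ as a uniform limit of continuous functions.
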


The main tool in the proof of Theorem~\ref{thm:GNNapproxPhi_feas} and Theorem~\ref{thm:GNNapproxPhi_obj} is the universal approximation property of $\calF_{\text{GNN}}$ stated below. Notice that the separation power of GNNs is the same as that of WL test. Theorem~\ref{thm:uniapprox_scal} guarantees that GNNs can approximate any continuous function on a compact set whose separation power is less than or equal to that of the WL test, which can be proved using the Stone-Weierstrass theorem \cite{rudin1991functional}*{Section 5.7}.

\begin{theorem}[Theorem 4.3 in \cite{LP_paper}]\label{thm:uniapprox_scal}
    Let $X\subset \calG_{m,n}\times \calH_m^V\times \calH_n^W$ be a compact set. For any $\Phi\in\calC(X,\bR)$ satisfying
    \begin{equation} \label{asp:separate}
        \Phi(G,H) = \Phi(\hat{G},\hat{H}),\quad \forall~(G,H)\sim(\hat{G},\hat{H}),
    \end{equation}
    and any $\epsilon>0$, there exists $F\in \calF_{\text{GNN}}$ such that
    \begin{equation}\label{supnorm_Phi_F}
        \sup_{(G,H)\in X} |\Phi(G,H) - F(G,H)| < \epsilon.
    \end{equation}
\end{theorem}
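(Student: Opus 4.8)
The plan is to reduce the statement to the classical Stone--Weierstrass theorem applied on a quotient space in which WL-indistinguishable instances are identified. First I would introduce the equivalence relation $\sim$ on $X$ and form the quotient $X/{\sim}$. Since $X$ is compact Hausdorff and $\sim$ is a closed equivalence relation on $X\times X$ — two instances are identified exactly when $F(G,H)=F(\hat G,\hat H)$ for all $F\in\calF_{\text{GNN}}$ by Theorem~\ref{thm:separate_GNN}, an intersection of closed conditions thanks to continuity of each $F$ — the quotient $X/{\sim}$ is compact Hausdorff and the projection $\pi:X\to X/{\sim}$ is a closed quotient map. The hypothesis \eqref{asp:separate} says precisely that $\Phi$ is constant on each fiber of $\pi$, so by the universal property of the quotient $\Phi$ descends to a well-defined $\tilde\Phi\in\calC(X/{\sim},\bR)$ with $\Phi=\tilde\Phi\circ\pi$. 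It then suffices to approximate $\tilde\Phi$ uniformly by descents of elements of $\calF_{\text{GNN}}$; every $F\in\calF_{\text{GNN}}$ descends, again because $F$ is constant on $\sim$-classes by Theorem~\ref{thm:separate_GNN}.

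Next I would exhibit a point-separating subalgebra. Let $\mathcal{A}\subset\calC(X/{\sim},\bR)$ be the set of descents of all $F\in\calF_{\text{GNN}}$. I claim $\mathcal{A}$ is a subalgebra containing the constants and separating points. Containing constants is immediate, taking the layer maps and readout constant. Separation is exactly Theorem~\ref{thm:separate_GNN}: if $[(G,H)]\neq[(\hat G,\hat H)]$, i.e. $(G,H)\not\sim(\hat G,\hat H)$, then some $F\in\calF_{\text{GNN}}$ gives $F(G,H)\neq F(\hat G,\hat H)$, so its descent separates the two points. The algebra property requires a short argument: sums and products of continuous readouts $r_G$ of a \emph{common} embedding $(\bar s^L,\bar t^L)$ are again continuous readouts of that embedding, and any two GNNs can be placed on a common embedding by concatenating their message-passing channels, which only enlarges the width and stays inside $\calF_{\text{GNN}}$ since $f^l,g^l,p^l,q^l$ are arbitrary continuous maps. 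Hence $\mathcal{A}$ is closed under the operations needed and is a subalgebra of $\calC(X/{\sim})$.

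With subalgebra, constants, and point separation in hand, the Stone--Weierstrass theorem \cite{rudin1991functional} gives that $\mathcal{A}$ is dense in $\calC(X/{\sim},\bR)$ in the uniform norm. Applying density to $\tilde\Phi$ produces, for the given $\epsilon>0$, a descent $\tilde F$ with $\sup_{X/{\sim}}|\tilde\Phi-\tilde F|<\epsilon$; pulling back through $\pi$ yields $F\in\calF_{\text{GNN}}$ with $\sup_{(G,H)\in X}|\Phi(G,H)-F(G,H)|<\epsilon$, which is \eqref{supnorm_Phi_F}.

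The hard part is the separation ingredient packaged inside Theorem~\ref{thm:separate_GNN}, namely faithfully emulating the WL test's injective hash functions by the \emph{continuous} maps allowed in $\calF_{\text{GNN}}$ over the infinite compact domain $X$. The WL test may hash colors into an arbitrary label set and read off the exact multiset of neighboring colors, whereas a GNN must encode colors in a fixed-dimensional Euclidean space through continuous $p^l,q^l$ and recover neighbor information only through the linear aggregation $\sum_j E_{i,j}f^l(t_j^{l-1})$. The crux is therefore to choose continuous $f^l$ whose weighted sums remain injective on the relevant multisets — a moment-curve / linear-independence argument — and to show color refinement stabilizes after finitely many rounds, so a fixed depth $L\le m+n$ suffices. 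This is precisely the technical core established in \cite{LP_paper} for LP-graphs, and it transfers to MILP-graphs because the integrality flag $\tau_j$ only appends a discrete coordinate to the vertex feature spaces without altering the message-passing or aggregation structure.
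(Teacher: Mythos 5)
Your proposal is correct and follows essentially the same route as the paper's own proof: pass to the quotient $X/{\sim}$, descend $\Phi$ and all $F\in\calF_{\text{GNN}}$ to the quotient, use Theorem~\ref{thm:separate_GNN} for point separation, and invoke the Stone--Weierstrass theorem before pulling back through $\pi$. Your additional care about the algebra property (realizing sums and products of GNN outputs via channel concatenation) and the Hausdorffness of the quotient fills in details the paper's sketch leaves implicit, but the argument is the same one.
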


Let us remark that the proof of \cite{LP_paper}*{Theorem 4.3} does not depend on the specific choice of the feature spaces $\calH^V$ and $\calH^W$, which only requires that $\calH^V$ and $\calH^W$ are both topological linear spaces. Therefore, Theorem~\ref{thm:uniapprox_scal} works for MILP-graphs with additional vertices features.

For the sake of completeness, we outline the proof of \cite{LP_paper}*{Theorem 4.3}: Let $\pi: \calG_{m,n}\times \calH_m^V\times \calH_n^W\rightarrow \calG_{m,n}\times \calH_m^V\times \calH_n^W/\sim$ be the quotient. Then for any continuous function $F:\calG_{m,n}\times \calH_m^V\times \calH_n^W\rightarrow\bR$, there exists a unique $\Tilde{F}:\calG_{m,n}\times \calH_m^V\times \calH_n^W/\sim\rightarrow\bR$ such that $F = \Tilde{F}\circ \pi$. Note that $\Tilde{\calF}_{\text{GNN}}:=\{\Tilde{F}:F\in\calF_{\text{GNN}}\}$ separates points in $\calG_{m,n}\times \calH_m^V\times \calH_n^W/\sim$ by Theorem~\ref{thm:separate_GNN}. By the Stone-Weierstrass theorem, there exists $\Tilde{F}\in\Tilde{\calF}_{\text{GNN}}$ with $\sum_{(G,H)\in X} |\Tilde{\Phi}(\pi(G,H)) - \title{F}(\pi(G,H))|<\epsilon$, which then implies \eqref{supnorm_Phi_F}.

Similar universal approximation results for GNNs and invariant target mappings can also be found in earlier literature; see e.g. \cites{azizian2020expressive, geerts2022expressiveness}. With all the preparation above, we can then proceed to present the proof of Theorem~\ref{thm:GNNapproxPhi_feas}.

For applying Theorem~\ref{thm:uniapprox_scal}, one need to verify that the separation power of $\Phi_{\text{feas}}$ and $\Phi_{\text{obj}}$ are bounded by that of WL test for unfoldable MILP instances, i.e., the condition \eqref{asp:separate}. For any $(G,H),(\hat{G},\hat{H})\in \calG_{m,n}\times \calH_m^V\times \calH_n^W\backslash \calD_{\text{foldable}}$, since they are unfoldable, WL test should generate discrete coloring for them if there is no collision. If we further assume that they cannot be distinguished by WL test, then their stable discrete coloring must be identical (up to some permutation probably), which implies that they must be isomorphic. Therefore, we have the following lemma:

\begin{lemma}\label{lem:isoMILP}
For any $(G,H),(\hat{G},\hat{H})\in \calG_{m,n}\times \calH_m^V\times \calH_n^W\backslash \calD_{\text{foldable}}$, it holds that $(G,H)\sim (\hat{G},\hat{H})$ if and only if $(G,H)$ and $(\hat{G},\hat{H})$ are isomorphic, i.e., $(\sigma_V,\sigma_W)\ast (G,H) = (\hat{G},\hat{H})$ for some $\sigma_V\in S_m$ and $\sigma_W\in S_n$.
\end{lemma}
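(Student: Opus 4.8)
The plan is to prove the two implications separately. The reverse direction, that isomorphic MILP-graphs are WL-indistinguishable, is routine and holds for \emph{any} MILP-graphs (foldable or not); the forward direction, that WL-indistinguishability forces isomorphism, is where unfoldability is essential and carries all the content.

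For the easy direction, suppose $(\sigma_V,\sigma_W)\ast(G,H)=(\hat G,\hat H)$. I would fix an arbitrary choice of hash functions and prove by induction on $l$ that $C^{l,V}_i=\hat C^{l,V}_{\sigma_V(i)}$ and $C^{l,W}_j=\hat C^{l,W}_{\sigma_W(j)}$ for all $i,j$. The base case uses that isomorphism gives $h^V_i=\hat h^V_{\sigma_V(i)}$ and $h^W_j=\hat h^W_{\sigma_W(j)}$, so the initial hashes agree. The inductive step reindexes the aggregation $\sum_j E_{i,j}\,\mathrm{HASH}'_{l,W}(C^{l-1,W}_j)$ by $j'=\sigma_W(j)$ and invokes $E_{i,j}=\hat E_{\sigma_V(i),\sigma_W(j)}$ together with the induction hypothesis to match it with the corresponding sum in $(\hat G,\hat H)$. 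Hence the color multisets coincide at every level and for every hash choice, which is exactly $(G,H)\sim(\hat G,\hat H)$.

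For the hard direction, I would first use unfoldability to pin down the stable coloring. By Definition~\ref{def:fold} (equivalently the equitable-partition characterization of Definition~\ref{def:fold2} together with the convergence statement cited from \cite{LP_paper}), an unfoldable instance run with injective hashes stabilizes to a \emph{discrete} coloring; taking $L\ge m+n$ guarantees stabilization, so that at level $L$ (and already at level $L-1$) all of $\{C^{L,V}_i\}_{i=1}^m$ are pairwise distinct and all of $\{C^{L,W}_j\}_{j=1}^n$ are pairwise distinct, and likewise for $(\hat G,\hat H)$. Since $(G,H)\sim(\hat G,\hat H)$, the level-$L$ color multisets of the two graphs agree; distinctness within each graph then lets me read off well-defined bijections $\sigma_V,\sigma_W$ via $C^{L,V}_i=\hat C^{L,V}_{\sigma_V(i)}$ and $C^{L,W}_j=\hat C^{L,W}_{\sigma_W(j)}$. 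It remains to verify $(\sigma_V,\sigma_W)$ is an isomorphism. Feature preservation follows by unwinding the recursion: injectivity of each $\mathrm{HASH}_{l,\cdot}$ means a level-$L$ color determines its level-$0$ color hence the underlying feature, so equal colors force $h^V_i=\hat h^V_{\sigma_V(i)}$ and $h^W_j=\hat h^W_{\sigma_W(j)}$.

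The edge-weight preservation is the crux, and the main obstacle I anticipate. From $C^{L,V}_i=\hat C^{L,V}_{\sigma_V(i)}$ and injectivity of $\mathrm{HASH}_{L,V}$, the aggregated neighbor sums must agree: $\sum_j E_{i,j}\,\mathrm{HASH}'_{L,W}(C^{L-1,W}_j)=\sum_{j'}\hat E_{\sigma_V(i),j'}\,\mathrm{HASH}'_{L,W}(\hat C^{L-1,W}_{j'})$. Because the level-$(L-1)$ colors are distinct within each graph and $\mathrm{HASH}'_{L,W}$ sends colors to linearly independent vectors, each side is a linear combination in the same independent family once I identify $\hat C^{L-1,W}_{\sigma_W(j)}=C^{L-1,W}_j$ (valid since higher-level colors refine lower-level ones, so the level-$L$ bijection $\sigma_W$ restricts correctly at level $L-1$). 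Comparing coefficients of matching basis vectors then yields $E_{i,j}=\hat E_{\sigma_V(i),\sigma_W(j)}$ for every $i,j$, completing the isomorphism. The delicate points to get right are that this single aggregated vector genuinely recovers each individual weight $E_{i,j}$ — which needs both discreteness of the level-$(L-1)$ coloring (no two neighbors share a color) and the linear-independence requirement on $\mathrm{HASH}'$ — and the cross-level consistency of $\sigma_W$, so that the two aggregated sums are compared under the same vertex correspondence.
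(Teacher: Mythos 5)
Your proposal is correct and follows essentially the same route as the paper's proof: the easy direction by reindexing the WL recursion under the permutation, and the hard direction by using unfoldability with injective hash functions (common to both graphs) to obtain discrete colorings, reading off $\sigma_V,\sigma_W$ from the multiset equality, unwinding injectivity of $\mathrm{HASH}_{L,\cdot}$ for feature and cross-level color consistency, and comparing coefficients against the linearly independent vectors $\mathrm{HASH}'_{L,W}(\hat C^{L-1,W}_j)$ to recover $E_{i,j}=\hat E_{\sigma_V(i),\sigma_W(j)}$. If anything, you are slightly more careful than the paper in insisting that the coloring be discrete already at level $L-1$ (e.g.\ by taking $L\geq m+n$), a point the paper's write-up leaves implicit but which is exactly what the coefficient-comparison step requires.
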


Let us remark that Lemma \ref{lem:isoMILP} is true for any vertex feature spaces because the following proof does not assume the structure of the feature spaces.

\begin{proof}

It's trivial that the isomorphism of $(G,H)$ and $(\hat{G},\hat{H})$ implies $(G,H)\sim (\hat{G},\hat{H})$, and it suffices to show $(G,H)\sim (\hat{G},\hat{H})$ implies the isomorphism. 

Applying Algorithm \ref{alg:WL} on $(G,H)$ and $(\hat{G},\hat{H})$ with a common set of hash functions, we define the outcomes respectively:
\begin{align*}
    \{\{ C_i^{L,V} \}\}_{i=0}^m, \{\{ C_j^{L,W} \}\}_{j=0}^n :=& \mathrm{WL}_{\mathrm{MILP}}((G,H), L),\\
    \{\{ \hat{C}_i^{L,V} \}\}_{i=0}^m, \{\{ \hat{C}_j^{L,W} \}\}_{j=0}^n := & \mathrm{WL}_{\mathrm{MILP}}((\hat{G},\hat{H}), L).
\end{align*}
Then $(G,H)\sim (\hat{G},\hat{H})$ can be written as
\begin{equation}
    \label{eq:multiset-equal}
    \{\{ C_i^{L,V} \}\}_{i=0}^m = \{\{ \hat{C}_i^{L,V} \}\}_{i=0}^m, ~~\textup{and}~~ \{\{ C_j^{L,W} \}\}_{j=0}^n = \{\{ \hat{C}_j^{L,W} \}\}_{j=0}^n
\end{equation}
for all $L \in \mathbb{N}$ and all hash functions. 
We define hash functions used in this proof.
   \begin{itemize}[leftmargin=*]
       \item We take hash functions $\text{HASH}_{0,V},\text{HASH}_{0,W}$ such that they are injective on the following domain:
       \[\{ h^V_i, h^W_j, \hat{h}^V_i, \hat{h}^W_j: 1\leq i \leq m, 1 \leq j \leq n \}.\]
       Note that $\{\cdot\}$ means a set, but not a multiset. The above set collects all possible values of the vertex features in the given two graphs $(G,H)$ and $(\hat{G},\hat{H})$. Such hash functions exist because the number of vertices in the two graphs must be finite.
       \item For $l \in \{1,2,\cdots,L\}$, we define a set that collects all possible vertex colors at the $l-1$-th iteration,
       \[\mC_{l-1} := \{C_i^{l-1,V}, C_j^{l-1,W}, \hat{C}_i^{l-1,V}, \hat{C}_j^{l-1,W}: 1\leq i \leq m, 1 \leq j \leq n \}.\]
       Note that $|\mC_{l-1}|$ is finite. The two sets $\{\text{HASH}_{l,V}'(C): C \in \mC_{l-1}\}$ and $\{\text{HASH}_{l,W}'(C): C \in \mC_{l-1}\}$ can be linearly independent for some hash functions $\text{HASH}_{l,V}',\text{HASH}_{l,W}'$. This is because that one can take the output spaces of $\text{HASH}_{l,V}',\text{HASH}_{l,W}'$ as linear spaces of dimension greater than $|\mC_{l-1}|$. Given such functions $\text{HASH}_{l,V}',\text{HASH}_{l,W}'$, we consider the following set
          \begin{align*}
        &\left\{\left(C_i^{l-1,V}, \sum_{j=1}^n E_{i,j}\text{HASH}_{l,W}'\left(C_j^{l-1, W}\right)\right) : 1\leq i\leq m\right\} \\
        \cup &\left\{\left(\hat{C}_i^{l-1,V}, \sum_{j=1}^n \hat{E}_{i,j}\text{HASH}_{l,W}'\left(\hat{C}_j^{l-1, W}\right)\right) : 1\leq i\leq m\right\}\\
       \cup  &\left\{\left(C_j^{l-1,W}, \sum_{i=1}^n E_{i,j} \text{HASH}_{l,V}'\left(C_i^{l-1, V} \right)\right) : 1\leq j\leq n\right\} \\
        \cup &\left\{\left(\hat{C}_j^{l-1,W}, \sum_{i=1}^n \hat{E}_{i,j} \text{HASH}_{l,V}'\left(\hat{C}_i^{l-1, V} \right)\right) : 1\leq j\leq n\right\},
    \end{align*}
    which is a finite set given any two graphs $(G,H)$ and $(\hat{G},\hat{H})$. Thus, one can pick hash functions $\text{HASH}_{l,V},\text{HASH}_{l,W}$ that are injective on the above set.
   \end{itemize}
   Due to the statement assumption that $(G,H)$ and $(\hat{G},\hat{H})$ are both unfoldable and the definition of the above hash functions, one can obtain that
    \begin{equation}
 \label{eq:double-foldable}
     \begin{aligned}
         C_i^{L,V} \neq C_{i'}^{L,V}, ~\forall~i \neq i', \quad    C_j^{L,W} \neq   C_{j'}^{L,W},~ \forall~j \neq j',\\
       \hat{C}_i^{L,V} \neq \hat{C}_{i'}^{L,V}, ~\forall~i \neq i', \quad    \hat{C}_j^{L,W} \neq   \hat{C}_{j'}^{L,W},~ \forall~j \neq j',
     \end{aligned}
 \end{equation}
 for some $L \in \mathbb{N}$. In another word, in either $(G,H)$ or $(\hat{G},\hat{H})$, each vertex has an unique color. Combining \eqref{eq:multiset-equal} and \eqref{eq:double-foldable}, we have that there exists unique $\sigma_V\in S_m$ and $\sigma_W\in S_n$ such that 
 \[\hat{C}_i^{L,V} = C_{\sigma_V(i)}^{L,V},\quad \hat{C}_j^{L,W} = C_{\sigma_W(j)}^{L,W},\quad \forall~i=1,2,\dots,m, j=1,2,\dots,n.\] In addition, the vertices with the same colors much have the same neighbour information, including the colors of the nerighbourhood and weights of the associated edges. Rigorously, since $\text{HASH}_{L,V},\text{HASH}_{L,W}$ are both injective, one will obtain that
 \begin{align*}
 &\hat{C}_i^{L-1,V} = C_{\sigma_V(i)}^{L-1,V},\quad \hat{C}_j^{L-1,W} = C_{\sigma_W(j)}^{L-1,W},\\
 &  \sum_{j=1}^n \hat{E}_{i,j}\text{HASH}_{L,W}'\left(\hat{C}_j^{L-1, W}\right) =   \sum_{j=1}^n E_{\sigma_V(i),j}\text{HASH}_{L,W}'\left(C_j^{L-1, W}\right), \\
  &  \sum_{i=1}^n \hat{E}_{i,j} \text{HASH}_{L,V}'\left(\hat{C}_i^{L-1, V} \right) =  \sum_{i=1}^n E_{i,\sigma_W(j)} \text{HASH}_{L,V}'\left(C_i^{L-1, V} \right).
 \end{align*}
 Now let us consider the second line of the above equations,
 \begin{align*}
     \sum_{j=1}^n \hat{E}_{i,j}\text{HASH}_{L,W}'\left(\hat{C}_j^{L-1, W}\right) = &   \sum_{j=1}^n E_{\sigma_V(i),j}\text{HASH}_{L,W}'\left(C_j^{L-1, W}\right)\\
     = & \sum_{j=1}^n E_{\sigma_V(i),\sigma_W(j)}\text{HASH}_{L,W}'\left(C_{\sigma_W(j)}^{L-1, W}\right)\\
     =& \sum_{j=1}^n E_{\sigma_V(i),\sigma_W(j)}\text{HASH}_{L,W}'\left(\hat{C}_{j}^{L-1, W}\right).
 \end{align*}
 Therefore, it holds that
 \[\sum_{j=1}^n \left(\hat{E}_{i,j} - E_{\sigma_V(i),\sigma_W(j)}\right)\text{HASH}_{L,W}'\left(\hat{C}_{j}^{L-1, W}\right) = 0.\]
 This implies that $\hat{E}_{i,j} = E_{\sigma_V(i),\sigma_W(j)}$ because $\{\text{HASH}_{L,W}'(C): C \in \mC_{L-1}\}$ are linearly independent. Then one can conclude that $(\sigma_V,\sigma_W)\ast (G,H) = (\hat{G},\hat{H})$.
\end{proof}

Now we can proceed to present the proof of Theorem~\ref{thm:GNNapproxPhi_feas}.

\begin{proof}[Proof of Theorem~\ref{thm:GNNapproxPhi_feas}]
    According to Lemma~\ref{lem:feas_meas}, the mapping $\Phi_{\text{feas}}: \calG_{m,n}\times \calH_m^V\times \calH_n^W\rightarrow \{0,1\}$ is measurable. By Lusin's theorem, there exists a compact $E\subset X\subset \calG_{m,n}\times\calH_m^V\times \calH_n^W\backslash \calD_{\text{foldable}}$ such that $\Phi_{\text{feas}}|_E$ is continuous and that $\text{Meas}(X\backslash E)< \epsilon$. For any $(G,H),(\hat{G},\hat{H})\in E$, if $(G,H)\sim (\hat{G},\hat{H})$, Lemma~\ref{lem:isoMILP} guarantees that $\Phi_{\text{feas}}(G,H) = \Phi_{\text{feas}}(\hat{G},\hat{H})$. Then using Theorem~\ref{thm:uniapprox_scal}, one can conclude that there exists $F\in\calF_{\text{GNN}}$ with
    \begin{equation*}
        \sup_{(G,H)\in E} |F(G,H) - \Phi_{\text{feas}}(G,H)|<\frac{1}{2}.
    \end{equation*}
    This implies that
    \begin{equation*}
		\text{Meas}\left(\left\{(G,H)\in X:\mathbb{I}_{F(G,H)>1/2}\neq \Phi_{\text{feas}}(G,H)\right\}\right)\leq \text{Meas}(X\backslash E)<\epsilon.
	\end{equation*}
\end{proof}

Similar techniques can be used to prove a sequence of results:

\begin{proof}[Proof of Theorem~\ref{cor:GNNapproxPhi_feas}]
$\calD$ is finite and compact and $\Phi_{\text{feas}}|_{\calD}$ is continuous. The rest of the proof follows the same argument as in the proof of Theorem~\ref{thm:GNNapproxPhi_feas}, using Lemma~\ref{lem:isoMILP} and Theorem~\ref{thm:uniapprox_scal}.
\end{proof}

\begin{proof}[Proof of Theorem~\ref{thm:GNNapproxPhi_obj}]
$\Phi_{\text{obj}}$ is measurable by Lemma~\ref{lem:obj_meas}, which implies that
\begin{equation*}
    \mathbb{I}_{\Phi_{\text{obj}}(\cdot,\cdot) \in \bR} : \calG_{m,n}\times\calH_m^V\times \calH_n^W \rightarrow \{0,1\},
\end{equation*}
is also measurable. Then (i) and (ii) can be proved using the same argument as in the proof of Theorem~\ref{thm:GNNapproxPhi_feas} for $X$ and $\mathbb{I}_{\Phi_{\text{obj}}(\cdot,\cdot) \in \bR}$, as well as $X \cap \Phi_{\text{obj}}^{-1}(\bR)$ and $\Phi_{\text{obj}}$, respectively. 
\end{proof}

\begin{proof}[Proof of Theorem~\ref{cor:GNNapproxPhi_obj}]
$\calD$ is compact and $\mathbb{I}_{\Phi_{\text{obj}}(\cdot,\cdot) \in \bR}$ restricted on $\calD$ is continuous. In addition, as long as $\calD\cap \Phi_{\text{obj}}^{-1}(\bR)$ is nonempty, it is compact and $\Phi_{\text{obj}}$ restricted on $\calD\cap \Phi_{\text{obj}}^{-1}(\bR)$ is continuous. Then one can use Lemma~\ref{lem:isoMILP} and Theorem~\ref{thm:uniapprox_scal} to obtain the desired results.
\end{proof}

Now we consider the optimal solution mapping $\Phi_{\text{solu}}$ and Theorem~\ref{thm:GNNapproxPhi_solu}, for proving which one needs a universal approximation result of $\calF_{\text{GNN}}^W$ for equivariant functions. For stating the theorem rigorously, we define another equivalence relationship on $\calG_{m,n}\times \calH_m^V\times \calH_n^W$: $(G,H) \ensuremath{\stackrel{W}{\sim}} (\hat{G},\hat{H})$ if $(G,H)\sim(\hat{G},\hat{H})$ and in addition, $C_j^{L,W} = \hat{C}_j^{L,W},\ \forall~j\in\{1,2,\dots, n\}$, for any $L\in \mathbb{N}$ and any hash functions.

The universal approximation result of $\calF_{\text{GNN}}^W$ is stated as follows, which guarantees that $\calF_{\text{GNN}}^W$ can approximate any continuous equivariant function on a compact set whose separation power is less than or equal to that of the WL test, and is based on a generalized Stone-Weierstrass theorem established in \cite{azizian2020expressive}.

\begin{theorem}[Theorem E.1 in \cite{LP_paper}]\label{thm:uniapprox_vec}
    Let $X\subset \calG_{m,n}\times \calH_m^V\times \calH_n^W$ be a compact subset that is closed under the action of $S_m\times S_n$. Suppose that $\Phi\in\calC(X,\bR^n)$ satisfies the followings:
    \begin{itemize}[leftmargin=0.35in]
        \item[(i)] For any $\sigma_V\in S_m$, $\sigma_W\in S_n$, and $(G,H)\in X$, it holds that
        \begin{equation*}
            \Phi\left((\sigma_V,\sigma_W)\ast (G,H)\right) =  \sigma_W (\Phi(G,H)).
        \end{equation*}
        \item[(ii)] $\Phi(G,H) = \Phi(\hat{G},\hat{H})$ holds for all $(G,H),(\hat{G},\hat{H})\in X$ with $(G,H) \ensuremath{\stackrel{W}{\sim}} (\hat{G},\hat{H})$.
        \item[(iii)] Given any $(G,H)\in X$ and any $j,j'\in\{1,2,\dots,n\}$, if $C_j^{l,W} = C_{j'}^{l,W}$ holds for any $l\in\mathbb{N}$ and any choices of hash functions, then $\Phi(G,H)_j = \Phi(G,H)_{j'}$.
    \end{itemize}
    Then for any $\epsilon>0$, there exists $F_W\in \calF_{\text{GNN}}^W$ such that
    \begin{equation*}
        \sup_{(G,H)\in X} \|\Phi(G,H) - F_W(G,H)\| < \epsilon.
    \end{equation*}
\end{theorem}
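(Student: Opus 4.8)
The plan is to mirror the proof of the scalar statement (Theorem~\ref{thm:uniapprox_scal}), replacing the classical Stone--Weierstrass theorem by its equivariant generalization from \cite{azizian2020expressive}. The scalar argument passed to the quotient by $\sim$ and used that the induced invariant GNNs form a point-separating subalgebra. For the vector case I would first establish a node-level refinement of Theorem~\ref{thm:separate_GNN}: the embedding map $(G,H)\mapsto t_j^L$ carried out by $\calF_{\text{GNN}}^W$ has exactly the separation power of the WL color $C_j^{L,W}$. Concretely, using the same no-collision/injective-hash construction as in Theorem~\ref{thm:separate_GNN} applied vertex-by-vertex (and using that on the compact $X$ color refinement stabilizes after at most $m+n$ rounds), the full node-feature vector $(t_j^L)_{j=1}^n$ of two graphs coincides iff $(G,H)\ensuremath{\stackrel{W}{\sim}}(\hat G,\hat H)$, while two indices $j,j'$ inside one graph receive a common embedding iff $C_j^{l,W}=C_{j'}^{l,W}$ for all $l$ and all hash functions.

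Next I would set up the algebraic objects required by the equivariant Stone--Weierstrass theorem. Let $\mathcal A\subset\calC(X,\bR)$ be the set of invariant functions realizable by $\calF_{\text{GNN}}$; it is a subalgebra, since running two GNNs in parallel and combining their graph-level readouts through a continuous $r_G$ yields sums and products. Let $\mathcal M\subset\calC(X,\bR^n)$ be the equivariant functions realizable by $\calF_{\text{GNN}}^W$. Then $\mathcal M$ is a module over $\mathcal A$: for $a\in\mathcal A$ and $F_W\in\mathcal M$ the map $j\mapsto a(G,H)\,F_W(G,H)_j$ is again realizable, because the invariant value $a$ is a continuous function of the pooled features $(\bar s^L,\bar t^L)$ while $F_W(G,H)_j$ is a continuous function of $(\bar s^L,\bar t^L,t_j^L)$, so their product can be produced by the node-level readout $r_W$. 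Moreover the invariants $\sum_{j}(F_W)_j(F_W')_j$ obtained by pooling products of node embeddings lie in $\mathcal A$, providing the compatibility between $\mathcal M$ and $\mathcal A$ that the generalized theorem requires.

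Then I would invoke the equivariant Stone--Weierstrass theorem of \cite{azizian2020expressive} for the pair $(\mathcal A,\mathcal M)$ on the compact, $S_m\times S_n$-invariant set $X$. Its hypotheses reduce to two separation facts already in hand: $\mathcal A$ separates the $\sim$-classes of $X$ (Theorem~\ref{thm:separate_GNN}), and $\mathcal M$ separates nodes up to the WL identification recorded in the first step. The conclusion is that the closure of $\mathcal M$ in $\calC(X,\bR^n)$ equals the space of continuous equivariant maps that are constant on $\ensuremath{\stackrel{W}{\sim}}$-classes and assign equal values to within-graph indices of equal color. Hypotheses (i), (ii), (iii) on $\Phi$ are precisely these three properties, so $\Phi$ belongs to the closure and the required $F_W\in\calF_{\text{GNN}}^W$ with $\sup_{(G,H)\in X}\|\Phi(G,H)-F_W(G,H)\|<\epsilon$ exists.

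The hard part will be matching the abstract hypotheses of the equivariant Stone--Weierstrass theorem to the three structural conditions (i)--(iii) without slack: one must verify that the combined feature map $(G,H)\mapsto\big((\bar s^L,\bar t^L),(t_j^L)_j\big)$ separates exactly the $\ensuremath{\stackrel{W}{\sim}}$-classes --- neither coarser nor finer --- so that the closure of $\mathcal M$ is characterized by precisely (ii) together with the within-graph identification (iii), and that $\mathcal M$ contains enough node-reaching elements for the module to be non-degenerate in the sense demanded by \cite{azizian2020expressive}. Tracking how the global readout variables $(\bar s^L,\bar t^L)$ and the local variable $t_j^L$ jointly encode the color of node $j$ alongside the graph-level multiset of colors is the delicate bookkeeping at the center of this step.
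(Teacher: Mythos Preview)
The paper does not actually prove this theorem; it is quoted verbatim as Theorem~E.1 of \cite{LP_paper}, and the only indication given here is that the proof ``is based on a generalized Stone--Weierstrass theorem established in \cite{azizian2020expressive}.'' Your outline follows precisely this indicated route---node-level WL/GNN separation, the subalgebra/module setup for $(\mathcal A,\mathcal M)$, and the equivariant Stone--Weierstrass theorem of \cite{azizian2020expressive}---so it matches the approach the paper points to.
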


Similar to Theorem~\ref{thm:uniapprox_scal}, Theorem~\ref{thm:uniapprox_vec} is also true for all topological linear spaces $\calH^V$ and $\calH^W$. We refer to \cites{azizian2020expressive, geerts2022expressiveness} for other results on the closure of equivariant GNNs. Now we can prove Theorem~\ref{thm:GNNapproxPhi_solu} and Corollary~\ref{cor:GNNapproxPhi_solu}.

\begin{proof}[Proof of Theorem~\ref{thm:GNNapproxPhi_solu}]
We can assume that $X\subset \calD_{\text{solu}}\backslash \calD_{\text{foldable}}\subset \calG_{m,n}\times\calH_m^V\times \calH_n^W\backslash \calD_{\text{foldable}}$ is close under the action of $S_m\times S_n$, otherwise one can replace $X$ with $\bigcup_{(\sigma_V,\sigma_W)\in S_m\times S_n}(\sigma_V,\sigma_W)\ast X$.
The solution mapping $\Phi_{\text{solu}}: \calD_{\text{solu}}\backslash\calD_{\text{foldable}}$ is measurable by Lemma~\ref{lem:solu_meas}. According to Lusin's theorem, there exists a compact $E'\subset X\subset \calD_{\text{solu}}\backslash \calD_{\text{foldable}}$ with $\text{Meas}(X\backslash E') < \epsilon / |S_m\times S_n|$ such that $\Phi_{\text{solu}}|_{E'}$ is continuous. Let us set
\begin{equation*}
    E = \bigcap_{(\sigma_V,\sigma_W)\in S_m\times S_n} (\sigma_V,\sigma_W)\ast E' \subset X,
\end{equation*}
which is compact and is closed under the action of $S_m\times S_n$. Furthermore, it holds that
\begin{align*}
    \text{Meas}(X\backslash E) & = \text{Meas}\left(X\backslash \bigcap_{(\sigma_V,\sigma_W)\in S_m\times S_n} (\sigma_V,\sigma_W)\ast E' \right) \\
    &\leq \sum_{(\sigma_V,\sigma_W)\in S_m\times S_n} \text{Meas}(X\backslash (\sigma_V,\sigma_W)\ast E' ) \\
    & = |S_m\times S_n|\cdot \text{Meas}(X\backslash E') \\
    & < \epsilon.
\end{align*}

    For any $(G,H),(\hat{G},\hat{H})\in E$, if $(G,H) \ensuremath{\stackrel{W}{\sim}} (\hat{G},\hat{H})$, similar to Lemma~\ref{lem:isoMILP}, we know that there exists $\sigma_V\in S_m$ such that $(\sigma_V,\text{Id})\ast (G,H) = (\hat{G},\hat{H})$. Then by the construction of $\Phi_{\text{solu}}$ in Section~\ref{sec:solumap}, one has that $\Phi_{\text{solu}}(G,H) = \Phi_{\text{solu}}(\hat{G},\hat{H}) \in \bR^n$. Condition (i) in Theorem~\ref{thm:uniapprox_vec} is satisfied by the definition of $\Phi_{\text{solu}}$ (see Section~\ref{sec:solumap}) and Condition (iii) in Theorem~\ref{thm:uniapprox_vec} follows from the fact that WL test yields discrete coloring for any graph in $\calG_{m,n}\times \calH_m^V\times \calH_n^W\backslash \calD_{\text{foldable}}$. Then applying Theorem~\ref{thm:uniapprox_vec}, one can conclude that for any $\delta>0$, there exists $F_W\in\calF_{\text{GNN}}^W$ with
    \begin{equation*}
        \sup_{(G,H)\in E} \|F_W(G,H) - \Phi_{\text{solu}}(G,H)\|<\delta.
    \end{equation*}
    Therefore, it holds that
    \begin{equation*}
		\text{Meas}\left(\left\{(G,H)\in X:\|F_W(G,H) - \Phi_{\text{solu}}(G,H)\| > \delta\right\}\right)\leq \text{Meas}(X\backslash E)<\epsilon,
	\end{equation*}
	which completes the proof.
\end{proof}

\begin{proof}[Proof of Theorem~\ref{cor:GNNapproxPhi_solu}]
Without loss of generality, we can assume that the finite dataset $\calD \subset \calD_{\text{solu}}\backslash \calD_{\text{foldable}}$ is closed under the action of $S_m\times S_n$. Otherwise, we can replace $\calD$ using $\bigcup_{(\sigma_V,\sigma_W)\in S_m\times S_n} (\sigma_V,\sigma_W)\ast\calD$. 

Note that $\calD$ is compact and $\Phi_{\text{solu}}|_{\calD}$ is continuous. The rest of the proof can be done by using similar techniques as in the proof of Theorem~\ref{thm:GNNapproxPhi_solu}, with Theorem~\ref{thm:uniapprox_vec}.
\end{proof}

\section{Proofs for Section~\ref{sec:random_feature}}

We collect the proof of Theorem~\ref{thm:GNNapproxPhi_feas_random}, Theorem~\ref{thm:GNNapproxPhi_obj_random}, and Theorem~\ref{thm:GNNapproxPhi_solu_random} in this section. 
Before we proceed, we remark that the following concepts are modified as follows in this section due to the random feature technique:
\begin{itemize}[leftmargin=*]
    \item The space of MILP-graph. Each vertex in the MILP-graph is equipped with an additional feature and the entire random vector $\omega$ is sampled from $\omega \in \Omega := [0,1]^m \times [0,1]^n$. More specifically, we write:
    \begin{align*}
        \omega := (\omega^V,\omega^W) := (\omega^V_1,\cdots,\omega^V_m,\omega^W_1,\cdots,\omega^W_n).
    \end{align*}
    Equipped with a random feature, the new vertex features are defined with:
    \begin{align*}
        h^{V,R}_i := (h^V_i, \omega^V_i),~~h^{W,R}_j := (h^W_j, \omega^W_j),\quad \forall~1\leq i\leq m,1\leq j \leq n.
    \end{align*}
    The corresponding vertex feature spaces are defined with:
    \begin{align*}
        \calH_m^{V,R} := \calH_m^V \times [0,1]^m = (\calH^V\times[0,1])^m, \quad \calH_n^{W,R} := \calH_n^W \times [0,1]^n = (\calH^W\times[0,1])^n.
    \end{align*}
    The corresponding graph space is $\calG_{m,n}\times\calH_m^{V,R}\times\calH_n^{W,R}$, and it is isomorphic to the space defined in the main text:
    \begin{equation}
    \label{eq:milp-spaces-iso-rand}
        \calG_{m,n}\times\calH_m^V\times \calH_n^W \times \Omega \cong \calG_{m,n}\times\calH_m^{V,R}\times\calH_n^{W,R}.
    \end{equation}
    Note that the two spaces are actually the same, merely defined in different way. In the main text, we adopt the former due to its simplicity. In the proof, we may either use the former or the latter according to the context.
    \item Topology and measure. We equip $\Omega$ with the standard topology and the Lebesgue measure. Then the overall space $\calG_{m,n}\times\calH_m^V\times \calH_n^W \times \Omega$ can be naturally equipped with the product topology and the product measure.
    
    \item Key mappings. On the space (with random features) $\calG_{m,n}\times\calH_m^V\times \calH_n^W \times \Omega$, the three key mappings can be defined with
    \begin{align*}
        \Phi_{\textup{feas}}(G,H,\omega) :=& \Phi_{\textup{feas}}(G,H),\\
        \Phi_{\textup{obj}}(G,H,\omega) :=& \Phi_{\textup{obj}}(G,H),\\
        \Phi_{\textup{solu}}(G,H,\omega) :=& \Phi_{\textup{solu}}(G,H),
    \end{align*}
    for all $\omega \in \Omega$. This is due to the fact that the feasibility, objective and optimal solution only depend on the MILP instance and they are independent of the random features appended.
    
    \item Invariance and equivariance. For any permutations $\sigma_V\in S_m$ and $\sigma_W\in S_n$, $(\sigma_V,\sigma_W)\ast (G,H,\omega)$ denotes the reordered MILP-graph (equipped with random features). We say a function $F_R$ is permutation invariant if 
\begin{align*}
     F_R((\sigma_V,\sigma_W)\ast (G,H,\omega)) = F_R(G,H,\omega), 
\end{align*}
for all $\sigma_V\in S_m$, $\sigma_W\in S_n$, and $(G,H,\omega)\in \calG_{m,n}\times\calH_m^{V,R}\times\calH_n^{W,R}$. And we say a function  $F_{W,R}$ is permutation equivariant if
\begin{equation*}
    F_{W,R}((\sigma_V,\sigma_W)\ast (G,H,\omega)) = \sigma_W (F_{W,R}(G,H,\omega)),
\end{equation*}
for all $\sigma_V\in S_m$, $\sigma_W\in S_n$, and $(G,H,\omega)\in \calG_{m,n}\times\calH_m^{V,R}\times\calH_n^{W,R}$. 
It's clear that $\Phi_{\text{feas}}$ and $\Phi_{\text{obj}}$ are permutation invariant, and $\Phi_{\text{solu}}$ is permutation equivariant. Furthermore, it holds that any $F_R \in \calF_{\text{GNN}}^R$ is invariant and any $F_{W,R} \in \calF_{\text{GNN}}^{W,R}$ is equivariant.

    \item The WL test. In the WL test, vertex features $h_i^V,h_j^W$ are replaced with $h_i^{V,R},h_j^{W,R}$, respectively, and the corresponding vertex colors are denoted by $C_i^{l,V,R},C_j^{0,W,R}$ in the $l$-th iteration.
    \begin{equation}
        \label{eq:wl-rand}
        \begin{aligned}
    \textup{Initialization:}  ~~~~~~~~~~~~~~  & C_i^{0,V,R} = \text{HASH}_{0,V}(h_i^{V,R}), \quad C_j^{0,W,R} = \text{HASH}_{0,W}(h_j^{W,R}).\\
    \textup{For $l=1,2,\cdots,L$,} ~~~ &   C_i^{l,V,R} = \text{HASH}_{l, V} \left(C_i^{l-1,{V,R}}, \sum_{j=1}^n E_{i,j}\text{HASH}_{l,W}'\left(C_j^{l-1, {W,R}}\right)\right).\\
    \textup{For $l=1,2,\cdots,L$,} ~~~ &  C_j^{l,W,R} = \text{HASH}_{l, W} \left(C_j^{l-1,{W,R}}, \sum_{i=1}^m E_{i,j} \text{HASH}_{l,V}'\left(C_i^{l-1, {V,R}} \right)\right).
        \end{aligned}
    \end{equation}
    Note that the update form of \eqref{eq:wl-rand} is equal to that of Algorithm \ref{alg:WL}, and the only difference is the initial vertex features.
    \item The equivalence relationship. Now let us define the equivalence relationships $\sim$ and $\ensuremath{\stackrel{W}{\sim}}$ on the space equipped with random features $\calG_{m,n}\times\calH_m^V\times \calH_n^W \times \Omega$.  Given two MILP-graphs with random features $(G,H,\omega)$ and $(\hat{G},\hat{H},\hat{\omega})$, the outcomes of the WL test are $\{\{ C_i^{L,V,R} \}\}_{i=0}^m, \{\{ C_j^{L,W,R} \}\}_{j=0}^n$ and $\{\{ \hat{C}_i^{L,V,R} \}\}_{i=0}^m, \{\{ \hat{C}_j^{L,W,R} \}\}_{j=0}^n$, respectively. We denote $(G,H,\omega) \sim (\hat{G},\hat{H},\hat{\omega})$ if 
    \begin{align*}
        \{\{ C_i^{L,V,R} \}\}_{i=0}^m = \{\{ \hat{C}_i^{L,V,R} \}\}_{i=0}^m ~~ \textup{and} ~~ \{\{ C_j^{L,W,R} \}\}_{j=0}^n = \{\{ \hat{C}_j^{L,W,R} \}\}_{j=0}^n,
    \end{align*}
    for all for any $L\in \mathbb{N}$ and any hash functions. 
    We say $(G,H,\omega) \ensuremath{\stackrel{W}{\sim}} (\hat{G},\hat{H},\hat{\omega})$ if $(G,H,\omega)\sim(\hat{G},\hat{H},\hat{\omega})$ and in addition, $C_j^{L,W,R} = \hat{C}_j^{L,W,R},\ \forall~j\in\{1,2,\dots, n\}$, for any $L\in \mathbb{N}$ and any hash functions.
    
    \item Foldability. Suppose the vertex colors corresponding to $(G,H,\omega)$ are $C_i^{l,V,R},C_j^{0,W,R}$. Then we say $(G,H,\omega)$ is foldable if there exist $1 \leq i,i' \leq m$ or $ 1 \leq j,j' \leq n$ such that
\[C^{l,V,R}_i = C^{l,V,R}_{i'}, ~~ i \neq i', \quad \mathrm{or}\quad  C^{l,W,R}_j = C^{l,W,R}_{j'},~~ j \neq j',\]
for all $l \in \mathbb{N}$ and any hash functions.
\end{itemize}

By merely replacing $h_i^V,h_j^W$ with $h_i^{V,R},h_j^{W,R}$, we define all the required preliminaries. According to the remarks following Theorem \ref{thm:uniapprox_scal}, Lemma \ref{lem:isoMILP}, and Theorem \ref{thm:uniapprox_vec}, we are able to directly apply these theorems and lemmas on the space $\calG_{m,n}\times\calH_m^{V,R}\times \calH_n^{W,R}$, where $\calH_m^{V,R}$ corresponds to $\calH_m^{V}$ and $ \calH_n^{W,R}$ corresponds to $\calH_n^{W}$.

\begin{proof}[Proof of Theorem~\ref{thm:GNNapproxPhi_feas_random}]
Define
    \begin{equation}\label{omega_F}
        \begin{split}
            \Omega_F = \big\{\omega \in \Omega : & \exists~i\neq i'\in \{1,2,\dots,m\},\text{ s.t. }\omega_i^V = \omega_{i'}^V,  \\ 
            \text{or } & \exists~j\neq j'\in \{1,2,\dots,n\},\text{ s.t. }\omega_j^W = \omega_{j'}^W\big\}.
        \end{split}
    \end{equation}
   Given any $(G,H)\in\calG_{m,n}\times\calH_m^V\times \calH_n^W$, as long as $\omega \not \in \Omega_F$, different vertices in $V$ or $W$ are equipped with different vertex features: 
   \begin{align*}
       h_i^{V,R} \neq h_{i'}^{V,R}, ~\forall i \neq i', \quad     h_j^{W,R} \neq   h_{j'}^{W,R},~ \forall j \neq j',
   \end{align*}
   and consequently, each of the vertices possesses an unique initial color:
    \begin{align*}
       C_i^{0,V,R} \neq C_{i'}^{0,V,R}, ~\forall i \neq i', \quad    C_j^{0,W,R} \neq   C_{j'}^{0,W,R},~ \forall j \neq j',
   \end{align*}
   for some $\text{HASH}_{0,V}$ and $\text{HASH}_{0,W}$. In another word, although $(G,H)$ may be foldable, $(G,H,\omega)$ must be unfoldable as long as $\omega \not \in \Omega_F$.

       Therefore, for any $(G,H,\omega),(\hat{G},\hat{H},\hat{\omega})\in \calG_{m,n}\times\calH_m^V\times \calH_n^W \times (\Omega\backslash\Omega_F)$, it holds that both $(G,H,\omega)$ and $(\hat{G},\hat{H},\hat{\omega})$ are unfoldable. Applying Lemma~\ref{lem:isoMILP}, 
we have $(G,H,\omega)\sim (\hat{G},\hat{H},\hat{\omega})$ if and only if $(\sigma_V,\sigma_W)\ast (G,H,\omega) = (\hat{G},\hat{H},\hat{\omega})$ for some $\sigma_V\in S_m$ and $\sigma_W\in S_n$. 
The invariance property of $\Phi_{\text{feas}}$ implies that
    \begin{equation*}
        \Phi_{\text{feas}}(G,H,\omega) = \Phi_{\text{feas}}(\hat{G},\hat{H},\hat{\omega}),\quad\forall~(G,H,\omega)\sim (\hat{G},\hat{H},\hat{\omega})\in \calG_{m,n}\times\calH_m^V\times \calH_n^W \times (\Omega\backslash\Omega_F).
    \end{equation*}

 In addition, the probability of $\omega \in \Omega_F$ is zero due to the uniform distribution of $\omega$ : $\bP(\Omega_F) = 0$. For any $\epsilon>0$, there exists a compact subset $\Omega_\epsilon \subset \Omega \backslash \Omega_F$ such that
    \begin{equation*}
        \bP(\Omega\backslash \Omega_\epsilon) = \bP\left((\Omega\backslash\Omega_F)\backslash \Omega_\epsilon\right) < \epsilon. 
    \end{equation*}
    Note that $\calD \times \Omega_\epsilon$ is also compact and that $\Phi_{\text{feas}}$ is continuous on $\calD\times \Omega_\epsilon$. The continuity of $\Phi_{\text{feas}}|_{\calD\times \Omega_\epsilon}$ follows from the condition that $\calD$ is a finite dataset and that the fact that any mapping with discrete finite domain is continuous (note that $\Phi_{\text{feas}}(G,H,\omega)$ is independent of $\omega\in\Omega_\epsilon$). Applying Theorem~\ref{thm:uniapprox_scal} for $\Phi_{\text{feas}}$ and 
    $\calD \times \Omega_\epsilon\subset \calG_{m,n}\times\calH_m^{V,R}\times\calH_n^{W,R}$, 
    one can conclude the existence of $F^R\in \calF_{\text{GNN}}^R$ with
    \begin{equation*}
        \sup_{(G,H,\omega)\in \calD\times \Omega_\epsilon} |F_R(G,H,\omega) - \Phi_{\text{feas}}(G,H,\omega)| <\frac{1}{2}.
    \end{equation*}
    Since $\Phi_{\text{feas}}$ is independent of the random features $\omega$, the above equation can be rewritten as
    \begin{equation*}
        \sup_{(G,H,\omega)\in \calD\times \Omega_\epsilon} |F_R(G,H,\omega) - \Phi_{\text{feas}}(G,H)| <\frac{1}{2}.
    \end{equation*}
    It thus holds for any $(G,H)\in\calD$ that
    \begin{equation*}
        \bP\left(\mathbb{I}_{F_R(G,H)>1/2}\neq \Phi_{\text{feas}}(G,H)\right) \leq \bP(\Omega\backslash \Omega_\epsilon) <\epsilon.
    \end{equation*}
\end{proof}

\begin{proof}[Proof of Theorem~\ref{thm:GNNapproxPhi_obj_random}]
The results can be obtained by applying similar techniques as in the proof of Theorem~\ref{thm:GNNapproxPhi_feas_random} for $\mathbb{I}_{\Phi_{\text{obj}}(\cdot,\cdot)\in \bR}$ and $\Phi_{\text{obj}}$. 
\end{proof}

\begin{proof}[Proof of Theorem~\ref{thm:GNNapproxPhi_solu_random}]
    Without loss of generality, we can assume that $\calD$ is closed under the action of $S_m\times S_n$; otherwise, we can use $\{(\sigma_V,\sigma_W)\ast (G,H): (G,H)\in \calD, \sigma_V\in S_m,\sigma_W\in S_n\}$ instead of $\calD$. Let $\Omega_F\subset\Omega$ be the set defined in \eqref{omega_F} which is clearly closed under the action of $S_m\times S_n$. There exists a compact $\Omega_\epsilon'\subset \Omega \backslash \Omega_F$ with $\bP(\Omega\backslash \Omega_\epsilon ') = \bP((\Omega \backslash \Omega_F)\backslash \Omega_\epsilon')<\epsilon / |S_m\times S_n|$. Define
    \begin{equation*}
        \Omega_\epsilon = \bigcap_{(\sigma_V,\sigma_W)\in S_m\times S_n} (\sigma_V,\sigma_W)\ast \Omega_\epsilon' \subset \Omega\backslash \Omega_F,
    \end{equation*}
    which is compact and closed under the action of $S_m\times S_n$. One has that
    \begin{align*}
        \bP(\Omega\backslash \Omega_\epsilon) & \leq \bP\left(\Omega\backslash \bigcap_{(\sigma_V,\sigma_W)\in S_m\times S_n} (\sigma_V,\sigma_W)\ast \Omega_\epsilon'\right) \\
        &\leq \sum_{(\sigma_V,\sigma_W)\in S_m\times S_n} \bP(\Omega \backslash (\sigma_V,\sigma_W)\ast \Omega_\epsilon') \\
        & = |S_m\times S_n| \cdot \bP(\Omega\backslash \Omega_\epsilon ') \\
        & < \epsilon. 
    \end{align*}
    In addition, $\calD\times \Omega_\epsilon$ is compact in $\calG_{m,n}\times(\calH^V\times [0,1])^m\times (\calH^W\times [0,1])^n$ and is closed under the action of $S_m\times S_n$.
    
    We then verify the three conditions in Theorem~\ref{thm:uniapprox_vec} for $\Phi_{\text{solu}}$ and $\calD\times \Omega_\epsilon$. Condition (i) holds automatically by the definition of the optimal solution mapping. For Condition (ii), given any $(G,H,\omega)$ and $(\hat{G},\hat{H},\hat{\omega})$ in $\calD\times \Omega_\epsilon$ with $(G,H,\omega) \ensuremath{\stackrel{W}{\sim}} (\hat{G},\hat{H},\hat{\omega})$, since graphs with vertex features in $\calD\times \Omega_\epsilon\subset \calG_{m,n}\times \calH_n^V\times \calH_n^W\times (\Omega\backslash\Omega_F)$ cannot be folded, similar to Lemma~\ref{lem:isoMILP}, one can conclude that $(\hat{G},\hat{H},\hat{\omega}) = (\sigma_V,\text{Id})\ast (G,H,\omega)$ for some $\sigma_V\in S_n$, which leads to $\Phi_{\text{solu}}(G,H) = \Phi_{\text{solu}}(\hat{G},\hat{H})$. Condition (iii) also follows from the fact that any graph in $\calD\times \Omega_\epsilon$ cannot be folded and WL test yields discrete coloring. Therefore, Theorem~\ref{thm:uniapprox_vec} applies and there exists $F_{W,R}\in\calF_{\text{GNN}}^{W,R}$ such that
    \begin{equation*}
        \sup_{(G,H,\omega)\in \calD\times \Omega_\epsilon} \|F_{W,R}(G,H,\omega) -  \Phi_{\text{solu}}(G,H)\| < \delta,
    \end{equation*}
    which implies that 
	\begin{equation*}
		\bP\left(\| F_{W,R}(G,H) -  \Phi_{\text{solu}}(G,H)\|>\delta\right)\leq\bR(\Omega\backslash\Omega_\epsilon)<\epsilon,\quad\forall~(G,H)\in\calD.
	\end{equation*}
\end{proof}

\begin{remark}
A more deeper observation is that, Theorem~\ref{thm:GNNapproxPhi_feas_random}, Theorem~\ref{thm:GNNapproxPhi_obj_random}, and Theorem~\ref{thm:GNNapproxPhi_obj} are actually true even if we only allow one message-passing layer in the GNN structure. This is because that the separation power of GNNs with one message-passing layer is the same as the WL test with one iteration (see \cite{LP_paper}*{Appendix C}), and that one iteration in WL test suffices to yield a discrete coloring for MILP-graphs in $\calG_{m,n}\times \calH_n^V\times \calH_n^W\times (\Omega\backslash\Omega_F)$.
\end{remark}

\section{The Optimal Solution Mapping $\Phi_{\text{solu}}$}
\label{sec:solumap}

In this section, we define the equivariant optimal solution mapping $\Phi_{\text{solu}}$, and prove the measurability. The definition consists of several steps.

\subsection{The sorting mapping}
\label{sec:sort}
We first define a sorting mapping:
\begin{equation*}
    \Phi_{\text{sort}} : \calG_{m,n}\times \calH_m^V\times \calH_n^W \backslash \calD_{\text{foldable}} \rightarrow S_n,
\end{equation*}
that returns a permutation on $\{w_1,w_2,\dots,w_n\}$, $\calD_{\text{foldable}}$ is the collection of all foldable MILP problem as in Definition~\ref{def:fold}. This can be done via an order refinement procedure, similar to the WL test. The initial order as well as the order refinement are defined in the following several definitions.

\begin{definition}\label{def:init_order}
We define total order on $\calH^V$ and $\calH^W$ using lexicographic order:
\begin{itemize}[leftmargin=0.35in]
    \item[(i)] For any $(b_i,\circ_i),(b_{i'},\circ_{i'})\in \calH^V = \bR\times \{\leq,=,\geq\}$, we say $(b_i,\circ_i) < (b_{i'},\circ_{i'})$ if one of the followings holds:
    \begin{itemize}
        \item $b_i < b_{i'}$.
        \item $b_i = b_{i'}$ and $\iota(\circ_i) < \iota(\circ_{i'})$, where $\iota(\leq) = -1$, $\iota(=) = 0$, and $\iota(\geq) = 1$.
    \end{itemize}
    
    \item[(ii)] For any $(c_j,l_j,u_j,\tau_j), (c_{j'},l_{j'},u_{j'},\tau_{j'})\in\calH^W = \bR\times (\bR\cup\{-\infty\})\times (\bR\cup\{+\infty\})\times \{0,1\}$, we say $(c_j,l_j,u_j,\tau_j) < (c_{j'},l_{j'},u_{j'},\tau_{j'})$ if one of the followings holds:
    \begin{itemize}
        \item $c_j < c_{j'}$.
        \item $c_j = c_{j'}$ and $l_j < l_{j'}$.
        \item $c_j = c_{j'}$, $l_j = l_{j'}$, and $u_j < u_{j'}$.
        \item $c_j = c_{j'}$, $l_j = l_{j'}$, $u_j = u_{j'}$, and $\tau_j < \tau_{j'}$.
    \end{itemize}
\end{itemize}
\end{definition}

\begin{definition}\label{def:multiset_order}
Let $X = \{\{x_1,x_2,\dots,x_k\}\}$ and $X' = \{\{x_1',x_2',\dots,x_{k'}'\}\}$ be two multisets whose elements are taken from the same totally-ordered set. Then we say $X\leq X'$ if $x_{\sigma(1)} x_{\sigma(2)} \cdots x_{\sigma(k)}\leq x_{\sigma'(1)}' x_{\sigma'(2)}' \cdots x_{\sigma'(k')}'$ in the sense of lexicographical order, where $\sigma\in S_k$ and $\sigma'\in S_{k'}$ are permutations such that $x_{\sigma(1)}\leq x_{\sigma(2)}\leq \cdots \leq x_{\sigma(k)}$ and $x_{\sigma'(1)}'\leq x_{\sigma'(2)}'\leq \cdots\leq x_{\sigma'(k')}'$.
\end{definition}

\begin{definition}\label{def:refine_order}
Suppose that $V = \{v_1,v_2,\dots,v_n\}$ and $W = \{w_1,w_2,\dots,w_n\}$ are already ordered, and that $E\in \bR^{m\times n}$. The order refinement is defined lexicographicaly:

\begin{itemize}[leftmargin=0.35in]
    \item[(i)] For any $i,i'\in\{1,2,\dots, m\}$, we say that 
    \begin{equation*}
        \left(v_i,\{\{(E_{i,j},w_j): E_{i,j}\neq 0\}\}\right) < \left(v_{i'},\{\{(E_{i',j},w_j): E_{i',j}\neq 0\}\}\right),
    \end{equation*}
    if one of the followings holds:
    \begin{itemize}
        \item $v_i < v_{i'}$.
        \item $v_i = v_{i'}$ and $\{(E_{i,j},w_j): E_{i,j}\neq 0\} < \{(E_{i',j},w_j): E_{i',j}\neq 0\}$ in the sense of Definition~\ref{def:multiset_order} where $(E_{i,j},w_j) < (E_{i',j'},w_{j'})$ if and only if $E_{i,j} < E_{i',j'}$ or $E_{i,j} = E_{i',j'},\ w_j < w_{j'}$.
    \end{itemize}
    
    \item[(ii)] For any $j,j'\in\{1,2,\dots, n\}$, we say that 
    \begin{equation*}
        \left(w_j,\{\{(E_{i,j},v_i): E_{i,j}\neq 0\}\}\right) < \left(w_{j'},\{\{(E_{i,j'},v_i): E_{i,j'}\neq 0\}\}\right),
    \end{equation*}
    if one of the followings holds:
    \begin{itemize}
        \item $w_j < w_{j'}$.
        \item $w_j = w_{j'}$ and $\{(E_{i,j},v_i): E_{i,j}\neq 0\} < \{(E_{i,j'},v_i): E_{i,j'}\neq 0\}$ in the sense of Definition~\ref{def:multiset_order} where $(E_{i,j},v_i) < (E_{i',j'},v_{i'})$ if and only if $E_{i,j} < E_{i',j'}$ or $E_{i,j} = E_{i',j'},\ v_i < v_{i'}$.
    \end{itemize}
\end{itemize}
\end{definition}

With the preparations above, we can now define $\Phi_{\text{sort}}$ in Algorithm~\ref{alg:sort}.

\begin{algorithm}[htb!]
	\caption{The sorting mapping $\Phi_{\text{sort}}$}\label{alg:sort}
	\begin{algorithmic}[1]
		\Require A weighted bipartite graph $G = (V\cup W, E) \in  \calG_{m,n}$, with vertex features $H = (h_1^V, h_2^V,\dots, h_{m}^V, h_1^W,h_2^W,\dots, h_{n}^W)\in \calH_m^V\times \calH_n^W $ such that $(G,H)\notin \calD_{\text{foldable}}$.
		\State Order $V$ and $W$ according to Definition~\ref{def:init_order}.
		\For{$l=1:m+n$}
		    \State Refine the ordering on $V$ and $W$ according to Definition~\ref{def:refine_order}.
		\EndFor
		\State Return $\sigma_W\in S_n$ such that $w_{\sigma_W(1)} < w_{\sigma_W(2)} < \dots < w_{\sigma_W(n)} $. 
	\end{algorithmic}
\end{algorithm}

\begin{remark}
The output of Algorithm~\ref{alg:sort} is well-defined and is unique. This is because that we use unfoldable $(G,H)$ as input. Note that the order refinement in Definition~\ref{def:refine_order} is more strict than the color refinement in WL test. Therefore, after $m+n$ iterations, there is no $j\neq j'\in\{1,2,\dots,n\}$ with $w_j = w_{j'}$, since the WL test returns discrete coloring if no collisions for $(G,H)\notin \calD_{\text{foldable}}$.
\end{remark}

The sorting mapping $\Phi_{\text{sort}}$ has some straightforward properties:
\begin{itemize}[leftmargin=*]
    \item $\Phi_{\text{sort}}$ is equivariant:
    \begin{equation*}
        \Phi_{\text{sort}}((\sigma_V,\sigma_W)\ast (G,H)) = (\sigma_V,\sigma_W)\ast \Phi_{\text{sort}}(G,H) = \sigma_W \circ \Phi_{\text{sort}}(G,H),
    \end{equation*}
    for any $\sigma_V\in S_m$, $\sigma_W\in S_n$, and $(G,H)\in \calG_{m,n}\times \calH_m^V\times \calH_n^W \backslash \calD_{\text{foldable}}$. This is due to the fact that Definition~\ref{def:refine_order} defines the total order and its refinement solely depending on the vertex features that are independent of the input order.
    
    \item $\Phi_{\text{sort}}$ is measurable, where the range $S_n$ is equipped with the discrete measure. This is because that $\Phi_{\text{sort}}$ is defined via finitely many comparisons.
\end{itemize}

\subsection{The optimal solution mapping}

We then define the optimal solution mapping
\begin{equation*}
    \Phi_{\text{solu}}: \calD_{\text{solu}}\backslash \calD_{\text{foldable}} \rightarrow \bR^n,
\end{equation*}
based on $\Phi_{\text{sort}}$, where 
\begin{equation*}
    \calD_{\text{solu}} = \left(\calG_{m,n}\times \calH^V_m \times \widetilde{\calH}^W_n \right) \cap \Phi_{\text{feas}}^{-1}(1),
\end{equation*}
with $\widetilde{\calH}^W_n = (\bR\times \bR\times \bR\times \{0,1\})^n \subset \calH^W_n$, is the collection of all feasible MILP problems in which every component in $l$ and $u$ is finite. 

We have mentioned before that any MILP problem in $\calD_{\text{solu}}$ admits at least one optimal solution. For any $(G,H)\in \calD_{\text{solu}} \backslash \calD_{\text{foldable}}$, we denote $X_{\text{solu}}(G,H)\in\bR^n$ as the set of optimal solutions to the MILP problem associated to $(G,H)$. One can see that $X_{\text{solu}}(G,H)$ is compact since for every $(G,H)\in \calD_{\text{solu}} \backslash \calD_{\text{foldable}}$, both $l$ and $u$ are finite, which leads to the boundedness (and hence the compactness) of $X_{\text{solu}}(G,H)$.

Given any permutation $\sigma\in S_n$, let us define a total order on $\bR^d$: $x \ensuremath{\stackrel{\sigma}{\prec}} x'$ if and only if $x_{\sigma(1)} x_{\sigma(2)}\cdots x_{\sigma(n)} < x_{\sigma(1)}' x_{\sigma(2)}'\cdots x_{\sigma(n)}'$ in the sense of lexicographic order. For any $(G,H)\in \calD_{\text{solu}} \backslash \calD_{\text{foldable}} \subset \calG_{m,n}\times \calH_m^V\times \calH_n^W \backslash \calD_{\text{foldable}}$, as in Section~\ref{sec:sort}, the permutation $\Phi_{\text{sort}}(G,H) \in S_n$ is well-defined. Then we define $\Phi_{\text{solu}}(G,H)$ is the smallest element in $X_{\text{solu}}(G,H)$ with respect to the order $\ensuremath{\stackrel{\sigma}{\prec}}$, where $\sigma = \Phi_{\text{sort}}(G,H)$. The existence and the uniqueness are true since $X_{\text{solu}}(G,H)$ is compact. More explicitly, the components of $\Phi_{\text{solu}}(G,H)$ can be determined recursively:
\begin{equation*}
    \Phi_{\text{solu}}(G,H)_{\sigma(1)} = \inf\left\{x_{\sigma(1)} : x\in X_{\text{solu}}(G,H)\right\},
\end{equation*}
and
\begin{equation*}
    \Phi_{\text{solu}}(G,H)_{\sigma(j)} = \inf\left\{x_{\sigma(j)} : x\in X_{\text{solu}}(G,H),\ x_{\sigma(j')} = \Phi_{\text{solu}}(G,H)_{\sigma(j)},\ j'=1,2,\dots,j\right\},
\end{equation*}
for $j = 2,3,\dots,n$. It follows from the equivariance of $\Phi_{\text{sort}}(G,H)$ and $X_{\text{solu}}(G,H)$ that $\Phi_{\text{solu}}(G,H)$ is also equivariant, i.e.,
\begin{equation*}
    \Phi_{\text{solu}}((\sigma_V,\sigma_W)\ast(G,H)) = \sigma_W(\Phi_{\text{solu}}(G,H)),\quad \forall~\sigma_V\in S_m,\sigma_W\in S_n,(G,H)\in \calD_{\text{solu}} \backslash \calD_{\text{foldable}}.
\end{equation*}
We then show the measurability of $\Phi_{\text{solu}}$.

\begin{lemma}\label{lem:solu_meas}
The optimal solution mapping $\Phi_{\text{solu}}: \calD_{\text{solu}}\backslash \calD_{\text{foldable}} \rightarrow \bR^n$ is measurable.
\end{lemma}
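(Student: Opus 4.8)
The plan is to exploit the explicit recursive construction of $\Phi_{\text{solu}}$ and reduce the measurability of the vector-valued map to that of its individual coordinate functions, each of which I will express as the optimal value of an augmented MILP. First I would invoke the fact, established in Section~\ref{sec:sort}, that $\Phi_{\text{sort}} : \calD_{\text{solu}}\backslash\calD_{\text{foldable}} \to S_n$ is measurable with values in the \emph{finite} set $S_n$. This partitions the domain into the finitely many measurable pieces $D_\sigma := \Phi_{\text{sort}}^{-1}(\sigma)$, $\sigma\in S_n$, and since a map is measurable iff its restriction to each block of a finite measurable partition is measurable, it suffices to work on a single $D_\sigma$. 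On $D_\sigma$ the sorting permutation is the constant $\sigma$, so after relabeling variable indices by $\sigma$ (a fixed coordinate permutation, hence measurability-preserving) I may assume $\sigma=\mathrm{Id}$ and that $\Phi_{\text{solu}}$ is the lexicographically smallest optimal solution in the natural order.

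Next I would prove measurability of the coordinate functions $a_k(G,H):=\Phi_{\text{solu}}(G,H)_k$ by induction on $k$. The key simplification is that every point of $X_{\text{solu}}(G,H)$ satisfies $x_j\ge a_j(G,H)$ by the very definition of $a_j$ as an infimum, so each equality constraint $x_j=a_j(G,H)$ in the recursion may be replaced by the inequality $x_j\le a_j(G,H)$ without changing the constraint set. Combined with $X_{\text{solu}}(G,H)=\{x\text{ feasible}:c^\top x\le \Phi_{\text{obj}}(G,H)\}$ (valid because the optimum is attained), this yields
\begin{equation*}
    a_k(G,H)=\inf\big\{\,x_k : x\text{ is feasible for }(G,H),\ c^\top x\le \Phi_{\text{obj}}(G,H),\ x_j\le a_j(G,H)\ \forall\,j<k\,\big\}.
\end{equation*}
Since every component of $l$ and $u$ is finite on $\calD_{\text{solu}}$, the set $X_{\text{solu}}(G,H)$, and hence the constraint set defining $a_k$, is compact, so this infimum is \emph{attained}. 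Attainment lets me write the sublevel set $\{(G,H):a_k(G,H)\le t\}$ as exactly the set of $(G,H)$ for which there exists a feasible mixed-integer point $x$ with $c^\top x\le \Phi_{\text{obj}}(G,H)$, $x_j\le a_j(G,H)$ for $j<k$, and $x_k\le t$.

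Finally I would recognize this existence condition as the feasibility indicator of an augmented MILP. Introducing a parameter vector $\beta=(\beta_0,\beta_1,\dots,\beta_{k-1})\in\bR^k$, consider the MILP on the same $n$ variables obtained from $(G,H)$ by appending the inequalities $c^\top x\le\beta_0$, $x_j\le\beta_j$ for $1\le j<k$, and $x_k\le t$. Its data depend measurably on $(G,H,\beta,t)$ — indeed the dependence on the appended right-hand sides is continuous — so repeating verbatim the $\mathbb{Q}^{k}\times\bZ^{n-k}$ density argument from the proof of Lemma~\ref{lem:feas_meas} (now applied to a system with $m+k+1$ constraints) shows that the augmented feasibility indicator $\Psi(G,H,\beta,t)$ is jointly measurable. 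By the inductive hypothesis and Lemma~\ref{lem:obj_meas}, the map $(G,H)\mapsto\big(G,H,\Phi_{\text{obj}}(G,H),a_1(G,H),\dots,a_{k-1}(G,H)\big)$ is measurable, and composing it with $\Psi(\cdot,\cdot,t)$ shows that $\{a_k\le t\}$ is measurable for every $t$; hence $a_k$ is measurable, closing the induction. The main obstacle is precisely that the equality constraints carry \emph{measurable, non-constant} right-hand sides and that the optimum may be attained only at a single point; the resolution is the two-part device above — relaxing equalities to inequalities via minimality, then phrasing each sublevel set as a composition of a jointly measurable MILP-feasibility indicator with the measurable parameter map — which avoids any appeal to measurable-projection or analytic-set machinery.
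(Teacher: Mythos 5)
Your proof is correct and follows essentially the same route as the paper's: partition the domain using the measurability of $\Phi_{\text{sort}}$, induct on the coordinates of the solution, and express each sublevel set through countably many unions and intersections, over points of $\mathbb{Q}^{k}\times\bZ^{n-k}$, of measurable sets defined by small constraint violation, with Lemma~\ref{lem:obj_meas} and the inductive hypothesis supplying the measurable data. The differences are only organizational: the paper keeps the equalities $x_{\sigma(j')}=\Phi_{j'}$ as absolute-value penalty terms inside its violation function $V_{\text{solu}}^{j}$ and characterizes the strict sublevel sets $\{\Phi_j<\phi\}$ via the auxiliary parameter $r$, whereas you relax those equalities to inequalities by minimality, use attainment of the infimum to characterize $\{a_k\le t\}$ directly, and factor the density argument through a jointly measurable parametric feasibility indicator composed with the measurable parameter map.
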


\begin{proof}
It suffices to show that for any fixed $\circ\in\{\leq, =, \geq\}^m$, $\tau\in \{0,1\}^n$, and $\sigma\in S_n$, the mapping
\begin{align*}
    \Phi_j : \iota^{-1}(\calD_{\text{solu}}\backslash\calD_{\text{foldable}})\cap (\Phi_{\text{sort}}\circ \iota)^{-1}(\sigma) & \rightarrow\qquad\qquad\  \bR, \\
    (A,b,c,l,u)\qquad\qquad &\mapsto \Phi_{\text{solu}}(\iota(A,b,c,l,u))_{\sigma(j)},
\end{align*}
is measurable for all $j \in \{1,2,\dots,n\}$, where
\begin{equation*}
    \iota: \bR^{m\times n}\times \bR^m\times \bR^n \times \bR^n\times \bR^n \rightarrow \calG_{m,n}\times \calH_m^V\times \calH_n^W,
\end{equation*}
is the embedding map when $\circ$ and $\tau$ are fixed. Without loss of generality, we assume that $\circ = \{\leq,\dots,\leq,=,\dots,=,\geq,\dots,\geq\}$ where $\leq$, $=$, and $\geq$ appear for $k_1$, $k_2-k_1$, and $m-k_2$ times, respectively, and that $\tau = (0,\dots,0,1,\dots, 1)$ where $0$ and $1$ appear for $k$ and $n-k$ times, respectively. Note that the domain of $\Phi_j$, i.e., $\iota^{-1}(\calD_{\text{solu}}\backslash\calD_{\text{foldable}})\cap (\Phi_{\text{sort}}\circ \iota)^{-1}(\sigma)$ is measurable due to the measurability of $\Phi_{\text{sort}}$.

Define
\begin{multline*}
    V_\text{feas}(A,b,c,l,u,x) = \max\left\{\max_{1\leq i\leq k_1} \left(\sum_{j'=1}^n A_{i,j'}x_{j'} - b_i\right)_+, \max_{k_1< i\leq k_2} \left|\sum_{j'=1}^n A_{i,j'}x_{j'} - b_i\right|, \right.\\
    \left.\max_{k_2< i\leq m} \left( b_i-\sum_{j'=1}^n A_{i,j'}x_{j'}\right)_+,\ \max_{1\leq j'\leq n} (l_{j'} - x_{j'})_+,\ \max_{1\leq j'\leq n} (x_{j'} - u_{j'})_+\right\},
\end{multline*}
and
\begin{equation*}
    V_\text{solu}(A,b,c,l,u,x)= \max\left\{ \left(c^\top x - \Phi_{\text{obj}}(\iota(A,b,c,l,u))\right)_+,V_\text{feas}(A,b,c,l,u,x)\right\},
\end{equation*}
for $(A,b,c,l,u)\in \bR^{m\times n}\times \bR^m\times \bR^n \times \bR^n\times \bR^n$ and $x\in \bR^k\times\bZ^{n-k}$. It is clear that $V_{\text{feas}}$ is continuous and that $x\in \bR^k\times\bZ^{n-k}$ is an optimal solution to the problem $\iota(A,b,c,l,u)$ if and only if $V_\text{solu}(A,b,c,l,u,x)= 0$. In addition, $V_{\text{solu}}$ is measurable with respect to $(A,b,c,l,u)$ for any $x$, by the measurability of $\Phi_{\text{obj}}$ (see Lemma~\ref{lem:obj_meas}) and continuity of $V_{\text{feas}}$, and is continuous with respect to $x$.

Then we proceed to prove that $\Phi_j$ is measurable by induction. We first consider the case that $j = 1$. For any $(A,b,c,l,u)\in \iota^{-1}(\calD_{\text{solu}}\backslash\calD_{\text{foldable}})\cap (\Phi_{\text{sort}}\circ \iota)^{-1}(\sigma)$ and any $\phi\in\bR$, the followings are equivalent:
\begin{itemize}[leftmargin=*]
    \item $\Phi_{\text{solu}}(\iota(A,b,c,l,u))_{\sigma(1)} <\phi$.
    \item $\inf \left\{ x_{\sigma(1)} : x\in X_{\text{solu}}(\iota(A,b,c,l,u))\right\} < \phi$.
    \item There exist $r\in\mathbb{N}_+$ and $x\in X_{\text{solu}}(\iota(A,b,c,l,u))$, such that $x_{\sigma(1)} \leq \phi - 1/r$.
    \item There exists $r\in\mathbb{N}_+$, for any $r'\in\mathbb{N}_+$, $x_{\sigma(1)} \leq \phi - 1/r$ holds for some $x\in\mathbb{Q}^k\times\bZ^{n-k}$ satisfying $V_{\text{solu}}(A,b,c,l,u,x) < 1/r'$.
\end{itemize}
Therefore, it holds that
\begin{multline*}
    \Phi_1^{-1}(-\infty,\phi) =  \iota^{-1}(\calD_{\text{solu}}\backslash\calD_{\text{foldable}})\cap (\Phi_{\text{sort}}\circ \iota)^{-1}(\sigma) \cap \bigcup_{r\in\mathbb{N}_+} \bigcap_{r'\in\mathbb{N}_+} \bigcup_{x\in\mathbb{Q}^k \times \bZ^{n-k},\ x_{\sigma(1)\leq \phi - 1/r}} \\
    \left\{(A,b,c,l,u)\in \bR^{m\times n}\times \bR^m\times \bR^n \times \bR^n\times \bR^n : V_{\text{solu}}(A,b,c,l,u,x) < \frac{1}{r'}\right\},
\end{multline*}
is measurable, which implies the measurability of $\Phi_1$

Then we assume that $\Phi_1,\dots, \Phi_{j-1}$ ($j\geq 2$) are all measurable and show that $\Phi_j$ is also measurable. Define
\begin{equation*}
    V_\text{solu}^j (A,b,c,l,u,x)= \max\left\{ V_\text{solu}(A,b,c,l,u,x),\ \max_{1\leq j' <j} \left|x_{\sigma(j')} - \Phi_{j'}(A,b,c,l,u)\right|\right\},
\end{equation*}
for $(A,b,c,l,u)\in \iota^{-1}(\calD_{\text{solu}}\backslash\calD_{\text{foldable}})\cap (\Phi_{\text{sort}}\circ \iota)^{-1}(\sigma)$ and $x\in\bR^k\times\bZ^{n-k}$. Similar to $V_{\text{solu}}$, $V_{\text{solu}}^j$ is also measurable respect to $(A,b,c,l,u)$ for any $x$ and is continuous with respect to $x$. For any $(A,b,c,l,u)\in \iota^{-1}(\calD_{\text{solu}}\backslash\calD_{\text{foldable}})\cap (\Phi_{\text{sort}}\circ \iota)^{-1}(\sigma)$, the followings are equivalent:
\begin{itemize}[leftmargin=*]
    \item $\Phi_{\text{solu}}(\iota(A,b,c,l,u))_{\sigma(j)} <\phi$.
    \item $\inf \left\{ x_{\sigma(j)} : x\in X_{\text{solu}}(\iota(A,b,c,l,u)),\ x_{\sigma(j')} = \Phi_{j'}(A,b,c,l,u),\ j'=1,2,\dots,j-1\right\} < \phi$.
    \item There exist $r\in\mathbb{N}_+$ and $x\in\bR^k\times\bZ^{n-k}$, such that $V_\text{solu}^j (A,b,c,l,u,x) = 0$ and that $x_{\sigma(j)} \leq \phi - 1/r$.
    \item There exists $r\in\mathbb{N}_+$, for any $r'\in\mathbb{N}_+$, $x_{\sigma(j)} \leq \phi - 1/r$ holds for some $x\in\mathbb{Q}^k\times\bZ^{n-k}$ satisfying $V_{\text{solu}}^j(A,b,c,l,u,x) < 1/r'$.
\end{itemize}
Therefore, $\Phi_j^{-1}(-\infty,\phi)$ can be expressed in similar format as 
$\Phi_1^{-1}(-\infty,\phi)$, and is hence measurable.
\end{proof}

\section{Details of the Numerical Experiments and Extra Experiments}
\label{sec:detail-exp}

\paragraph{\textbf{MILP instance generation}} Each instance in $\calD_1$ has 20 variables, 6 constraints and is generated with:
\begin{itemize}[leftmargin=*]
    \item For each variable, $c_j\sim\mathcal{N}(0,0.01)$, $l_j,u_j\sim\mathcal{N}(0,10)$. If $l_j>u_j$, then switch $l_j$ and $u_j$. The probability that $x_j$ is an integer variable is $0.5$.
    \item For each constraint, $\circ_i\sim\mathcal{U}(\{\leq, =,\geq\})$ and $b_i\sim \mathcal{N}(0,1)$.
    \item  $A$ has 60 nonzero elements with each nonzero element distributing as $\mathcal{N}(0,1)$.
\end{itemize}
Each instance in $\calD_2$ has 20 variables, 6 equality constraints, and we construct the $(2k-1)$-th and $2k$-th problems via following approach ($1 \leq k \leq 500$)
\begin{itemize}[leftmargin=*]
    \item Sample $J = \{j_1,j_2,\dots,j_6\}$ as a random subset of $\{1,2,\dots,20\}$ with $6$ elements. For $j\in J$, $x_j\in\{0,1\}$. For $j\notin J$, $x_j$ is a continuous variable with bounds $l_j,u_j\sim\mathcal{N}(0,10)$. If $l_j>u_j$, then switch $l_j$ and $u_j$.
    \item $c_1=\cdots = c_{20} = 0$.
    \item The constraints for the $(2k-1)$-th problem (feasible) is $x_{j_1} + x_{j_2} = 1$, $x_{j_2} + x_{j_3} = 1$, $x_{j_3} + x_{j_4} = 1$, $x_{j_4} + x_{j_5} = 1$, $x_{j_5} + x_{j_6} = 1$, $x_{j_6} + x_{j_1} = 1$.
    \item The constraints for the $2k$-th problem (infeasible) is $x_{j_1} + x_{j_2} = 1$, $x_{j_2} + x_{j_3} = 1$, $x_{j_3} + x_{j_1} = 1$, $x_{j_4} + x_{j_5} = 1$, $x_{j_5} + x_{j_6} = 1$, $x_{j_6} + x_{j_4} = 1$.
\end{itemize}

\paragraph{\textbf{MLP architectures}} As we mentioned in the main text, all the learnable functions in GNN are taken as MLPs. All the learnable functions $f_{\mathrm{in}}^V,f_{\mathrm{in}}^W, f_\text{out}, f^W_{\text{out}}, \{f_l^V,f_l^W,g_l^V,g_l^W\}_{l=0}^{L}$ are parameterized with multilayer perceptrons (MLPs) and have two hidden layers. The embedding size $d_0,\cdots,d_L$ are uniformly taken as $d$ chosen from $\{2,4,8,16,32,64,128,256,512,1024,2048\}$. All the activation functions are ReLU.

\paragraph{\textbf{Training settings}} We use Adam~\cite{kingma2014adam} as our training optimizer with learning rate of $0.0001$. The loss function is taken as mean squared error. All the experiments are conducted on a Linux server with an Intel Xeon Platinum 8163 GPU and eight NVIDIA Tesla V100 GPUs.

\paragraph{\textbf{Extra experiments on generalization}}  We conduct some numerical experiments on generalization for the random-feature approach. The GNN/MLP architecture is the same as what we describe before with the embedding size being $d=8$. The size of the training set is chosen from $\{10,100,1000\}$ and the size of the testing set is $1000$, where all instances are generated in the same way as $\calD_2$, with the only difference that we set $c_1=\cdots = c_{20} = 0.01$ instead of $c_1=\cdots = c_{20} = 0$. The reason is that, $c_1=\cdots = c_{20} = 0$ makes every feasible solution as optimal and thus the labels depend on the solver's choice. By setting $c_1=\cdots = c_{20} = 0.01$, one can make the label depend only on the MILP problem itself, which fits our purpose, i.e., representing characteristics of MILP problems, better. In addition, our datasets for feasibility consists of 50\% feasible instances and 50\% infeasible ones, while other datasets are obtained by removing infeasible samples.

We report the error on training set and the testing set in Table~\ref{tab:gen_feas}, \ref{tab:gen_obj}, \ref{tab:gen_solu}. 
The generalization performance is good in our setting when the training size is relatively large. Although our experiments are still small-scale, these results indicate the potential of the random-feature approach in solving MILP problems in practice. 

\begin{table}[htb!]
    \centering
    \begin{tabular}{c|c|c|c}
         Number of Training Samples & 10 & 100 & 1000 \\
         \hline
         Training Error & 0 & 0 & 0.02\\
         \hline
         Testing Error & 0.289 & 0.104 & 0.022
    \end{tabular}
    \caption{Generalization for feasibility}
    \label{tab:gen_feas}
\end{table}

\begin{table}[htb!]
    \centering
    \begin{tabular}{c|c|c|c}
         Number of Training Samples & 10 & 100 & 1000 \\
         \hline
         Training Error & 1.2e-15 & 1.1e-06 & 1.1e-06 \\
         \hline
         Testing Error & 5.9e-03 & 5.4e-06 & 1.1e-06
    \end{tabular}
    \caption{Generalization for optimal objective value}
    \label{tab:gen_obj}
\end{table}

\begin{table}[htb!]
    \centering
    \begin{tabular}{c|c|c|c}
        Number of Training Samples & 10 & 100 & 1000 \\
         \hline
         Training Error & 1.7e-03 & 7.2e-02 & 7.2e-02 \\
         \hline
         Testing Error & 2.1e-01 & 7.5e-02 & 7.3e-02
    \end{tabular}
    \caption{Generalization for optimal solution}
    \label{tab:gen_solu}
\end{table}


\end{document}